\pgfplotsset{
    compat=1.5
} 
\newlength\myheight
\newlength\mydepth
\settototalheight\myheight{Xygp}
\theoremstyle{plain}
\newtheorem{theorem}{Theorem}[section]
\newtheorem{proposition}[theorem]{Proposition}
\newtheorem{lemma}[theorem]{Lemma}
\theoremstyle{definition}
\theoremstyle{remark}
\newcommand{\ouralgo}{\textsc{PDML}\xspace}
\newcommand{\ouralgofull}{Policy-adapted Dynamics Model Learning\xspace}
\newcommand{\fh}[1]{{\textcolor{red}{#1}}}
\definecolor{mypink}{RGB}{216,0,115}
\definecolor{myblue}{RGB}{27,161,226}
\icmltitlerunning{\hfill Live in the Moment: Learning Dynamics Model Adapted to Evolving Policy\hfill}
\begin{document}

\twocolumn[
\icmltitle{Live in the Moment: Learning Dynamics Model Adapted to Evolving Policy}

\begin{icmlauthorlist}
\icmlauthor{Xiyao Wang}{1}
\icmlauthor{Wichayaporn Wongkamjan}{1}
\icmlauthor{Ruonan Jia}{2}
\icmlauthor{Furong Huang}{1}
\end{icmlauthorlist}

\icmlaffiliation{1}{Department of Computer Science, University of Maryland, College Park, MD 20742, USA}
\icmlaffiliation{2}{Tsinghua University}

\icmlcorrespondingauthor{Xiyao Wang}{xywang@umd.edu}

\icmlkeywords{Machine Learning, ICML}

\vskip 0.3in
]



\printAffiliationsAndNotice{}  

\begin{abstract}
Model-based reinforcement learning (RL) often achieves higher sample efficiency in practice than model-free RL by learning a dynamics model to generate samples for policy learning. Previous works learn a dynamics model that fits under the empirical state-action visitation distribution for all historical policies, i.e., the sample replay buffer. However, in this paper, we observe that fitting the dynamics model under the distribution for \emph{all historical policies} does not necessarily benefit model prediction for the \emph{current policy} since the policy in use is constantly evolving over time. The evolving policy during training will cause state-action visitation distribution shifts. We theoretically analyze how this distribution shift over historical policies affects the model learning and model rollouts. We then propose a novel dynamics model learning method, named \textit{Policy-adapted Dynamics Model Learning (PDML)}. PDML dynamically adjusts the historical policy mixture distribution to ensure the learned model can continually adapt to the state-action visitation distribution of the evolving policy. Experiments on a range of continuous control environments in MuJoCo show that PDML achieves significant improvement in sample efficiency and higher asymptotic performance combined with the state-of-the-art model-based RL methods. Our code is released at \href{https://github.com/si0wang/PDML}{https://github.com/si0wang/PDML}.
\end{abstract}

\section{Introduction}
\label{sec_intro}
Recent years have witnessed great successes of Reinforcement Learning (RL) in many complex decision-making tasks, such as robotics \citep{{polydoros2017survey}, {yang2022learning}} and  chess games \citep{{silver2016mastering},{schrittwieser2020mastering}}.
Among RL methods, a wide range of works in model-free RL \citep{{schulman2015trust},{lillicrap2015continuous},{haarnoja2018soft},{fujimoto2018addressing},{hu2021generalizable}} have shown promising performance.
However, model-free methods can be impractical for real-world scenarios \citep{dulac2021challenges}  since massive samples from the real environment are required for policy training, resulting in low sample efficiency.

Model-based RL is considered one of the solutions to improve sample efficiency. 
Most of the model-based RL algorithms first use supervised learning techniques to learn a dynamics model based on the samples obtained from the real environment,  and then use this learned dynamics model to generate massive samples to derive a policy \citep{{luo2018algorithmic},{janner2019trust}}.
Therefore, it is crucial to learn a dynamics model which can accurately simulate the underlying transition dynamics of the real environment since the policy is trained based on the model-generated samples.
If the learned dynamics has a high prediction error, the model-generated samples will be biased, and the policy induced by these samples will be sub-optimal.
To reduce the model prediction error and learn an accurate dynamics model,
some advanced architectures such as model ensemble \citep{{kurutach2018model},{chua2018deep}} and multi-step model \citep{asadi2019combating} have been proposed to improve the multi-step prediction accuracy of the learned dynamics model.
Besides, the idea of a generative adversarial network (GAN) \citep{goodfellow2014generative} is used to design the training process of a dynamics model \citep{{shen2020model}, {eysenbach2021mismatched}} to reduce the distribution mismatch between model-generated samples and real samples.
Those previous works mentioned above aim to learn a dynamics model that can fit all historical policies. 
To be precise, when training the dynamics model, they randomly select the training data from the real samples obtained by all historical policies in the replay buffer.
This learned dynamics model needs to adapt to the state-action visitation distribution of all historical policies to obtain a dynamics model that predicts transitions accurately under different policies.

However, since we only use the current newest policy to interact with the learned model to generate samples for policy learning during model rollouts, 
learning such a dynamics model that fits under (highly likely sub-optimal) historical policies may be unnecessary.
Due to the state-action visitation distribution shift during policy updating, the state-action pairs visited by historical policies may not appear in the state-action visitation distribution of the current policy, and vice versa. 
Thus, learning these samples may not benefit model rollouts.
Besides, in many complex tasks, it is hard to predict all samples from all historical policies due to limited model capacity \citep{abbas2020selective},
and as shown later in our paper,  trying to learn every sample from historical policies can even hurt the accuracy when predicting the transitions induced by the current policy.
Therefore, there is an objective mismatch between model learning and model rollouts --- 
model learning tries to fit samples from state-action visitation distribution for all historical policies, whereas model rollouts require accurate prediction of the transitions induced by the current policy.


In this paper, we investigate how to learn an accurate dynamics model for model rollouts based on existing samples.
\textbf{(a)} To begin with, we confirm through experiments that although the dynamics model learned by the previous methods has a low overall prediction error on all transitions obtained by historical policies, its prediction error for the current newest policy can still be very high.
This leads to inaccurate model-generated samples which can hurt the sample efficiency and asymptotic performance of the policy. 
\textbf{(b)} We then derive an upper bound of the expected performance gap between the model rollouts and real environment rollouts.
According to this upper bound, we analyze how the distribution of historical policies affects model learning and model rollouts.
The theoretical result suggests that the historical policy distribution used for model learning should be more inclined towards policies that are closer to the current policy rather than a uniform distribution over all historical policies to ensure the model prediction accuracy for model rollouts. 
\textbf{(c)} Motivated by this insight, we propose a novel dynamics model learning method named \emph{Policy-adapted Dynamics Model Learning (PDML)}.
Instead of learning a dynamics model that fits under a uniform mixture of all historical policies, PDML adjusts the historical policy distribution by reducing the total variation distance between the historical policy mixture and the current policy, then learns a policy-adapted dynamics model according to this adjusted historical policy distribution.
\textbf{(d)} We conduct systematic and extensive experiments on a range of continuous control benchmark MuJoCo environments~\citep{todorov2012mujoco}.
Experimental results show that PDML significantly improves the sample efficiency and asymptotic performance of the state-of-the-art model-based RL methods.

\textbf{Summary of contributions:}
\textbf{(1)} Through detailed experimental results, we establish that learning a dynamics model that fits a uniform mixture of all historical policies may not be accurate enough for model rollouts.
\textbf{(2)} We propose an upper bound of an expected performance gap between the model rollouts and the real environment rollouts, and theoretically analyze how the distribution over historical policies affects model learning and model rollouts.
\textbf{(3)} We propose \emph{Policy-adapted Dynamics Model Learning (PDML)}, which dynamically adjusts the distribution over the historical policy sequence and allows the learned model to continuously adapt to the evolving policy.
\textbf{(4)} Experimental results on a range of MuJoCo environments demonstrate that PDML can achieve significant improvement in sample efficiency and higher asymptotic performance combined with the state-of-the-art model-based RL methods.
\section{Background}\label{sec:bac}
\subsection{Preliminaries}
\textbf{Reinforcement learning.} \quad
Consider a Markov Decision Process (MDP) defined by the tuple $(\mathcal{S}, \mathcal{A}, T, r, \gamma)$, 
where $\mathcal{S}$ is the state space, $\mathcal{A}$ is the action space, and
$T(s^{\prime}|s,a)$ is the transition dynamics in the real world.
The reward function is denoted as $r(s,a)$ and $\gamma$ is the discount factor.
Reinforcement learning aims to find an optimal policy $\pi$ which can maximize the expected sum of discounted rewards

\vspace{-20pt}
\begin{equation}
\label{eq_BRLgoal}
\pi = \mathop{\arg\!\max}_{\pi}\mathbb{E}_{{s_{t}\sim T(\cdot|s_{t-1},a_{t-1})} \atop{a_t \sim \pi(a|s_t)}}\left[\sum_{t=0}^{\infty}\gamma^t r(s_t, a_t)\right].
\end{equation}
\vspace{-15pt}

In model-based RL, the transition dynamics $T$ in the real world is unknown, and we aim to construct a model $\hat{T}(s^{\prime}|s,a)$ of transition dynamics and use it to improve the policy. 
In this paper, we concentrate on the Dyna-style \citep{sutton1990integrated} model-based RL, which uses the learned dynamics model to generate samples and train the policy. 

\textbf{Policy mixture.} \quad 
During policy learning, we consider the historical policies at iteration step $k$ as a historical policy sequence $\Pi^k = \{\pi_1, \pi_2, ..., \pi_k\}$.
For each policy in the policy sequence, we denote its state-action visitation distribution as $\rho^{\pi_i}(s,a)$, and the policy mixture distribution over the policy sequence as $\bm{w}^k=[w_1^k\ldots,w_k^k]$.
Then the state-action visitation distribution of the policy mixture $\pi_{\text{mix}, k} = (\Pi^k, \bm{w}^k)$ is $\rho^{\pi_{\text{mix}, k}}(s,a) = \sum^{k}_{i=1} w^k_i \rho^{\pi_i}(s,a)$ \citep{{hazan2019provably}, {zhang2021made}}.

\subsection{Dynamics Model Learning in Model-based RL} \label{subsec_bac_glo}
Learning a dynamics model is the most crucial part of model-based RL since the ground-truth transition dynamics is unknown and the policy must be updated based on the samples generated by the learned dynamics model.
Previous works learn the dynamics model by randomly selecting training data from the samples obtained by the historical policy sequence $\Pi^k$, which means the distribution of policy mixture is a random distribution: $w^k_i = \frac{1}{k}$. 
The learned dynamics model is trained based on the following state-action visitation distribution

\vspace{-20pt}
\begin{equation}
\label{global_density}
\rho^{\pi_{\text{mix}, k}}(s,a) = \sum^{k}_{i=1} \frac{1}{k} \rho^{\pi_i}(s,a).
\end{equation}

\begin{figure}[!htbp]
\centering
\subfigure[HalfCheetah]{
\includegraphics[width=0.2\textwidth]{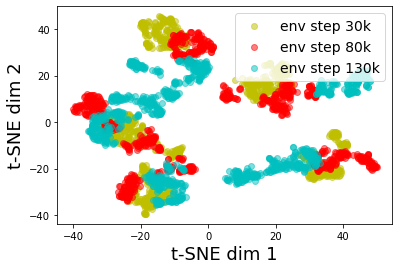}
\label{dis_c}
}
\hfil
\subfigure[Hopper]{
\includegraphics[width=0.2\textwidth]{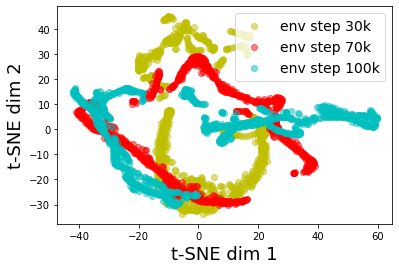}
\label{dis_d}
}
\hfil
\subfigure[HalfCheetah]{
\resizebox{0.215\textwidth}{!}{\input{exp_results/PMAC_half_local_error}}
\label{dis_a}}
\hfil
\subfigure[Hopper]{
\resizebox{0.235\textwidth}{!}{\input{exp_results/PMAC_hopper_local_error}}
\label{dis_b}}
\vspace{-10pt}
\caption{
\textbf{(a)} and \textbf{(b)}: visualization of the state-action visitation distribution of different historical policies and the current policy using t-SNE. Env step 130k and env step 100k are the current policy. More details are shown in Appendix~\ref{app: more distribution shift}.
\textbf{(c)} and \textbf{(d)}: the overall error curves and current error curves of MBPO on HalfCheetah and Hopper, respectively.
}
\vspace{-20pt}
\label{Fig_delta_distribution}
\end{figure}

This model tries to fit all the samples obtained by sampling the state-action visitation distribution corresponding to all policies in the historical policy sequence,
so the learned dynamics model is (hopefully) able to predict the transition for any state-action input.

However, as shown in Figure~\ref{dis_c} and \ref{dis_d}, since the policy is constantly evolving, the state-action visitation distribution of historical policies may have a huge shift from the current policy.
There is little overlap between the state-action visitation distribution of policies at different environment steps.
The state-action pairs visited by historical policies may not appear in the state-action visitation distribution of the current policy. 
During model rollouts, we only use the current policy to interact with the learned dynamics model to generate samples.
Thus, learning these samples may not benefit model rollouts.
When the model capacity is not large enough, learning these samples may even be detrimental to the learning of the samples collected by the current policy.

We conduct an experiment using a state-of-the-art model-based RL method called MBPO \citep{janner2019trust} on four MuJoCo \citep{todorov2012mujoco} environments HalfCheetah, Hopper, Walker2d, and Ant.
MBPO first trains a model based on the real samples and then uses the model to roll out multiple samples for policy learning.
The architecture of the dynamics model is a 4-layer neural network with a hidden size of 200, which is a very common architecture used in many recent model-based methods \citep{{yao2021sample}, {froehlich2022onpolicy}, {li2022gradient}}.
We present the overall error curves and the current error curves during learning steps on HalfCheetah and Hopper in Figure~\ref{dis_a} and \ref{dis_b}. 
Here the overall error means the model prediction error for all historical policies during training.
It is evaluated on an evaluation dataset which contains 1000 $\times N$ samples from the real environment.
$N$ is the number of historical policies in the historical policy sequence.
The current error is the model prediction error for the current policy, which is evaluated using L2 error on the 1000 samples obtained by the current policy from the real environment.
The error curves for more environments can be found in Appendix~\ref{app: more_local}.

From Figure~\ref{dis_a} and \ref{dis_b}, we observe that there is a gap between the overall error and the current error.
This means although the agent can learn a dynamics model which is good enough for all samples obtained by historical policies, 
this is at the expense of the prediction accuracy for the samples induced by current policy. 
Since we only use the current policy during model rollouts, this will lead to inaccurate model-generated samples and misleading policy learning.
To demonstrate that this error gap is not caused by the model not having converged on the recent data, we also conduct another experiment.
We checkpoint the replay buffer and model at multiple points during training, then train the dynamics model for a long time until convergence at these checkpointed locations, and test the prediction error on newly generated data.
We find that even when the dynamics model has converged on all data, the prediction error on newly generated data does not reduce obviously.
Experiment results and more details can be found in Appendix~\ref{app: converge_model}.

Therefore, learning a dynamics model that adapts the state-action visitation distribution for all historical policies, in other words, a random historical policy mixture distribution used for model learning, is not the most efficient way for model-based RL (especially for task-specific problems).
In the next section, we will analyze how the policy mixture distribution affects the performance of model-based RL.



\section{Performance Gap Influenced by Policy Mixture Distribution}

In this section, we provide a theoretical analysis of how the policy mixture distribution affects the performance of model-based RL.
First, we derive a theorem that upper bounds the performance gap between the real environment rollouts and the model rollouts under any current policy $\pi$.

\begin{theorem}\label{main_theory}
Given the historical policy mixture $\pi_{\text{mix}, k} = (\Pi^k, \bm{w}^k)$ at iteration step $k$, we denote $\xi_{\rho_i} = D_{TV}(\rho^{\pi}_{T}(s,a) || \rho^{\pi_i}_{T}(s,a))$ as the state-action visitation distribution shift and $\xi_{\pi_i} = \mathbb{E}_{s \sim v^{\pi_{\text{mix}}}_{\hat{T}}} \left[ D_{TV}(\pi(a|s)|| \pi_i(a|s)) \right]$ as the policy distribution shift between the historical policy $\pi_i$ and current policy $\pi$ respectively, where $v^{\pi_{\text{mix}}}_{\hat{T}}$ is the state visitation distribution of the policy mixture under the learned dynamics model $\hat{T}$.
$r_{\text{max}}$ is the maximum reward the policy can get from the real environment, $\gamma$ is the discount factor, and ${\rm Vol}(\mathcal{S})$ is the volume of state space. 
Then the performance gap between the real environment rollout $J(\pi, T)$ and the model rollout $J(\pi, \hat{T})$ can be bounded as:
\begin{equation}\label{main_theory_eq}
\resizebox{.9\hsize}{!}{$
\begin{aligned}
J(\pi, T)  - J(\pi, \hat{T}) \leq & 2 \gamma r_{\text{max}} \mathbb{E}_{(s,a) \sim \rho^{\pi}_{T} }[D_{TV} (T(s^\prime|s,a) || \hat{T}(s^\prime|s,a))] \\
&  + r_{\text{max}} \sum^{k}_{i=1} w^k_i (\gamma {\rm Vol}(\mathcal{S}) \xi_{\rho_i} + 2 \xi_{\pi_i}) \\
& + 2 r_{\text{max}} D_{TV}(\rho^{\pi_{\text{mix}}}_{\hat{T}}(s,a) || \rho^{\pi}_{\hat{T}}(s,a)) 
\end{aligned}$}
\end{equation}
\end{theorem}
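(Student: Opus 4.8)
The plan is to turn the return gap into a total-variation (TV) distance between discounted state-action occupancy measures and then peel it apart by inserting the mixture occupancy as an intermediate anchor. Since the discounted return is linear in the occupancy measure and $|r(s,a)|\le r_{\text{max}}$, I would first bound the gap by $2 r_{\text{max}}\,D_{TV}(\rho^{\pi}_{T}\|\rho^{\pi}_{\hat{T}})$, i.e., by how differently the current policy is distributed in the real dynamics $T$ versus the learned model $\hat{T}$ (the exact discount-dependent constant is recovered by the per-step telescoping below). The key observation is that $\hat{T}$ was fitted under the mixture occupancy $\rho^{\pi_{\text{mix}}}$, so rather than attack this TV directly I would insert the model-side mixture anchor and use the triangle inequality,
\[
D_{TV}(\rho^{\pi}_{T}\|\rho^{\pi}_{\hat{T}})\ \le\ D_{TV}(\rho^{\pi}_{T}\|\rho^{\pi_{\text{mix}}}_{\hat{T}}) + D_{TV}(\rho^{\pi_{\text{mix}}}_{\hat{T}}\|\rho^{\pi}_{\hat{T}}),
\]
whose second term is exactly the model-side mixture-versus-current discrepancy and, with the prefactor $2 r_{\text{max}}$, reproduces the third summand of the bound.

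For the residual distance $D_{TV}(\rho^{\pi}_{T}\|\rho^{\pi_{\text{mix}}}_{\hat{T}})$ I would write $\rho^{\pi}_{T}=\sum_i w^k_i\rho^{\pi}_{T}$ (the weights sum to one) and $\rho^{\pi_{\text{mix}}}_{\hat{T}}=\sum_i w^k_i\rho^{\pi_i}_{\hat{T}}$, then use joint convexity of TV to pass the distance inside the mixture, $D_{TV}(\rho^{\pi}_{T}\|\rho^{\pi_{\text{mix}}}_{\hat{T}})\le \sum_i w^k_i\, D_{TV}(\rho^{\pi}_{T}\|\rho^{\pi_i}_{\hat{T}})$. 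Each per-policy term compares the current policy $\pi$ in the true dynamics $T$ against a historical policy $\pi_i$ in the model $\hat{T}$, so it mixes \emph{a change of dynamics}, \emph{a change of policy}, and \emph{a change of state-action visitation}. I would unroll $D_{TV}(\rho^{\pi}_{T}\|\rho^{\pi_i}_{\hat{T}})$ by a one-step, simulation-lemma-style telescoping over the rollout horizon, at each step charging: (i) the transition mismatch $D_{TV}(T\|\hat{T})$ evaluated under the current-policy occupancy, which accumulates into the model-error term $2\gamma r_{\text{max}}\mathbb{E}_{(s,a)\sim\rho^{\pi}_{T}}[D_{TV}(T\|\hat{T})]$; (ii) the policy mismatch $D_{TV}(\pi\|\pi_i)$ measured on states drawn from the model mixture state visitation $v^{\pi_{\text{mix}}}_{\hat{T}}$, giving the $2\xi_{\pi_i}$ contribution; and (iii) the residual state-action visitation shift $\xi_{\rho_i}=D_{TV}(\rho^{\pi}_{T}\|\rho^{\pi_i}_{T})$ in the real dynamics. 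The factor $\gamma\,{\rm Vol}(\mathcal{S})$ multiplying $\xi_{\rho_i}$ would enter precisely in step (iii): after one discounting step the carried-over state-distribution difference is integrated against a bounded value/indicator over the state space, and $\int_{\mathcal{S}}|\cdot|\,ds\le {\rm Vol}(\mathcal{S})\sup_{s}|\cdot|$ converts this integral into ${\rm Vol}(\mathcal{S})$ times the visitation-shift magnitude. Summing the three charges against the weights $w^k_i$ and restoring the prefactor $r_{\text{max}}$ yields the middle summand $r_{\text{max}}\sum_i w^k_i(\gamma\,{\rm Vol}(\mathcal{S})\xi_{\rho_i}+2\xi_{\pi_i})$.

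The hard part will be the telescoping in steps (ii)–(iii): unlike the textbook simulation lemma, which swaps a single ingredient (dynamics \emph{or} policy) between two otherwise identical rollouts, here each per-policy distance changes the dynamics, the policy, \emph{and} the reference occupancy at once, and the two rollouts live under different transition kernels $T$ and $\hat{T}$. Keeping the bookkeeping clean — fixing the order in which dynamics and policy are swapped, deciding which occupancy ($\rho^{\pi}_{T}$ versus $v^{\pi_{\text{mix}}}_{\hat{T}}$) each one-step mismatch is evaluated against, and controlling the accumulated state-distribution error without double counting — is the delicate step, and it is this accounting that pins down the ${\rm Vol}(\mathcal{S})$ and $\gamma$ factors on $\xi_{\rho_i}$. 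I would sanity-check the constants on two degenerate cases: when $\hat{T}=T$ the model-error term vanishes and only distribution-shift terms remain, and when $\pi_i\equiv\pi$ for all $i$ (no policy evolution) the $\xi_{\rho_i}$, $\xi_{\pi_i}$, and mixture-shift terms all vanish, leaving the standard same-policy model-error bound $2\gamma r_{\text{max}}\mathbb{E}_{\rho^\pi_T}[D_{TV}(T\|\hat{T})]$.
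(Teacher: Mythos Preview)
Your high-level triangulation through $\rho^{\pi_{\text{mix}}}_{\hat T}$ matches the paper, but the order of decomposition you propose will not reproduce the bound as stated. You push the mixture inside via convexity first, obtaining per-policy distances $D_{TV}(\rho^{\pi}_{T}\|\rho^{\pi_i}_{\hat T})$, and then plan a simulation-lemma telescoping on each. But in any such per-$i$ telescoping, the policy mismatch $D_{TV}(\pi(\cdot|s)\|\pi_i(\cdot|s))$ is naturally evaluated under $v^{\pi}_{T}$ or $v^{\pi_i}_{\hat T}$, not under the \emph{mixture} state visitation $v^{\pi_{\text{mix}}}_{\hat T}$; the latter depends on all $j\neq i$ and cannot emerge from a single-$i$ comparison. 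The paper avoids this by doing the state/action factorization \emph{before} breaking up the mixture: it writes $\rho^{\pi}_{T}-\rho^{\pi_{\text{mix}}}_{\hat T}=(v^{\pi}_{T}-v^{\pi_{\text{mix}}}_{\hat T})\pi + v^{\pi_{\text{mix}}}_{\hat T}(\pi-\pi_{\text{mix}})$, so the action-part immediately yields $2r_{\text{max}}\sum_i w_i^k\,\mathbb{E}_{s\sim v^{\pi_{\text{mix}}}_{\hat T}}[D_{TV}(\pi\|\pi_i)]=2r_{\text{max}}\sum_i w_i^k\,\xi_{\pi_i}$, and convexity is only applied afterwards to the $\xi_{\rho}$ piece.

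Two further points you are missing. First, the state-visitation part $\int_s|v^{\pi}_{T}(s)-v^{\pi_{\text{mix}}}_{\hat T}(s)|\,ds$ is not handled by a horizon telescoping but by a one-step occupancy identity (the paper's Lemma~\ref{lemma_1}, from \citet{shen2020model}): $|v^{\pi_1}_{T}(s')-v^{\pi_2}_{\hat T}(s')|\le \gamma\,\mathbb{E}_{\rho^{\pi_1}_{T}}|T-\hat T|+\gamma\,d_{\mathcal F_{s'}}(\rho^{\pi_1}_{T},\rho^{\pi_2}_{\hat T})$. Integrating in $s'$ gives the model-error term and $\gamma\,{\rm Vol}(\mathcal S)\,D_{TV}(\rho^{\pi}_{T}\|\rho^{\pi_{\text{mix}}}_{\hat T})$; this is where ${\rm Vol}(\mathcal S)$ actually enters (the IPM term is constant in $s'$), not via the $\sup$-bound you sketched. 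Second, to turn that last TV into $\sum_i w_i^k\,\xi_{\rho_i}$ the paper explicitly \emph{assumes} $\rho^{\pi_{\text{mix}}}_{\hat T}\approx\rho^{\pi_{\text{mix}}}_{T}$ (the model is accurate on the mixture distribution), since $\xi_{\rho_i}$ is defined under the true dynamics $T$. Your plan makes no such assumption, and without it the $\xi_{\rho_i}$ term cannot be extracted from a comparison that still has $\hat T$ on one side.
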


\begin{proof}
 See Appendix~\ref{proof_main}.
\end{proof}

\textbf{Remarks.} \\
\textbf{(1)} The first term is about \textbf{model prediction error}. 
This term suggests that the model needs to adapt to the state-action visitation distribution of the \emph{current policy} to reduce the model prediction error, 
since this term is the expectation of prediction error of the learned dynamics model $\hat{T}$ under the current policy state-action visitation distribution $\rho^{\pi}_{T}$.\\
\textbf{(2)} The second term shows the effect of the policy mixture distribution on \textbf{model rollout}. 
This item contains two distribution shifts: \textbf{(2a)} state-action visitation distribution shift $\xi_{\rho_i}$ and \textbf{(2b)} policy distribution shift $\xi_{\pi_i}$ between the historical policy and current policy. 
It should be noted that $\xi_{\rho_i}$ is induced by $\xi_{\pi_i}$, so it is reasonable to believe that a historical policy with a larger $\xi_{\pi_i}$ will have a larger $\xi_{\rho_i}$.
Both $\xi_{\rho_i}$ and $\xi_{\pi_i}$ are fixed since historical policies and the current policy are immutable during model learning and model rollout.
Therefore, to reduce this term, we can only adjust the policy mixture distribution $\bm{w}^k$.
Since the distribution shift varies across historical policies and the current policy, it is obvious that the random distribution $w^k_i=\frac{1}{k}$ is not the best choice.\\
\textbf{(3)} The last term is related to the model sample buffer, which is used for \textbf{policy learning}. 
To maximize sample utilization, the model-generated samples obtained by the historical policies will be maintained in the model sample buffer until they are replaced by the new samples generated by the current policy.
Therefore, the distribution of simulated samples in the model buffer is not exactly the same as the simulated sample distribution of the current policy, but is often mixed with the simulated sample distribution of the historical policies.
This makes it necessary to adjust the sample distribution in the model sample buffer to make it close to the simulated sample distribution of the current policy during the policy learning process.
This has been studied in many model-based and model-free methods \citep{{schaul2016prioritized}, {liu2021regret}, {huang2021learning}, {mu2021model}} and is out of the scope of this paper, and we focus on reducing the first two terms related to model learning.

The first two items on the right-hand side of Equation (\ref{main_theory_eq}) provide 
useful insights on model learning.
This first term points out the goal of model learning: to make accurate predictions for the current policy.
The second item further demonstrates that to achieve this goal, we should adjust the policy mixture distribution to reduce the distribution shift between the historical policy mixture and the current policy.
According to the second term, we have the following proposition.

\begin{proposition}\label{prop_3}
The performance gap can be reduced if the weight $w_i^k$ of each policy $\pi_i$ in the historical policy sequence $\Pi^k$ is negatively related to state action visitation distribution shift $\xi_{\rho_i}$ and the policy distribution shift $\xi_{\pi_i}$ between the historical policy $\pi_i$ and current policy $\pi$ instead of an average weight $w_i^k = \frac{1}{k}$.
\end{proposition}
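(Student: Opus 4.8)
The plan is to read Proposition~\ref{prop_3} as a statement about the second summand of the bound in Theorem~\ref{main_theory}, the only term whose dependence on the mixture weights $\bm{w}^k$ is explicit and controllable. First I would fix, as the theorem permits, the per-policy shifts $\xi_{\rho_i}$ and $\xi_{\pi_i}$: these are determined once the historical sequence $\Pi^k$ and the current policy $\pi$ are fixed, and are unaffected by our choice of $\bm{w}^k$. Collecting them into a single nonnegative coefficient $c_i := \gamma\,{\rm Vol}(\mathcal{S})\,\xi_{\rho_i} + 2\,\xi_{\pi_i} \ge 0$, the relevant term becomes $r_{\text{max}} \sum_{i=1}^k w_i^k c_i = r_{\text{max}}\,\langle \bm{w}^k, \bm{c}\rangle$, a linear functional of the weights over the probability simplex.

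The core step is a covariance decomposition. Treating the index $i$ as uniform on $\{1,\dots,k\}$ and writing $W_i := k\,w_i^k$ (so that $\tfrac{1}{k}\sum_i W_i = 1$), I would use the identity
\[
\sum_{i=1}^k w_i^k c_i \;=\; \bar{c} + \mathrm{Cov}_i(W_i, c_i),
\qquad \bar{c} := \frac{1}{k}\sum_{i=1}^k c_i,
\]
where $\bar{c}$ is exactly the value attained by the uniform weights $w_i^k = \tfrac{1}{k}$ (for which $W_i \equiv 1$ and the covariance vanishes). If the weights are chosen \emph{negatively related} to the shifts, i.e.\ larger $w_i^k$ is assigned to policies with smaller $c_i$, then $W_i$ is anti-monotone in $c_i$, so $\mathrm{Cov}_i(W_i, c_i) < 0$ and the second term falls strictly below its uniform value $r_{\text{max}}\bar{c}$. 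Equivalently, I could invoke the rearrangement inequality: pairing the largest weights with the smallest coefficients minimizes $\langle \bm{w}^k, \bm{c}\rangle$ over the simplex. Since the remaining terms of the bound are either without explicit weight dependence (the first term being an expectation under $\rho^\pi_T$) or declared out of scope (the third, model-buffer term), this strictly lowers the right-hand side of \eqref{main_theory_eq}, which is the sense in which the performance-gap guarantee is reduced.

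The main obstacle is honesty about \emph{the performance gap} versus \emph{the certified upper bound} on it: what we actually control is the right-hand side of \eqref{main_theory_eq}, so the statement must be read as reducing this surrogate rather than the true gap. A second, more subtle point is that the first term depends on $\bm{w}^k$ \emph{implicitly}, since the learned model $\hat{T}$ is fit under $\rho^{\pi_{\text{mix}}}$; tilting the mixture toward policies close to $\pi$ should only improve the prediction accuracy under $\rho^\pi_T$, but this is a modeling claim rather than a clean inequality, so I would present it as motivating intuition (corroborated by the experiments in Figures~\ref{dis_a}--\ref{dis_b}) rather than prove it. Finally, I would make ``negatively related'' precise, either as strict anti-monotonicity of $w_i^k$ in $c_i$ or merely as $\mathrm{Cov}_i(W_i, c_i) < 0$, so that the covariance argument delivers a genuine strict improvement over the uniform baseline.
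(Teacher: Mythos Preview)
Your proposal is correct and arrives at the same conclusion as the paper, but via a cleaner route. The paper's proof first reorders the policies so that the combined shift $c_i = \gamma\,{\rm Vol}(\mathcal{S})\,\xi_{\rho_i} + 2\xi_{\pi_i}$ is decreasing in $i$ (hence $w_i^k$ is increasing), then finds a threshold index $k_0$ separating $w_i^k < 1/k$ from $w_i^k > 1/k$, and bounds the two partial sums of $(w_i^k - 1/k)c_i$ by replacing $c_i$ with constants $c_{k_1}, c_{k_2}$ on either side of the threshold; a telescoping step plus $\sum_i(w_i^k - 1/k) = 0$ finishes the inequality. This is essentially a by-hand proof of Chebyshev's sum inequality for this particular pair of sequences. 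Your covariance identity $\sum_i w_i^k c_i = \bar{c} + \mathrm{Cov}_i(W_i, c_i)$ with $W_i = k w_i^k$ is exactly the right abstraction: once ``negatively related'' is read as anti-monotone (or merely $\mathrm{Cov}_i(W_i, c_i) \le 0$), the comparison with the uniform baseline is immediate and the threshold bookkeeping disappears. Your caveat that the argument controls the bound in \eqref{main_theory_eq} rather than the true gap, and that the first and third terms carry only implicit or out-of-scope dependence on $\bm{w}^k$, is also more explicit than the paper's treatment. One minor quibble: the rearrangement-inequality remark (``minimizes $\langle \bm{w}^k, \bm{c}\rangle$ over the simplex'') is not quite the right framing, since over the simplex the minimum is a vertex; what you actually need, and what your covariance argument already delivers, is the comparison against the uniform point, i.e.\ Chebyshev's sum inequality.
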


The proof is in Appendix~\ref{connections}.
Proposition~\ref{prop_3} illustrates how we should adjust the policy distribution to help the learned dynamics model adapt to the current policy.
This naturally motivates our method, which is described in the next section.

\section{Policy-adapted Dynamics Model Learning}
\label{sec4}

In this section, we introduce our model learning method called \emph{\ouralgofull} (\emph{\ouralgo}).
\ouralgo is designed to reduce the model prediction error during model rollouts, and it contains two parts.
The first part is adjusting the policy mixture distribution into a non-uniform distribution, and the second part is learning the dynamics model based on this non-uniform distribution.
The pseudo-code is in Algorithm~\ref{Alg_PDML}.

\begin{algorithm}[!htbp]
\caption{\ouralgofull (\ouralgo)}
\label{Alg_PDML}
\begin{algorithmic}[1]
\REQUIRE current policy proportion hyperparameter $\alpha$, interaction epochs $I$ \
\STATE Initialize historical policy sequence $k\gets 0, \Pi^{k} \gets \emptyset$
\FOR{$I$ epochs}
\STATE Interact with the environment using current policy $\pi_c$, add samples into real sample buffer $\mathbb{D}_e$ \ 
\STATE Add current policy $\pi_c$ into historical policy sequence: $\pi_{k}\gets \pi_c$, $\Pi^{k} \gets \{\Pi^{k-1}, \pi_k\}$ \
\STATE Adjust the historical policy mixture distribution $\bm{w}^k=[w_1^k,\ldots,w_k^k]$ via Equation~(\ref{policy_weight}) and (\ref{current_policy_weight}) \
\STATE Normalize $\bm{w}_k \gets {\bm{w}_k}/{\lVert \bm{w}_k \rVert}$
\STATE Sample a training data batch of $(s_n,a_n,r,s_{n+1})$ from $\mathbb{D}_e$ according to $\bm{w}^k$ \ 
\STATE Train dynamics model $\hat{T}_\theta$ via Equation~(\ref{model_loss}), use current policy $\pi_c$ to perform model rollouts \
\STATE $k\gets k+1$
\ENDFOR
\end{algorithmic}
\end{algorithm}

\subsection{Policy Mixture Distribution Adjustment}
In this section, we introduce a mechanism to adjust the policy mixture distribution.
According to our Theorem~\ref{main_theory}, to minimize the performance gap, one may set the weight of the policy with the smallest $\xi_{\rho_i}$ and $\xi_{\pi_i}$ to be 1 and the weights of other policies in the historical policy sequence to be 0.
However, this is not the best approach in practice since each policy can only interact with the environment for very few steps in model-based RL.
This means each policy can provide very limited samples for model learning.
If we only use a small number of samples from just one policy, it is difficult to learn accurate transition dynamics for the current policy.

\paragraph{Weights design for historical policies.} In order to maximize the use of limited samples to estimate the transition dynamics, inspired by Proposition~\ref{prop_3}, we design the weight of each policy in the historical policy sequence  $\Pi^k = \{\pi_1, \pi_2, ..., \pi_k\}$ except for the current policy $\pi_c$ (i.e., $\pi_k \in \Pi^k$)  as follows:

\vspace{-15pt}
\begin{equation}\label{policy_weight}
\begin{aligned}
& w^k_i = \frac{{\xi_{\pi_i}}^{-1}}{\sum_{n=1}^{k} {\xi_{\pi_n}}^{-1}}, \\
& \xi_{\pi_i} = \mathbb{E}_{s \sim v^{\pi_{\text{mix}}}_{\hat{T}}} \left[ D_{TV}(\pi_c(\cdot|s)|| \pi_i(\cdot|s)) \right], \quad  \forall i \in [k-1],
\end{aligned}
\end{equation}

where $\xi_{\pi_i}$ is the policy distribution shift between historical policy $\pi^k_i$ and the current policy $\pi_c$; it is also one of the distribution shifts in the second term of Equation~(\ref{main_theory_eq}).
We use $[k-1] := \{1,\ldots,k-1\}$ to denote the integers from 1 to $k-1$.
We only use the policy distribution shift $\xi_{\pi_i}$ (and not the state-action visitation distribution shift $\xi_{\rho_i}$)
because estimating the state-action visitation distribution shift using limited real samples is difficult, and thus the estimation may be inaccurate.
Besides, as mentioned in the remarks of Theorem~\ref{main_theory}, state-action visitation distribution is induced by the policy, so it is reasonable to believe that a historical policy with a larger $\xi_{\pi_i}$ will have a larger $\xi_{\rho_i}$.

\paragraph{Weight design for the current policy.} 
In model-based RL, the current policy becomes a historical policy after interacting with the environment and is added to the historical policy sequence (see Algorithm~\ref{Alg_PDML}).
The total variation distance between the current policy and itself is 0, so Equation~(\ref{policy_weight}) cannot be used to calculate the weight of the current policy. 
For the weight of the current policy $w_k^k$, we use the following equation:

\vspace{-12pt}
\begin{equation}\label{current_policy_weight}
w_k^k = \left\{  
             \begin{matrix}
             \alpha\sum_{i=1}^{k-1}w^{k}_{i},
             & \quad \text{ if }  \quad 
             \alpha\sum_{i=1}^{k-1}w^{k}_{i}
             > \mathop{max}\limits_{i\in[k-1]}\{w^{k}_{i}\}   \\  
              \mathop{max}\limits_{i\in[k-1]}\{w^{k}_{i}\}, & \quad \text{ if } \quad 
              \alpha\sum_{i=1}^{k-1}w^{k}_{i}
              \leq \mathop{max}\limits_{i\in[k-1]}\{w^{k}_{i}\}    
             \end{matrix}  
\right. 
\end{equation}
where $\alpha$ is a hyperparameter to control the proportion of the weight of the current policy to the total weight over the historical policy sequence.
Equation~(\ref{current_policy_weight}) ensures that the weight of the current policy $w_k^k$ is always the largest in the historical policy sequence.
Before each model learning iteration, we adjust the policy mixture distribution according to Equation~(\ref{policy_weight}) and Equation~(\ref{current_policy_weight}) and normalize the weights $\bm{w}^k =[w_1^k, ..., w_k^k]$ to make sure they sum to 1. The details are illustrated in Algorithm~\ref{Alg_PDML}.

\paragraph{Estimation of the policy distribution shift $\xi_{\pi_i}\ \forall i\in[k-1]$.} Given a state $s_n$, we define the output of policy $\pi_i$ as a multivariate Gaussian distribution $\mathcal{N}(\mu_{\pi^n_i}, \Sigma_{\pi^n_i})$.
In order to make the empirical estimation more accurate, we use each historical policy to traverse all $N$ samples in the real sample buffer and output the action distribution corresponding to each state.
Then we use the inequality between KL divergence and total variation distance to estimate $\xi_{\pi_i}$:

\vspace{-15pt}
\begin{equation}\label{TV_estimation}
\resizebox{\hsize}{!}{$
\begin{aligned}
\xi_{\pi_i}
& = \frac{1}{N} \sum_{n=1}^{N} D_{TV}(\pi_c(\cdot|s_n)||\pi_i(\cdot|s_n)) \\
& \leq \frac{1}{2N} \sum_{n=1}^{N} \sqrt{tr(\Sigma^{-1}_{\pi^n_c}\Sigma_{\pi^n_i}-I) + (\mu_{\pi^n_c}-\mu_{\pi^n_i})^\mathsf{T}\Sigma^{-1}_{\pi^n_i}(\mu_{\pi^n_c}-\mu_{\pi^n_i}) - \log\det(\Sigma^{-1}_{\pi^n_c}\Sigma_{\pi^n_i})}
\end{aligned}
$}
\end{equation}

\textbf{Novelty of \ouralgo compared to prioritized experience replay proposed in model-free RL.} \quad In model-free RL, prioritized experience replay methods only need to consider how to improve the policy based on existing samples.
Therefore, it is only necessary to select the sample that can bring the greatest improvement to the policy, and a weighting is designed for each sample. 
In model-based RL, the policy is learned based on model-generated samples, and the accuracy of these model-generated samples determines the sub-optimality of the policy. 
Thus, in the model-learning part, we focus on the model prediction accuracy.
Our theoretical analysis shows that we should consider whether the state-action visitation distribution that generates the samples is close to the current policy when reweighting samples. 
Although a sample can bring a great improvement to the current policy (the TD value is high), if this sample is not in the state-action visitation distribution of the current policy,
this sample will not be encountered during model rollouts. 
Then learning this sample will not bring any benefit to model learning and policy learning.
Therefore, we reweight the state-action visitation distribution that generates a batch of samples according to $\xi_{\pi_i}$, rather than a single sample as in model-free RL.

\subsection{Dynamics Model Learning}

After adjusting the policy mixture distribution, we learn the dynamics model based on this adjusted distribution.
Although our method can be applied to learn any type of dynamic model, here we choose to use the current state-of-the-art structure probabilistic dynamics model ensemble \citet{chua2018deep}: $\{\hat{T}^1_{\theta}, ..., \hat{T}^B_{\theta} \}$.
$\theta$ is the parameters of each dynamics model in the ensemble, and $B$ is the ensemble size.
Given a $(s_n, a_n)$ pair as an input, the output $\hat{T}^b_{\theta}$ of each network $b$ in the ensemble is the Multivariate Gaussian Distribution of the next state: $\hat{T}^b_{\theta}(s_{n+1}|s_n,a_n) = \mathcal{N}(\mu^b_{\theta}(s_n,a_n), \Sigma^b_{\theta}(s_n,a_n))$
Before each model learning iteration, we sample the training data batch from the real sample buffer according to the adjusted policy mixture distribution $\bm{w}^k$, and train the dynamics model using maximum likelihood:

\vspace{-22pt}
\begin{equation}\label{model_loss}
\resizebox{\hsize}{!}{$
\mathcal{L}(\theta) =  \sum_{n=1}^N [\mu^b_{\theta}(s_n,a_n)-s_{n+1}]^\top {\Sigma^b_{\theta}}^{-1}(s_n,a_n)[\mu^b_{\theta}(s_n,a_n)-s_{n+1}] + \log\det\Sigma^b_{\theta}(s_n,a_n)
$}
\end{equation}
\vspace{-13pt}

During model rollouts, we use the current policy $\pi_c$ as the rollout policy and sample the initial states from the real sample buffer according to the adjusted policy mixture distribution $\bm{w}^k$.
\section{Experiment}\label{section:experiment}
\vspace{-0.5em}

In this section, we will first compare our method with the previous state-of-the-art  (including both model-free and model-based) baselines. 
We demonstrate that after combining with SOTA model-based method, \ouralgo improves SOTA sample efficiency and SOTA asymptotic performance for model-based RL.
Then we compare our method with three SOTA prioritized experience replay methods to indicate the advantage of our distribution adjustment method for model learning.
Lastly, we conduct a systematic ablation study to analyze the model errors of \ouralgo.

\subsection{Comparison with State-of-the-arts}
\vspace{-0.5em}

In this section, we compare our method with several previous state-of-the-art (SOTA) baselines.
For model-based methods, we choose MBPO~\citep{janner2019trust}, AMPO~\citep{shen2020model}, and VaGraM~\citep{voelcker2022value}.
MBPO is the SOTA model-based method, and our method is combined with MBPO for the model learning part.
We name our method \ouralgo-MBPO and we provide the pseudo code in Appendix~\ref{app:pdml_mbpo}.
AMPO is another SOTA model-based method that uses unsupervised model adaptation during model learning to reduce the prediction error.
VaGraM is a SOTA value equivalence model-based method. 
Instead of accurately learning each dimension in the dynamics, it aims to learn the dimensions which impact policy learning most.
In other words, this method also learns a locally accurate model.
Both AMPO and VaGraM are implemented based on MBPO.
\ouralgo-MBPO, AMPO, and VaGraM share the same model architecture and policy part; only the model learning part is different.
For model-free methods, we compare with two methods.
The first one is SAC \citep{haarnoja2018soft}, which is the policy part of all model-based and model-free baselines we used and is one of the SOTA model-free methods.
The second one is REDQ \citep{chen2021randomized}, which improves the Update-To-Data (UTD) ratio of the model-free method and achieves higher sample efficiency than SAC. 
The implementation details of our method are in Appendix~\ref{app:subsec:imp-details}.
We conduct experiment on six complex MoJoCo-v2 \citep{todorov2012mujoco} environments, the performance curves are shown in Figure~\ref{benchmark_exp}.

\begin{figure}[!htbp]
\centering
\resizebox{\columnwidth}{!}{\input{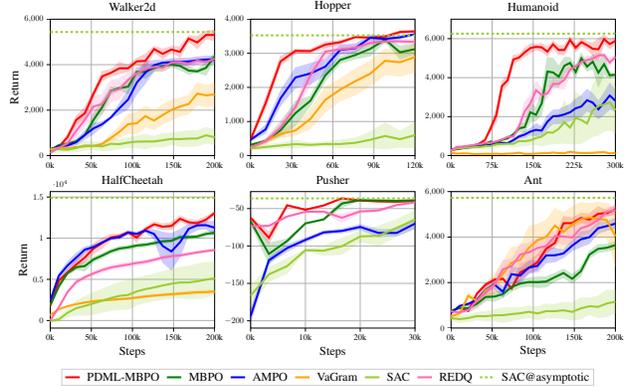}}
\caption{
Performance curves for our method (\ouralgo-MBPO) and other baseline methods on six MuJoCo environments. Our method, AMPO, MBPO and VaGram are model-based methods, while SAC and REDQ are model-free methods. The dashed line indicates the asymptotic performance of SAC. 
The solid lines indicate the mean over 8 seeds and shaded regions correspond to the $95\%$ confidence interval among seeds.
We evaluate the performance every 1k interaction steps. }
\label{benchmark_exp}
\end{figure}

\textbf{Results:} \textbf{(1) Improving SOTA sample efficiency.}
\ouralgo-MBPO outperforms all existing state-of-the-art methods, including model-based and model-free, in sample efficiency in first five environments, and achieves competitive sample efficiency in Ant.
In Hopper, Walker2d, and Humanoid, \ouralgo-MBPO achieves very impressive sample efficiency improvements, up to a 2$\times$ improvement  in Hopper and Humanoid compared to the SOTA model-based methods.
For example, our method using only 30k steps to achieve 3000 while other model-based methods need almost 60k steps.
Besides, its sample efficiency is also higher than REDQ which is modified for sample-efficient model-free RL.
\textbf{(2) Improving SOTA asymptotic performance for Model-based RL.} 
In addition, \ouralgo-MBPO obtains significantly better asymptotic performance compared to other state-of-the-art model-based methods.
It is worth noting that the asymptotic performance of \ouralgo-MBPO is very close to SAC in four environments (Hopper, Walker2d, Humanoid, and Pusher) and is even better than SAC occasionally.
Furthermore, our method achieves impressive improvement in the most complex environment Humanoid.
These indicate the effectiveness of our proposed model learning method.

\textbf{Discussion of computational cost.} 
\ouralgo requires saving all historical policies as well as computing their distances to the current policy for adjusting their weights (as shown in Equation~\ref{TV_estimation}). 
This creates an additional memory overhead of storing historical policy networks ($k \times $policy network size) and an additional computational overhead of computing the distances, for each iteration $k$. 
In PDML-MBPO, we observe storing historical policy networks costs a memory overhead of no more than $k \times$1 MB, compared with the high memory occupied by the model sample buffer, this cost is very small.
Besides, compared to MBPO, the training time of PDML-MBPO does not increase significantly.
We present the training time of PDML-MBPO and MBPO in six different environments. 
As shown in Table~\ref{tab: training time}, after using PDML, the training time doesn't increase significantly.
In the most complex environment Humanoid, the training time for 300k steps increases by only one hour. 
In other environments, the training time of PDML-MBPO is almost the same as that of MBPO.

\begin{table}[!htb]
\vspace{-5pt}
\setlength{\abovecaptionskip}{0.2cm}
\centering
\caption{Training time of PDML-MBPO and MBPO in different environments. The results are averaged over 8 random seeds.}
\begin{tabular}{c|cc}
\toprule 
 &MBPO&PDML-MBPO \\
\midrule
Walker2d & 58.6 h & 59.2 h\\
Hopper & 35.5 h & 35.7 h \\
Humanoid & 70.8 h & 72.0 h  \\
HalfCheetah & 60.2h  & 60.9 h \\
Pusher & 4.2 h & 4.3 h \\
Ant & 55.6 h & 55.9 h \\

\bottomrule
\end{tabular}
\label{tab: training time}
\end{table}
\vspace{-5pt}

\subsection{Comparison with Model-free Experience Replay Methods}\label{exp5.3}

We compare with the other three prioritized experience replay methods in model-free RL to indicate the advantage of our \ouralgo.
The first one is Prioritized Experience Replay (PER) \citep{schaul2016prioritized}, which weighs the samples according to their TD-error.
The second method is RECALL \citep{goyal2018recall}, which chooses the top $k$ highest value sample.
They use this to recall the samples that induce the high-value trajectories and train the policy.
We implement this by choosing the top $25\%$ highest $Q$ value samples to train the model and as model rollout initial states.
The third method is Model-augmented Prioritized Experience Replay (MaPER) \citep{oh2022modelaugmented}, which is an extension of PER using both TD-error and model prediction error to weight the samples for model learning.

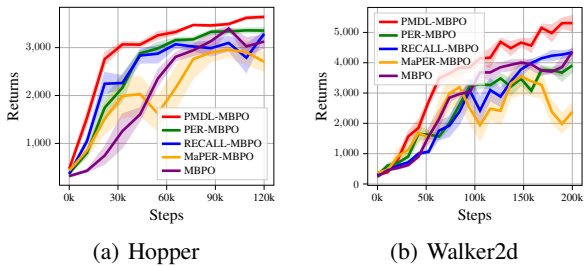
\begin{figure}[!htbp]
\centering
\subfigure[Hopper]{
\resizebox{0.215\textwidth}{!}{
\begin{tikzpicture}

\definecolor{darkgray176}{RGB}{176,176,176}
\definecolor{green}{RGB}{0,128,0}
\definecolor{lightgray204}{RGB}{204,204,204}
\definecolor{orange}{RGB}{255,165,0}
\definecolor{purple}{RGB}{128,0,128}

\begin{axis}[
legend cell align={left},
legend style={
  fill opacity=0.8,
  draw opacity=1,
  text opacity=1,
  at={(0.97,0.03)},
  anchor=south east,
  draw=lightgray204
},
tick align=outside,
tick pos=left,
x grid style={darkgray176},
xlabel={\Large{Steps}},
xmajorgrids,
xmin=-0.55, xmax=11.55,
xtick style={color=black},
xtick={0,2.75,5.5,8.25,11},
xticklabels={0k,30k,60k,90k,120k},
y grid style={darkgray176},
ylabel={\Large{Returns}},
ymajorgrids,
ymin=126.877985787876, ymax=3841.97610955141,
ytick style={color=black}
]
\path [draw=red, fill=red, opacity=0.2]
(axis cs:0,524.069090729243)
--(axis cs:0,424.469742762824)
--(axis cs:1,1392.32110725313)
--(axis cs:2,2638.68862400117)
--(axis cs:3,2993.63013246451)
--(axis cs:4,3007.27114802064)
--(axis cs:5,3171.88384149205)
--(axis cs:6,3274.36481001235)
--(axis cs:7,3458.67496428729)
--(axis cs:8,3432.37123261142)
--(axis cs:9,3460.88458598964)
--(axis cs:10,3589.7762996872)
--(axis cs:11,3609.33936961825)
--(axis cs:11,3673.1080130167)
--(axis cs:11,3673.1080130167)
--(axis cs:10,3659.62320104442)
--(axis cs:9,3522.55764161087)
--(axis cs:8,3489.06955781452)
--(axis cs:7,3486.37678663579)
--(axis cs:6,3377.25675221651)
--(axis cs:5,3335.03644292654)
--(axis cs:4,3118.29457493737)
--(axis cs:3,3126.93798037639)
--(axis cs:2,2901.9018158295)
--(axis cs:1,1680.38491899096)
--(axis cs:0,524.069090729243)
--cycle;

\path [draw=green, fill=green, opacity=0.2]
(axis cs:0,409.195856331342)
--(axis cs:0,347.604633888648)
--(axis cs:1,712.966287507385)
--(axis cs:2,1515.51327485693)
--(axis cs:3,1871.72785991194)
--(axis cs:4,2828.09066890951)
--(axis cs:5,2853.59256905678)
--(axis cs:6,3113.61231122919)
--(axis cs:7,3126.37448833977)
--(axis cs:8,3280.88631087716)
--(axis cs:9,3282.56323452107)
--(axis cs:10,3309.20785909855)
--(axis cs:11,3301.80441647243)
--(axis cs:11,3412.20821398142)
--(axis cs:11,3412.20821398142)
--(axis cs:10,3411.66116661514)
--(axis cs:9,3393.52054203527)
--(axis cs:8,3372.82645577195)
--(axis cs:7,3222.85557717593)
--(axis cs:6,3207.09771877727)
--(axis cs:5,3110.18375328259)
--(axis cs:4,2924.29516551101)
--(axis cs:3,2447.00597517351)
--(axis cs:2,1961.7080783628)
--(axis cs:1,884.076558205413)
--(axis cs:0,409.195856331342)
--cycle;

\path [draw=blue, fill=blue, opacity=0.2]
(axis cs:0,387.756374646298)
--(axis cs:0,334.468761953757)
--(axis cs:1,973.853545214208)
--(axis cs:2,2041.07716242756)
--(axis cs:3,2135.41359397869)
--(axis cs:4,2677.93732982089)
--(axis cs:5,2681.08931161748)
--(axis cs:6,2986.74122995583)
--(axis cs:7,2942.22314600752)
--(axis cs:8,2822.12351125943)
--(axis cs:9,2930.04673086393)
--(axis cs:10,2482.8595736311)
--(axis cs:11,3241.77171626091)
--(axis cs:11,3340.60992213103)
--(axis cs:11,3340.60992213103)
--(axis cs:10,3071.27242506606)
--(axis cs:9,3244.28271047968)
--(axis cs:8,3142.40715993979)
--(axis cs:7,3110.15903889684)
--(axis cs:6,3146.46003933559)
--(axis cs:5,3042.12336270044)
--(axis cs:4,2969.45218111355)
--(axis cs:3,2413.84667544309)
--(axis cs:2,2482.89964454103)
--(axis cs:1,1135.20246830226)
--(axis cs:0,387.756374646298)
--cycle;

\path [draw=orange, fill=orange, opacity=0.2]
(axis cs:0,473.576878814573)
--(axis cs:0,392.324318577407)
--(axis cs:1,722.910794777355)
--(axis cs:2,1192.21776453006)
--(axis cs:3,1535.92478352166)
--(axis cs:4,1574.59403367818)
--(axis cs:5,1262.42887253643)
--(axis cs:6,1715.16231499622)
--(axis cs:7,2348.6164299533)
--(axis cs:8,2701.27475817536)
--(axis cs:9,2802.78137750636)
--(axis cs:10,2639.82010601418)
--(axis cs:11,2503.04525109003)
--(axis cs:11,2890.87999494942)
--(axis cs:11,2890.87999494942)
--(axis cs:10,3136.03531318584)
--(axis cs:9,3087.47282411519)
--(axis cs:8,3082.342074447)
--(axis cs:7,3111.521935891)
--(axis cs:6,2587.28807090215)
--(axis cs:5,1969.74728553913)
--(axis cs:4,2408.28703116475)
--(axis cs:3,2404.67463125612)
--(axis cs:2,1809.13381285636)
--(axis cs:1,977.304137759031)
--(axis cs:0,473.576878814573)
--cycle;

\path [draw=purple, fill=purple, opacity=0.2]
(axis cs:0,339.972579588968)
--(axis cs:0,295.746082322582)
--(axis cs:1,400.519965741246)
--(axis cs:2,564.864196032257)
--(axis cs:3,902.529847831912)
--(axis cs:4,1473.51505171033)
--(axis cs:5,2193.01774168523)
--(axis cs:6,2720.57731207071)
--(axis cs:7,2835.31391123476)
--(axis cs:8,2913.98607861179)
--(axis cs:9,3352.73182316601)
--(axis cs:10,2835.56533106089)
--(axis cs:11,2988.32434405311)
--(axis cs:11,3266.64159864914)
--(axis cs:11,3266.64159864914)
--(axis cs:10,3209.8786664619)
--(axis cs:9,3438.32108330912)
--(axis cs:8,3343.41414410734)
--(axis cs:7,3063.59540174373)
--(axis cs:6,2897.72176180611)
--(axis cs:5,2543.22154778328)
--(axis cs:4,1744.95846983322)
--(axis cs:3,1617.22041537425)
--(axis cs:2,936.880602529529)
--(axis cs:1,460.106434202795)
--(axis cs:0,339.972579588968)
--cycle;

\addplot [line width=2.4pt, red]
table {%
0 471.089963694722
1 1532.56998894136
2 2761.26849040953
3 3067.62203033963
4 3062.95333272059
5 3256.19638478024
6 3326.40277057023
7 3472.54954923827
8 3462.64393825511
9 3492.56411339738
10 3622.56898451809
11 3641.32432551514
};
\addlegendentry{PMDL-MBPO}
\addplot [line width=2.4pt, green]
table {%
0 376.714442611452
1 803.857670013724
2 1747.48448574483
3 2161.58293439144
4 2880.68741366188
5 2983.29726044789
6 3158.63716620244
7 3172.51776804274
8 3329.74224233818
9 3339.51947620983
10 3360.57297286238
11 3356.26210670612
};
\addlegendentry{PER-MBPO}
\addplot [line width=2.4pt, blue]
table {%
0 362.461381822472
1 1047.9871887112
2 2246.48099669052
3 2260.59190471406
4 2835.54416504777
5 2874.31296567561
6 3069.73965062722
7 3024.6194890541
8 2989.35048108383
9 3098.42827841806
10 2789.05055087198
11 3287.98583969722
};
\addlegendentry{RECALL-MBPO}
\addplot [line width=2.4pt, orange]
table {%
0 430.678000103028
1 855.773312058494
2 1512.30937292727
3 1989.19033124918
4 2030.8023284377
5 1626.05135694962
6 2154.62511872032
7 2759.22774394523
8 2897.99971713904
9 2956.61521647972
10 2903.44158190959
11 2705.08505173789
};
\addlegendentry{MaPER-MBPO}
\addplot [line width=2.4pt, purple]
table {%
0 317.898530472971
1 428.743190593075
2 745.279637758621
3 1256.99048239969
4 1606.0990534834
5 2353.78492243818
6 2806.38044623282
7 2944.00105858782
8 3136.00963927379
9 3397.57586127361
10 3026.46923253685
11 3126.13677739073
};
\addlegendentry{MBPO}
\end{axis}

\end{tikzpicture}}
\label{hopper_per}
}
\hfil
\subfigure[Walker2d]{
\resizebox{0.215\textwidth}{!}{
\begin{tikzpicture}

\definecolor{darkgray176}{RGB}{176,176,176}
\definecolor{green}{RGB}{0,128,0}
\definecolor{lightgray204}{RGB}{204,204,204}
\definecolor{orange}{RGB}{255,165,0}
\definecolor{purple}{RGB}{128,0,128}

\begin{axis}[
legend cell align={left},
legend style={
  fill opacity=0.8,
  draw opacity=1,
  text opacity=1,
  at={(0.03,0.97)},
  anchor=north west,
  draw=lightgray204
},
tick align=outside,
tick pos=left,
x grid style={darkgray176},
xlabel={\Large{Steps}},
xmajorgrids,
xmin=-0.95, xmax=19.95,
xtick style={color=black},
xtick={0,4.75,9.5,14.25,19},
xticklabels={0k,50k,100k,150k,200k},
y grid style={darkgray176},
ylabel={\Large{Returns}},
ymajorgrids,
ymin=-45.8110688563009, ymax=5824.07983702833,
ytick style={color=black}
]
\path [draw=red, fill=red, opacity=0.2]
(axis cs:0,310.830550089836)
--(axis cs:0,272.854401780311)
--(axis cs:1,373.334428159517)
--(axis cs:2,679.589206648156)
--(axis cs:3,1402.65087744195)
--(axis cs:4,1610.96680162714)
--(axis cs:5,2591.79893612864)
--(axis cs:6,3236.28740991097)
--(axis cs:7,3490.04561520236)
--(axis cs:8,3586.96303636819)
--(axis cs:9,3700.7246066198)
--(axis cs:10,3990.1790472297)
--(axis cs:11,3830.77014966605)
--(axis cs:12,4541.50839418894)
--(axis cs:13,4241.10155563128)
--(axis cs:14,4535.67264001557)
--(axis cs:15,4339.50074722111)
--(axis cs:16,5020.8007366783)
--(axis cs:17,4693.35385495834)
--(axis cs:18,5141.80632094195)
--(axis cs:19,5036.70165011735)
--(axis cs:19,5557.26661403358)
--(axis cs:19,5557.26661403358)
--(axis cs:18,5473.29540903158)
--(axis cs:17,5255.8377128452)
--(axis cs:16,5276.63120963503)
--(axis cs:15,4804.20323095772)
--(axis cs:14,4802.80691154034)
--(axis cs:13,4700.41802904981)
--(axis cs:12,4810.61684569456)
--(axis cs:11,4470.30929036581)
--(axis cs:10,4276.35786927132)
--(axis cs:9,3979.24122612549)
--(axis cs:8,4119.37202166008)
--(axis cs:7,3780.73018612708)
--(axis cs:6,3710.15799516396)
--(axis cs:5,2868.73347012784)
--(axis cs:4,2118.09272673743)
--(axis cs:3,1742.22871642554)
--(axis cs:2,955.425591023964)
--(axis cs:1,397.340916165709)
--(axis cs:0,310.830550089836)
--cycle;

\path [draw=green, fill=green, opacity=0.2]
(axis cs:0,317.019049839726)
--(axis cs:0,274.727353004765)
--(axis cs:1,535.074443118566)
--(axis cs:2,656.633510972536)
--(axis cs:3,746.312549128292)
--(axis cs:4,1438.7311912241)
--(axis cs:5,1502.89262524681)
--(axis cs:6,1455.15895257688)
--(axis cs:7,1825.27355638602)
--(axis cs:8,2372.78064878015)
--(axis cs:9,3041.25844159893)
--(axis cs:10,3158.12700832489)
--(axis cs:11,3154.17582271126)
--(axis cs:12,3329.89643584159)
--(axis cs:13,2909.96013744011)
--(axis cs:14,3240.47004866467)
--(axis cs:15,2973.68244200398)
--(axis cs:16,3624.11146060211)
--(axis cs:17,3661.66776287296)
--(axis cs:18,3392.04554369605)
--(axis cs:19,3685.99131854143)
--(axis cs:19,4151.45522311415)
--(axis cs:19,4151.45522311415)
--(axis cs:18,3921.69556389106)
--(axis cs:17,3850.16696431225)
--(axis cs:16,3903.96699660825)
--(axis cs:15,3181.50046917754)
--(axis cs:14,3695.12913907325)
--(axis cs:13,3472.87969451086)
--(axis cs:12,3638.80952339543)
--(axis cs:11,3365.89676648014)
--(axis cs:10,3429.07999090041)
--(axis cs:9,3404.81108378693)
--(axis cs:8,2856.32252124574)
--(axis cs:7,2184.30601817428)
--(axis cs:6,1677.09724184029)
--(axis cs:5,1750.6254726433)
--(axis cs:4,1907.17769195034)
--(axis cs:3,1075.65132973337)
--(axis cs:2,739.633852729188)
--(axis cs:1,687.627561919525)
--(axis cs:0,317.019049839726)
--cycle;

\path [draw=blue, fill=blue, opacity=0.2]
(axis cs:0,315.667325993333)
--(axis cs:0,278.508931899421)
--(axis cs:1,425.321963841805)
--(axis cs:2,571.955507658693)
--(axis cs:3,686.689420910461)
--(axis cs:4,942.493516004444)
--(axis cs:5,1008.70690590407)
--(axis cs:6,1456.68952199857)
--(axis cs:7,1558.77302467293)
--(axis cs:8,1963.61312491929)
--(axis cs:9,2678.99939027615)
--(axis cs:10,2025.47742624466)
--(axis cs:11,2745.53782648025)
--(axis cs:12,2556.08948908627)
--(axis cs:13,3162.08224697362)
--(axis cs:14,3543.05186948128)
--(axis cs:15,3870.0909655185)
--(axis cs:16,4061.83925185107)
--(axis cs:17,4061.33554267235)
--(axis cs:18,4144.75938391817)
--(axis cs:19,4223.49007300043)
--(axis cs:19,4449.02916187939)
--(axis cs:19,4449.02916187939)
--(axis cs:18,4395.38871773529)
--(axis cs:17,4409.1778969368)
--(axis cs:16,4233.36925878463)
--(axis cs:15,4109.64236183704)
--(axis cs:14,4010.55312016175)
--(axis cs:13,3613.24811701234)
--(axis cs:12,3169.86661224724)
--(axis cs:11,3479.83999049184)
--(axis cs:10,2871.16542176524)
--(axis cs:9,3496.45079935016)
--(axis cs:8,2808.86861545341)
--(axis cs:7,2228.34699885204)
--(axis cs:6,2060.44109952198)
--(axis cs:5,1128.48015582549)
--(axis cs:4,1072.47753627247)
--(axis cs:3,743.549169983274)
--(axis cs:2,644.058008719437)
--(axis cs:1,476.952103658752)
--(axis cs:0,315.667325993333)
--cycle;

\path [draw=orange, fill=orange, opacity=0.2]
(axis cs:0,416.786922317573)
--(axis cs:0,337.157153389551)
--(axis cs:1,440.40353938867)
--(axis cs:2,706.462767023301)
--(axis cs:3,934.878991167328)
--(axis cs:4,1357.69683796338)
--(axis cs:5,1312.96549890378)
--(axis cs:6,1701.69077589263)
--(axis cs:7,2463.76192394257)
--(axis cs:8,2620.27599039918)
--(axis cs:9,1977.44709910504)
--(axis cs:10,1421.77272339951)
--(axis cs:11,2107.34465657431)
--(axis cs:12,1957.44247598932)
--(axis cs:13,2818.98598820004)
--(axis cs:14,2964.17504500921)
--(axis cs:15,2905.68152478869)
--(axis cs:16,2831.45402635911)
--(axis cs:17,1995.74392479687)
--(axis cs:18,1678.76025625361)
--(axis cs:19,2034.1321632111)
--(axis cs:19,2671.78675697139)
--(axis cs:19,2671.78675697139)
--(axis cs:18,2339.46916177861)
--(axis cs:17,2753.22424144047)
--(axis cs:16,3671.37758904207)
--(axis cs:15,3853.30802407887)
--(axis cs:14,4086.18833457748)
--(axis cs:13,3728.97487628367)
--(axis cs:12,2809.79097427314)
--(axis cs:11,2916.51815494367)
--(axis cs:10,2514.75820421423)
--(axis cs:9,3004.0934028769)
--(axis cs:8,3718.51367357913)
--(axis cs:7,3566.80919978271)
--(axis cs:6,2745.73270035633)
--(axis cs:5,1759.74505919426)
--(axis cs:4,2106.10751030199)
--(axis cs:3,1336.04207600111)
--(axis cs:2,1213.17718718444)
--(axis cs:1,492.916451115113)
--(axis cs:0,416.786922317573)
--cycle;

\path [draw=purple, fill=purple, opacity=0.2]
(axis cs:0,255.66183395849)
--(axis cs:0,221.002154138455)
--(axis cs:1,424.458998966034)
--(axis cs:2,479.467590141296)
--(axis cs:3,574.509440538923)
--(axis cs:4,846.504928251731)
--(axis cs:5,1346.29526525113)
--(axis cs:6,1855.56570286306)
--(axis cs:7,2687.53997036052)
--(axis cs:8,2905.4578799687)
--(axis cs:9,2772.81512693713)
--(axis cs:10,3545.9711005254)
--(axis cs:11,3398.05500946957)
--(axis cs:12,3697.76395208573)
--(axis cs:13,3734.68043393137)
--(axis cs:14,3906.20217559042)
--(axis cs:15,3766.67741500411)
--(axis cs:16,3580.93194345776)
--(axis cs:17,3639.40336307502)
--(axis cs:18,3564.04468282485)
--(axis cs:19,4216.83888831274)
--(axis cs:19,4494.6498627389)
--(axis cs:19,4494.6498627389)
--(axis cs:18,4179.89770859862)
--(axis cs:17,3769.6802402328)
--(axis cs:16,3879.83656598043)
--(axis cs:15,4119.62422235372)
--(axis cs:14,4112.18665697337)
--(axis cs:13,4096.11856590747)
--(axis cs:12,3991.16656272738)
--(axis cs:11,3939.84524975219)
--(axis cs:10,3836.53855406551)
--(axis cs:9,3304.85017612933)
--(axis cs:8,3029.82210734516)
--(axis cs:7,3005.18406001441)
--(axis cs:6,2439.41323577081)
--(axis cs:5,1865.58134249194)
--(axis cs:4,955.012075848352)
--(axis cs:3,683.270244686492)
--(axis cs:2,579.101701743367)
--(axis cs:1,497.655891518741)
--(axis cs:0,255.66183395849)
--cycle;

\addplot [line width=2.4pt, red]
table {%
0 293.145767028379
1 385.319823395862
2 815.240276427423
3 1570.03012304247
4 1854.69304332409
5 2725.36147395747
6 3480.88699550507
7 3644.25712843865
8 3838.64182531279
9 3845.49749084624
10 4145.53383593358
11 4165.74091733639
12 4688.18311046203
13 4479.68161128147
14 4665.26409458896
15 4566.25108058516
16 5151.12356435023
17 4975.66193607887
18 5306.58619043586
19 5307.87194895502
};
\addlegendentry{PMDL-MBPO}
\addplot [line width=2.4pt, green]
table {%
0 295.809940044257
1 610.287882936222
2 699.352655255748
3 904.798193573448
4 1654.64657427699
5 1627.81196682711
6 1564.49235203566
7 2006.7820866683
8 2606.08197872273
9 3233.25994864767
10 3288.78918049899
11 3261.42787250448
12 3482.1252012132
13 3213.52207923043
14 3469.33745132579
15 3079.33439424815
16 3761.94190141642
17 3753.37073271655
18 3674.6477206485
19 3912.82045828881
};
\addlegendentry{PER-MBPO}
\addplot [line width=2.4pt, blue]
table {%
0 296.697437176067
1 451.288293842834
2 610.446583866168
3 714.352571429727
4 1001.16945318658
5 1066.88195253643
6 1760.39842207089
7 1923.06621201632
8 2389.99687765374
9 3082.46690358139
10 2422.64689442319
11 3101.32483016986
12 2881.47835881297
13 3376.5130517439
14 3781.04999623225
15 3988.31090599839
16 4146.45168463152
17 4231.82656147448
18 4269.63559971198
19 4339.84650375995
};
\addlegendentry{RECALL-MBPO}
\addplot [line width=2.4pt, orange]
table {%
0 374.886592403782
1 467.778615030257
2 916.672675958681
3 1130.68950316787
4 1687.85490937643
5 1533.7382168465
6 2230.39408186882
7 2981.44016641824
8 3193.09956180543
9 2496.8133349408
10 1938.54184395034
11 2482.52078948845
12 2412.86984527831
13 3311.30394659504
14 3546.37283895352
15 3386.88928775997
16 3271.02540188075
17 2380.6175432744
18 1982.07110696965
19 2379.26106415477
};
\addlegendentry{MaPER-MBPO}
\addplot [line width=2.4pt, purple]
table {%
0 237.756184186918
1 460.009386931806
2 529.894135830404
3 622.994436878792
4 902.484668439908
5 1573.15167495849
6 2135.10236193063
7 2842.8485840294
8 2968.12256469933
9 3037.49917042036
10 3677.28697016184
11 3682.79395882905
12 3850.6207721261
13 3930.66016017183
14 4004.43281846588
15 3947.29235098245
16 3741.48541290752
17 3706.02864715436
18 3855.39285736112
19 4362.40445380108
};
\addlegendentry{MBPO}
\end{axis}

\end{tikzpicture}}
\label{walker_per}
}
\vspace{-10pt}
\caption{
The comparison of model-free experience replay methods on Hopper and Walker2d. The experiments are run for 8 random seeds.
}
\label{MF_comparison}
\end{figure}

The experiment results are shown in Figure~\ref{hopper_per} and \ref{walker_per}. 
Our \ouralgo significantly outperforms all three methods on both sample efficiency and asymptotic performance.
We believe these methods adjust the weights for each sample in the training data rather than each policy. 
This will cause the samples belonging to the same state-action visitation distribution to have different weights,
and the samples with higher weights may not necessarily appear in the state-action visitation distribution of current policy. 
Therefore, the learned model cannot be adapted to current policy's state-action visitation distribution, and the model prediction error during model rollouts cannot be reduced. 
In the model learning process, it is crucial to adapt to current policy's state-action visitation distribution according to our theory.
This experiment result indicates our theory's correctness and our method's effectiveness.
We also compare with an exponentially decay method to demonstrate the effectiveness of our method.
In this exponentially decay method, the sample's weight is exponentially decay based on a decay rate as its age increases.
The results and details are shown in Appendix~\ref{app: exponentially decay}.

\subsection{Model Error Analysis}

To further verify the impact of \ouralgo, 
we compare the one-step prediction error and the compounding error of the policy-adapted model learned by \ouralgo-MBPO and the original dynamics model learned by MBPO.

\textbf{One-step prediction error.}\quad
As shown in Figure~\ref{hopper_onestep_error}, \ref{half_onestep_error} and \ref{walker_onestep_error}, we evaluate the model prediction error for the current policy on Hopper, HalfCheetah, and Walker2d.
We evaluate the learned model every 1000 environment steps using L2 error on the 1000 samples obtained by the current policy from the real environment.
The error curves show that the one-step prediction error for the current policy of the policy-adapted model is much smaller than that of the original dynamics model,
which means the model-generated samples of \ouralgo-MBPO are more accurate than MBPO, so the policy induced by \ouralgo-MBPO can perform better.


\textbf{Compounding error.}\quad
We also compare the multi-step model rollouts compounding error of the policy-adapted model and the original dynamics model.
This directly determines the accuracy of the model-generated samples in each model rollout trajectory.
Figure~\ref{Compounding_error} shows the compounding error curves of the policy-adapted model and the original dynamics model on Hopper.
We calculate the $h$-step compounding error as the difference between the state at each rollout step $h$ in the model rollout trajectory and the real environment rollout trajectory using L2 error.
The results demonstrate that the policy-adapted model has much a smaller compounding error than the original dynamics model, which means 
the policy-adapted model has a more robust multi-step planning capability than the original dynamics model learned by MBPO.

\begin{figure}[!htbp]
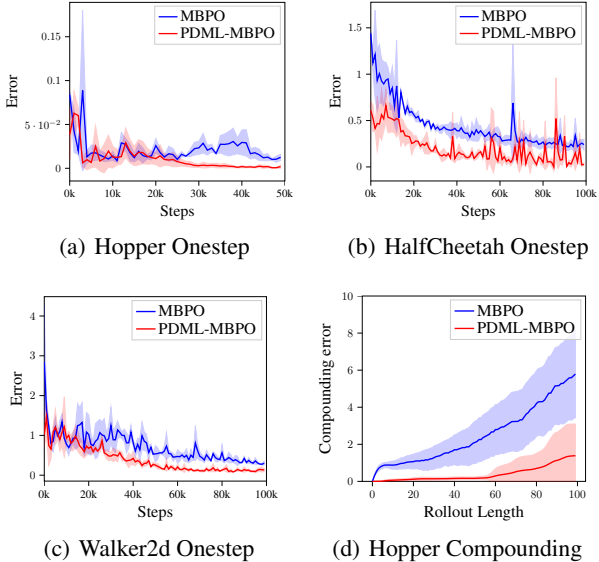

\centering
\subfigure[Hopper Onestep]{
\resizebox{0.23\textwidth}{!}{\input{exp_results/PDML_hopper_one_step}}
\label{hopper_onestep_error}
}
\hfil
\subfigure[HalfCheetah Onestep]{
\resizebox{0.215\textwidth}{!}{\input{exp_results/PDML_half_one_step}}
\label{half_onestep_error}
}
\hfil
\subfigure[Walker2d Onestep]{
\resizebox{0.215\textwidth}{!}{\input{exp_results/PDML_walker_one_step}}
\label{walker_onestep_error}
}
\hfil
\subfigure[Hopper Compounding]{
\resizebox{0.215\textwidth}{!}{\input{exp_results/PDML_Compounding_error}}
\label{Compounding_error}
}
\vspace{-10pt}
\caption{
\textbf{(a), (b) and (c)} display one-step (model-prediction) error for \ouralgo-MBPO and MBPO. 
\textbf{(d)} demonstrates the compounding error (i.e., the difference between the $h$-step state in the model rollout trajectory and the real environment rollout trajectory) of \ouralgo-MBPO and MBPO over 20 model rollout trajectories.
}
\vspace{-10pt}
\label{model error fig}
\end{figure}

\subsection{Comparison with Simple Exponentially Decay Prioritization}\label{app: exponentially decay}

To further demonstrate the effectiveness of our method, we compare with an exponentially decay method.
The weight of the historical policy exponentially decays as it lifetime increases.
To ensure a fair comparison, the weight of current policy is also compute using Eq.~\ref{current_policy_weight}.
The hyperparameter $\alpha$ of the current policy in exponentially decay method is the same as PDML which is given in Appendix~\ref{app:subsec:imp-details}.
The exponentially decay rate of exponentially decay method in Figure~\ref{ablation study: exponentially decay} is 0.98.
We conduct the experiment on three MoJoCo environments: Hopper, Walker2d, and Humanoid.
The performance curves are given in Figure~\ref{ablation study: exponentially decay}.
Moreover, to demonstrate the effective of our method, we provide more results of well-tuned exponentially decay rates in Table~\ref{tab: performance exponentially decay}.
We can see that after using exponentially decay method, the performance in three environments is slightly improved, but it is much lower than PDML.
Besides, the model error of exponentially decay method is higher than PDML.
Combined with the analysis of distribution visualization in Appendix~\ref{app: distribution visualization}, this further demonstrates that our method is non-trivial and effective.

\begin{figure}[!htbp]
\centering
\subfigure[Hopper]{
\resizebox{0.22\textwidth}{!}{
\begin{tikzpicture}

\definecolor{darkgray176}{RGB}{176,176,176}
\definecolor{green}{RGB}{0,128,0}
\definecolor{lightgray204}{RGB}{204,204,204}

\begin{axis}[
legend cell align={left},
legend style={
  fill opacity=0.8,
  draw opacity=1,
  text opacity=1,
  at={(0.97,0.03)},
  anchor=south east,
  draw=lightgray204
},
tick align=outside,
tick pos=left,
x grid style={darkgray176},
xlabel={Steps},
xmajorgrids,
xmin=-0.55, xmax=11.55,
xtick style={color=black},
xtick={0,2.75,5.5,8.25,11},
xticklabels={0k,30k,60k,90k,120k},
y grid style={darkgray176},
ylabel={Returns},
ymajorgrids,
ymin=129.669711809802, ymax=3841.8373167974,
ytick style={color=black}
]
\path [draw=red, fill=red, opacity=0.2]
(axis cs:0,517.296683125515)
--(axis cs:0,425.347691238838)
--(axis cs:1,1376.69168407417)
--(axis cs:2,2632.94806941622)
--(axis cs:3,2997.86890212284)
--(axis cs:4,3010.93448435969)
--(axis cs:5,3171.91631808057)
--(axis cs:6,3274.33963434043)
--(axis cs:7,3458.97551007875)
--(axis cs:8,3428.14106030587)
--(axis cs:9,3460.58839139814)
--(axis cs:10,3586.7532783166)
--(axis cs:11,3610.4852125345)
--(axis cs:11,3673.1024256616)
--(axis cs:11,3673.1024256616)
--(axis cs:10,3660.12289320726)
--(axis cs:9,3518.98845752374)
--(axis cs:8,3489.44583022376)
--(axis cs:7,3485.34008173515)
--(axis cs:6,3377.03382731917)
--(axis cs:5,3332.74825998519)
--(axis cs:4,3118.9117226497)
--(axis cs:3,3126.00329283869)
--(axis cs:2,2899.10228613653)
--(axis cs:1,1685.19294058818)
--(axis cs:0,517.296683125515)
--cycle;

\path [draw=blue, fill=blue, opacity=0.2]
(axis cs:0,344.280140067163)
--(axis cs:0,304.421440304486)
--(axis cs:1,1073.18095990671)
--(axis cs:2,2179.38455830193)
--(axis cs:3,2603.83896415301)
--(axis cs:4,2490.39453400589)
--(axis cs:5,2702.90558836896)
--(axis cs:6,2693.92480730185)
--(axis cs:7,2995.88807087828)
--(axis cs:8,3165.66124626525)
--(axis cs:9,3101.00238868539)
--(axis cs:10,3167.20287023334)
--(axis cs:11,3302.75026618459)
--(axis cs:11,3338.58361187453)
--(axis cs:11,3338.58361187453)
--(axis cs:10,3184.24745949078)
--(axis cs:9,3281.4438088699)
--(axis cs:8,3278.34263973574)
--(axis cs:7,3062.24180624806)
--(axis cs:6,2775.22951101342)
--(axis cs:5,2980.06920775468)
--(axis cs:4,2601.59590889263)
--(axis cs:3,2670.63114181856)
--(axis cs:2,2286.58034254622)
--(axis cs:1,1419.33412206494)
--(axis cs:0,344.280140067163)
--cycle;

\path [draw=green, fill=green, opacity=0.2]
(axis cs:0,340.688074049299)
--(axis cs:0,298.404602945602)
--(axis cs:1,400.516249679289)
--(axis cs:2,581.648138459044)
--(axis cs:3,923.1466079283)
--(axis cs:4,1471.35612539944)
--(axis cs:5,2191.61358840189)
--(axis cs:6,2716.6323560744)
--(axis cs:7,2838.7236657148)
--(axis cs:8,2917.18139620777)
--(axis cs:9,3350.52588865511)
--(axis cs:10,2820.05389950922)
--(axis cs:11,2983.76419293711)
--(axis cs:11,3282.34388767606)
--(axis cs:11,3282.34388767606)
--(axis cs:10,3235.15138745673)
--(axis cs:9,3440.71911213682)
--(axis cs:8,3311.23131480895)
--(axis cs:7,3055.43169476955)
--(axis cs:6,2898.7461563137)
--(axis cs:5,2546.23969815017)
--(axis cs:4,1736.45671926268)
--(axis cs:3,1632.50808273817)
--(axis cs:2,940.15850766482)
--(axis cs:1,459.651408871956)
--(axis cs:0,340.688074049299)
--cycle;

\addplot [very thick, red]
table {%
0 472.2825980442
1 1533.07804732684
2 2757.95326151378
3 3066.45024549037
4 3063.59092831097
5 3255.02745881312
6 3324.81653038239
7 3472.6105495737
8 3462.1781851166
9 3491.78016219375
10 3622.25170099892
11 3640.75695517332
};
\addlegendentry{PMDL-MBPO}
\addplot [very thick, blue]
table {%
0 324.953667415584
1 1245.38302115653
2 2227.13223542776
3 2638.6268944124
4 2550.13562540757
5 2841.70473831988
6 2734.95718186977
7 3030.50088036312
8 3227.09736601941
9 3199.32482866506
10 3175.80403202457
11 3320.15584675522
};
\addlegendentry{Exponentially Decay-MBPO}
\addplot [very thick, green]
table {%
0 318.2979419574
1 429.184866187345
2 744.743127737697
3 1248.15759330845
4 1604.77416448329
5 2352.15159098185
6 2805.94148943797
7 2945.15131486378
8 3129.48391751262
9 3397.14831191856
10 3032.83288535635
11 3123.36001414721
};
\addlegendentry{MBPO}
\end{axis}

\end{tikzpicture}}
}
\hfil
\subfigure[Walker2d]{
\resizebox{0.22\textwidth}{!}{
\begin{tikzpicture}

\definecolor{darkgray176}{RGB}{176,176,176}
\definecolor{green}{RGB}{0,128,0}
\definecolor{lightgray204}{RGB}{204,204,204}

\begin{axis}[
legend cell align={left},
legend style={
  fill opacity=0.8,
  draw opacity=1,
  text opacity=1,
  at={(0.97,0.03)},
  anchor=south east,
  draw=lightgray204
},
tick align=outside,
tick pos=left,
x grid style={darkgray176},
xlabel={Steps},
xmajorgrids,
xmin=-0.95, xmax=19.95,
xtick style={color=black},
xtick={0,4.75,9.5,14.25,19},
xticklabels={0k,50k,100k,150k,200k},
y grid style={darkgray176},
ylabel={Returns},
ymajorgrids,
ymin=-43.5714763533834, ymax=5786.889873084,
ytick style={color=black}
]
\path [draw=red, fill=red, opacity=0.2]
(axis cs:0,312.137142082192)
--(axis cs:0,273.670032943372)
--(axis cs:1,373.773005438413)
--(axis cs:2,673.396739223287)
--(axis cs:3,1395.76563425731)
--(axis cs:4,1630.38126687919)
--(axis cs:5,2591.75623559294)
--(axis cs:6,3257.79492800077)
--(axis cs:7,3491.94312997644)
--(axis cs:8,3575.85259941692)
--(axis cs:9,3719.3942054446)
--(axis cs:10,4003.23125458285)
--(axis cs:11,3836.86906648947)
--(axis cs:12,4533.90367598524)
--(axis cs:13,4237.31201260413)
--(axis cs:14,4534.3579807437)
--(axis cs:15,4312.54939448837)
--(axis cs:16,5023.5647187668)
--(axis cs:17,4677.82829546429)
--(axis cs:18,5130.801971786)
--(axis cs:19,5062.67712270133)
--(axis cs:19,5521.86890265503)
--(axis cs:19,5521.86890265503)
--(axis cs:18,5467.6980768069)
--(axis cs:17,5263.01569913763)
--(axis cs:16,5280.33722397144)
--(axis cs:15,4771.61362775632)
--(axis cs:14,4801.30288392466)
--(axis cs:13,4690.3529820724)
--(axis cs:12,4817.69443851314)
--(axis cs:11,4432.54968022565)
--(axis cs:10,4289.50629900668)
--(axis cs:9,3971.64601022665)
--(axis cs:8,4103.63094915172)
--(axis cs:7,3786.96161653939)
--(axis cs:6,3703.48365982382)
--(axis cs:5,2859.52354881791)
--(axis cs:4,2137.67878944342)
--(axis cs:3,1725.60546511501)
--(axis cs:2,955.674285706917)
--(axis cs:1,396.459525425475)
--(axis cs:0,312.137142082192)
--cycle;

\path [draw=blue, fill=blue, opacity=0.2]
(axis cs:0,365.381261931525)
--(axis cs:0,319.000510325668)
--(axis cs:1,376.750543755037)
--(axis cs:2,548.931069030236)
--(axis cs:3,879.141946594633)
--(axis cs:4,1592.11227016026)
--(axis cs:5,1785.81634484709)
--(axis cs:6,2330.92110405421)
--(axis cs:7,2761.95273964219)
--(axis cs:8,3465.57834909026)
--(axis cs:9,3165.38509454145)
--(axis cs:10,3789.13554846103)
--(axis cs:11,3834.33183162128)
--(axis cs:12,4028.62655068414)
--(axis cs:13,4118.80861258522)
--(axis cs:14,4175.44658445664)
--(axis cs:15,3737.28366509353)
--(axis cs:16,4399.4541669786)
--(axis cs:17,4419.75301415505)
--(axis cs:18,4547.17738060075)
--(axis cs:19,4494.36377908154)
--(axis cs:19,4725.58863892362)
--(axis cs:19,4725.58863892362)
--(axis cs:18,4878.77796428104)
--(axis cs:17,4610.13028838152)
--(axis cs:16,4671.33926572658)
--(axis cs:15,4417.54366901333)
--(axis cs:14,4553.39714727358)
--(axis cs:13,4236.98754348942)
--(axis cs:12,4240.6331781381)
--(axis cs:11,3977.53115611081)
--(axis cs:10,4044.76827041997)
--(axis cs:9,3862.85823073977)
--(axis cs:8,3702.97126211967)
--(axis cs:7,3384.10713339545)
--(axis cs:6,3096.67095177143)
--(axis cs:5,2589.91730243538)
--(axis cs:4,2105.79073704291)
--(axis cs:3,1300.22991197108)
--(axis cs:2,674.100411680055)
--(axis cs:1,462.733658328953)
--(axis cs:0,365.381261931525)
--cycle;

\path [draw=green, fill=green, opacity=0.2]
(axis cs:0,255.365786959903)
--(axis cs:0,221.449494075589)
--(axis cs:1,422.021214749422)
--(axis cs:2,485.385748469539)
--(axis cs:3,566.885348635053)
--(axis cs:4,847.184918778841)
--(axis cs:5,1320.02886002872)
--(axis cs:6,1851.68322528062)
--(axis cs:7,2678.27432703222)
--(axis cs:8,2903.08336044723)
--(axis cs:9,2750.46605216057)
--(axis cs:10,3561.03145010824)
--(axis cs:11,3399.70429198482)
--(axis cs:12,3686.2874599239)
--(axis cs:13,3747.981466727)
--(axis cs:14,3916.69402976281)
--(axis cs:15,3767.91392147466)
--(axis cs:16,3604.19925706867)
--(axis cs:17,3647.5095886664)
--(axis cs:18,3556.63194480067)
--(axis cs:19,4218.50223546647)
--(axis cs:19,4517.61752102023)
--(axis cs:19,4517.61752102023)
--(axis cs:18,4132.41782681475)
--(axis cs:17,3773.561675977)
--(axis cs:16,3884.58928587413)
--(axis cs:15,4126.03660106658)
--(axis cs:14,4113.04919788532)
--(axis cs:13,4096.08894536977)
--(axis cs:12,4005.90529630844)
--(axis cs:11,3965.66028384058)
--(axis cs:10,3839.04345927852)
--(axis cs:9,3297.91226719749)
--(axis cs:8,3035.32527608424)
--(axis cs:7,2995.31023442028)
--(axis cs:6,2411.14883532218)
--(axis cs:5,1831.3724199482)
--(axis cs:4,958.053857353776)
--(axis cs:3,682.552709779187)
--(axis cs:2,581.22924632974)
--(axis cs:1,495.129580811793)
--(axis cs:0,255.365786959903)
--cycle;

\addplot [very thick, red]
table {%
0 293.868787849442
1 385.600345329877
2 812.854661627481
3 1570.28699425435
4 1859.50146459058
5 2721.57997378863
6 3477.26112599465
7 3642.80335008836
8 3837.71058707689
9 3845.89287093634
10 4145.75553370011
11 4158.75818683123
12 4685.0787405485
13 4474.32275734441
14 4667.63568773962
15 4561.04326250719
16 5149.31749243704
17 4975.21747862893
18 5300.25662541497
19 5301.88663031536
};
\addlegendentry{PMDL-MBPO}
\addplot [very thick, blue]
table {%
0 340.318195815136
1 421.212344287821
2 607.47238174684
3 1072.00549866971
4 1840.60653756508
5 2175.4622062368
6 2712.70768637385
7 3077.25425782354
8 3588.02334510192
9 3534.92457342795
10 3913.92696296236
11 3903.77314908988
12 4135.31114002786
13 4174.31403967967
14 4359.84808433989
15 4091.0496064983
16 4528.5053004295
17 4508.97991952851
18 4702.31813217644
19 4610.1504280341
};
\addlegendentry{Exponentially Decay-MBPO}
\addplot [very thick, green]
table {%
0 237.483873944119
1 459.907794654865
2 531.490392307569
3 622.836952422731
4 902.66296102875
5 1570.36992100984
6 2136.14816434008
7 2842.43054346432
8 2969.20299365593
9 3031.12209508637
10 3680.97011596772
11 3685.61688063076
12 3854.34453300844
13 3930.97403439106
14 4006.30947955272
15 3947.05472241867
16 3743.55785830321
17 3707.04761587335
18 3860.14333754743
19 4365.85138074733
};
\addlegendentry{MBPO}
\end{axis}

\end{tikzpicture}}
}
\hfil
\subfigure[Humanoid]{
\resizebox{0.22\textwidth}{!}{
\begin{tikzpicture}

\definecolor{darkgray176}{RGB}{176,176,176}
\definecolor{green}{RGB}{0,128,0}
\definecolor{lightgray204}{RGB}{204,204,204}

\begin{axis}[
legend cell align={left},
legend style={
  fill opacity=0.8,
  draw opacity=1,
  text opacity=1,
  at={(0.97,0.03)},
  anchor=south east,
  draw=lightgray204
},
tick align=outside,
tick pos=left,
x grid style={darkgray176},
xlabel={Steps},
xmajorgrids,
xmin=-1.45, xmax=30.45,
xtick style={color=black},
xtick={0,7.25,14.5,21.75,29},
xticklabels={0k,75k,150k,225k,300k},
y grid style={darkgray176},
ylabel={Returns},
ymajorgrids,
ymin=-33.9407561874706, ymax=6482.24794238218,
ytick style={color=black}
]
\path [draw=red, fill=red, opacity=0.2]
(axis cs:0,293.379759144462)
--(axis cs:0,262.249639202059)
--(axis cs:1,312.028911904168)
--(axis cs:2,352.296337344251)
--(axis cs:3,422.791058891066)
--(axis cs:4,486.406471405407)
--(axis cs:5,583.295073777022)
--(axis cs:6,778.579857961613)
--(axis cs:7,1228.21960035805)
--(axis cs:8,1702.98346170086)
--(axis cs:9,3277.84013854808)
--(axis cs:10,3634.42620414552)
--(axis cs:11,4704.07890292871)
--(axis cs:12,4884.35343595972)
--(axis cs:13,5082.07393905957)
--(axis cs:14,5268.38884676143)
--(axis cs:15,5433.33962764952)
--(axis cs:16,5315.10972451102)
--(axis cs:17,5118.49900215657)
--(axis cs:18,5503.88897652363)
--(axis cs:19,5200.47017190126)
--(axis cs:20,5203.18048886353)
--(axis cs:21,5771.17091453071)
--(axis cs:22,5342.08967444363)
--(axis cs:23,5665.86452175394)
--(axis cs:24,5468.72151860379)
--(axis cs:25,5124.49767200019)
--(axis cs:26,4886.5527230931)
--(axis cs:27,5770.43700716772)
--(axis cs:28,5499.14408104311)
--(axis cs:29,5717.78284163001)
--(axis cs:29,6047.84766156622)
--(axis cs:29,6047.84766156622)
--(axis cs:28,5961.38771723592)
--(axis cs:27,6186.05754699265)
--(axis cs:26,5927.56431352753)
--(axis cs:25,5790.42699149185)
--(axis cs:24,5831.53930627517)
--(axis cs:23,6001.24537991605)
--(axis cs:22,5741.87900368354)
--(axis cs:21,6094.53748990904)
--(axis cs:20,5631.08839924593)
--(axis cs:19,5617.63008411092)
--(axis cs:18,5903.43393403658)
--(axis cs:17,5582.4523606005)
--(axis cs:16,5768.45901127122)
--(axis cs:15,5707.31002799894)
--(axis cs:14,5765.76399807783)
--(axis cs:13,5514.85157720803)
--(axis cs:12,5190.1333497251)
--(axis cs:11,5169.3285706655)
--(axis cs:10,4102.88232263952)
--(axis cs:9,3842.13642491002)
--(axis cs:8,2451.76427665433)
--(axis cs:7,1599.01568480955)
--(axis cs:6,831.714367228558)
--(axis cs:5,648.959595524236)
--(axis cs:4,524.678762466443)
--(axis cs:3,465.063317970949)
--(axis cs:2,401.412982804575)
--(axis cs:1,351.259713470131)
--(axis cs:0,293.379759144462)
--cycle;

\path [draw=blue, fill=blue, opacity=0.2]
(axis cs:0,314.527623528434)
--(axis cs:0,278.499962614377)
--(axis cs:1,373.034927947535)
--(axis cs:2,435.690216655486)
--(axis cs:3,449.587152949852)
--(axis cs:4,527.68045562028)
--(axis cs:5,533.097080595914)
--(axis cs:6,596.390937572372)
--(axis cs:7,659.474270566417)
--(axis cs:8,593.12550799406)
--(axis cs:9,787.058330735919)
--(axis cs:10,867.666574539196)
--(axis cs:11,1041.40506284171)
--(axis cs:12,1353.6055186822)
--(axis cs:13,2067.06975732248)
--(axis cs:14,2234.33668154917)
--(axis cs:15,2584.15761499214)
--(axis cs:16,3026.55013259395)
--(axis cs:17,3798.4470540346)
--(axis cs:18,3613.03710847296)
--(axis cs:19,3779.21383255856)
--(axis cs:20,4127.67960800709)
--(axis cs:21,4838.41578640681)
--(axis cs:22,4880.9127111498)
--(axis cs:23,4797.49811232122)
--(axis cs:24,4875.95874132537)
--(axis cs:25,4338.03774811518)
--(axis cs:26,4453.29389743861)
--(axis cs:27,4870.51172015314)
--(axis cs:28,4609.73464280111)
--(axis cs:29,4811.15207775413)
--(axis cs:29,5357.43489955611)
--(axis cs:29,5357.43489955611)
--(axis cs:28,5282.87637094031)
--(axis cs:27,5787.25534236962)
--(axis cs:26,5715.05947569199)
--(axis cs:25,4995.35258322874)
--(axis cs:24,5375.5454553413)
--(axis cs:23,5366.44260403647)
--(axis cs:22,5398.81277529893)
--(axis cs:21,5324.81796318112)
--(axis cs:20,5043.92130273482)
--(axis cs:19,4987.71882671442)
--(axis cs:18,5115.81231314118)
--(axis cs:17,5356.55332123452)
--(axis cs:16,4016.43903231823)
--(axis cs:15,3245.07096244378)
--(axis cs:14,3224.76124893215)
--(axis cs:13,2535.61355321215)
--(axis cs:12,1699.9784198877)
--(axis cs:11,1192.64799337637)
--(axis cs:10,945.529747492206)
--(axis cs:9,902.05402128784)
--(axis cs:8,694.346127448603)
--(axis cs:7,712.709193796616)
--(axis cs:6,599.237397774394)
--(axis cs:5,569.236458101749)
--(axis cs:4,539.853112733289)
--(axis cs:3,463.254189687401)
--(axis cs:2,477.597440616901)
--(axis cs:1,389.864156867523)
--(axis cs:0,314.527623528434)
--cycle;

\path [draw=green, fill=green, opacity=0.2]
(axis cs:0,312.766488438832)
--(axis cs:0,276.444943517278)
--(axis cs:1,359.368445726084)
--(axis cs:2,419.772490057525)
--(axis cs:3,451.766892418066)
--(axis cs:4,538.06555199475)
--(axis cs:5,522.852014235481)
--(axis cs:6,598.365332583554)
--(axis cs:7,712.780809066453)
--(axis cs:8,574.69836686474)
--(axis cs:9,752.606655331269)
--(axis cs:10,843.409120443478)
--(axis cs:11,1006.58291295922)
--(axis cs:12,1202.44743027391)
--(axis cs:13,1230.25198763004)
--(axis cs:14,1740.37082661736)
--(axis cs:15,1761.47204710578)
--(axis cs:16,2094.066042739)
--(axis cs:17,3457.1807992324)
--(axis cs:18,3508.69373425103)
--(axis cs:19,3509.5330915972)
--(axis cs:20,4162.58688324627)
--(axis cs:21,4499.89398386713)
--(axis cs:22,4015.49053655406)
--(axis cs:23,4734.29766765579)
--(axis cs:24,4320.67516538324)
--(axis cs:25,4477.63206296084)
--(axis cs:26,3573.9879484993)
--(axis cs:27,4222.47951642472)
--(axis cs:28,3974.77050377151)
--(axis cs:29,4100.6442739705)
--(axis cs:29,4197.9404680367)
--(axis cs:29,4197.9404680367)
--(axis cs:28,4256.56545088219)
--(axis cs:27,5107.47447555125)
--(axis cs:26,4880.01174288247)
--(axis cs:25,5117.32969291002)
--(axis cs:24,4771.49232979211)
--(axis cs:23,5253.75632750508)
--(axis cs:22,4586.67899940075)
--(axis cs:21,5011.01808232051)
--(axis cs:20,5023.34091827134)
--(axis cs:19,4533.94167135461)
--(axis cs:18,4987.9879159872)
--(axis cs:17,5128.40965299155)
--(axis cs:16,3208.25695501193)
--(axis cs:15,2438.30577714736)
--(axis cs:14,2751.98700354815)
--(axis cs:13,1686.24727437964)
--(axis cs:12,1592.57858305173)
--(axis cs:11,1176.13479816358)
--(axis cs:10,931.152496918828)
--(axis cs:9,881.597146604548)
--(axis cs:8,685.396983723111)
--(axis cs:7,739.56711048735)
--(axis cs:6,601.21795852049)
--(axis cs:5,563.389492475073)
--(axis cs:4,549.251323471306)
--(axis cs:3,466.546747478691)
--(axis cs:2,463.427793271885)
--(axis cs:1,376.897565349574)
--(axis cs:0,312.766488438832)
--cycle;

\addplot [very thick, red]
table {%
0 277.273965210218
1 331.78764919218
2 375.375264751006
3 441.984303036159
4 506.7975588084
5 616.836778758905
6 805.09630061874
7 1403.85265546498
8 2086.66752802226
9 3569.1468895311
10 3869.78446787532
11 4932.01378993114
12 5034.35245616136
13 5298.49932752546
14 5520.85054882001
15 5574.04172938236
16 5551.62212665899
17 5338.52027279999
18 5713.14618512254
19 5426.93431091233
20 5424.4829989319
21 5933.6357935447
22 5537.43276573512
23 5824.96147828715
24 5664.41802978711
25 5477.36799046138
26 5451.96618287671
27 5984.83059145379
28 5723.62845109407
29 5881.78537019506
};
\addlegendentry{PMDL-MBPO}
\addplot [very thick, blue]
table {%
0 296.264040961684
1 381.76860518396
2 455.11739069058
3 456.806349983734
4 533.497142674959
5 550.842942320419
6 597.759690531354
7 686.0099307131
8 645.966570784109
9 839.514945074799
10 905.58691158792
11 1118.48493094903
12 1533.48303525386
13 2318.381932867
14 2723.13985071113
15 2907.99450725292
16 3546.38546005511
17 4612.66167998108
18 4401.26637775178
19 4429.26348705497
20 4630.49060714554
21 5088.01431797298
22 5142.59477055921
23 5100.65998244261
24 5133.42333021567
25 4665.05040856224
26 5077.56795642977
27 5309.48819254701
28 4935.90262383725
29 5071.39352300142
};
\addlegendentry{Exponentially Decay-MBPO}
\addplot [very thick, green]
table {%
0 294.044125116195
1 368.122999623854
2 440.056372909609
3 459.570806287108
4 543.790823284733
5 542.734149881577
6 599.7700114774
7 727.448378246857
8 623.306819527295
9 814.179002131876
10 887.916293246013
11 1096.03035029619
12 1398.38941254313
13 1457.77051922359
14 2223.60306602676
15 2085.05669242267
16 2684.51035712371
17 4317.05520669832
18 4269.36878076535
19 4042.19984051478
20 4607.49916533563
21 4749.13513150322
22 4295.13702234173
23 4994.28156404408
24 4562.19722957675
25 4813.55759584313
26 4263.2072220877
27 4689.30136282126
28 4124.25389479369
29 4148.00533279534
};
\addlegendentry{MBPO}
\end{axis}

\end{tikzpicture}}
}
\hfil
\subfigure[Walker2d one-step error]{
\resizebox{0.22\textwidth}{!}{\input{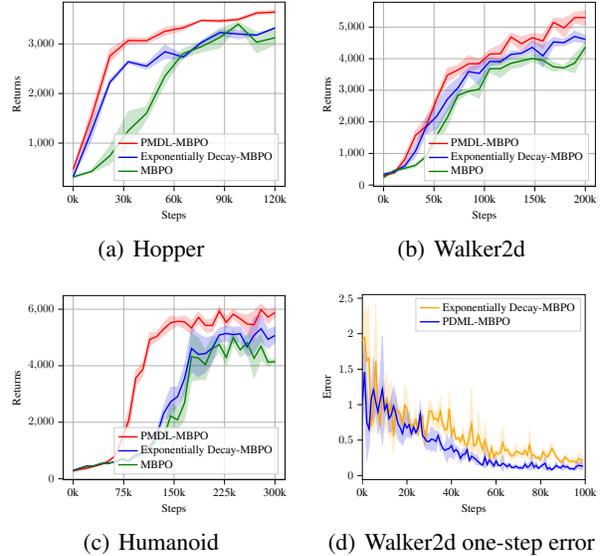}}
}
\caption{Comparison with exponentially decay prioritization.}
\label{ablation study: exponentially decay}
\end{figure}

\begin{table*}[!htb]
\setlength{\abovecaptionskip}{0.2cm}
\centering
\caption{Asymptotic performance of different exponentially decay rate.}
\begin{tabular}{c|cccccc}
\toprule 
 & decay rate 0.98 & decay rate 0.995 & decay rate 0.997 & decay rate 0.999 & MBPO & PDML\\
\midrule
Hopper & 3320.04 & 3291.93 & 3382.63 & 3374.52 & 3125.56 & \textbf{3641.07}  \\
Walker2d & 4609.29 & 4571.64 & 4643.15 & 4595.31 & 4366.37 & \textbf{5304.42} \\
Humanoid & 5070.58 & 5198.34 & 5092.56 & 5149.71 & 4148.15 & \textbf{5885.14} \\

\bottomrule
\end{tabular}
\label{tab: performance exponentially decay}
\end{table*}
\section{Related Work}

\textbf{Model adaptation.} \quad
Several adaptive control approaches \citep{{sastry1989adaptive},{pastor2011online},{meier2016towards}} aim to train a dynamics model that can adapt online. 
However, scaling such methods to complex tasks is exponentially difficult. 
Adaptive learning in the dynamics model has also been studied in inverse dynamics learning tasks.  
A drifting Gaussian process (GP) keeps a history of a constant number of recently observed data points and updates its hyper-parameters at each time step \citep{meier2016drifting}. 
The drifting Gaussian process (GP) predicts the local dynamics errors to control the learning rate \citep{meier2016towards}, resulting in more online hyperparameter learning and adaptive function approximator robustness. 
Our method is different from these works that we learn a forward model which can always adapt to the evolving policy.
Some studies focus on an adaptive model predictive control for constrained linear systems \citep{tanaskovic2013adaptive} and guaranteeing safety, robustness, and convergence in a quadrotor helicopter testbed \citep{aswani2012extensions}. 
Our work closely relates to a model adaptation in forward models from \cite{{fu2016one}, {nagabandi2018learning}, {nagabandi2018deep}, {lee2020context}, {guo2022relational}}. 
These methods use meta-learning to train a dynamics model as a prior and then combine it with recent data to rapidly adapt to the new task. 
However, these works are mainly about model transfer under different dynamics.
Different from their works, we study learning an accurate dynamics model for policy learning under a fixed transition dynamics, and we also provide theoretical analysis to motivate our method.
More related works about model-based RL are provided in Appendix~\ref{app:additional-relatedworks}.

\textbf{Prioritized experience replay.} \quad 
Another related line of work is prioritized experience replay in reinforcement learning.
This solves a classic issue in model-free RL.
Previous work \cite{katharopoulos2018not} claimed that emphasizing essential samples in the replay buffer can benefit off-policy RL algorithms.
Prioritized Experience Replay (PER) \cite{schaul2016prioritized} measured the importance of sample by temporal-difference (TD) error.
Based on this work, many methods are proposed to perform prioritized sampling.
Some methods \citep{{brittain2019prioritized},{lee2019sample},{fujimoto2020equivalence},{jiang2021prioritized},{liu2021regret},{lahire2021large},{oh2022modelaugmented}} extend or explain PER from different perspectives, and others \citep{{novati2019remember},{fedus2020revisiting}} propose to prioritize samples according to their age.
Our work is different from experience replay works in model-free RL in the following points:
\textbf{(1)} In model learning, we re-weight the state-action visitation distribution that generates a batch of samples, rather than a single sample as in model-free RL.
\textbf{(2)} During weighting, we use the distance between the policy distribution that each sample generated from and the policy distribution of the current policy as a metric, rather than how much improvement each sample can bring to the policy.
\textbf{(3)} We provide very detailed theoretical result to analyze how to reweight the samples for model learning.

\section{Conclusion and Discussion}
\label{sec:dis}
In this paper, we introduce a novel dynamics model learning method for model-based RL called PDML, which learns a policy-adapted dynamics model based on a dynamically adjusted historical policy mixture distribution.
This policy-adapted dynamics model can continually adapt to the state-action visitation distribution of the evolving policy.
This makes it more accurate than the previous dynamics model when making predictions during model rollouts.
We also provide theoretical analysis and experimental results to motivate our method.
After combining with the state-of-the-art model-based method MBPO, PDML achieves better asymptotic performance and higher sample efficiency than previous state-of-the-art model-based methods in MuJoCo.
We believe our work takes an important step toward more sample-efficient RL.
One limitation of our work is that the generalization ability of the policy-adapted dynamics model may not be strong enough because we focus on fitting the samples induced by the evolving policy to improve the convergence speed of the policy. 
Therefore, our method is efficient for task-specific problems but may not perform well for some exploration-oriented tasks.
We leave this direction to future work.

\section*{Acknowledgement}
Wang, Wongkamjan and Huang are supported by National Science Foundation NSF-IIS-FAI program, DOD-ONR-Office of Naval Research, DOD Air Force Office of Scientific Research, DOD-DARPA-Defense Advanced Research Projects Agency Guaranteeing AI Robustness against Deception (GARD), Adobe, Capital One and JP Morgan faculty fellowships.

\bibliography{include/reference}
\bibliographystyle{icml2023}

\newpage
\appendix
\onecolumn
\begin{center}
\icmltitle{Appendix}
\end{center}
\section{Pseudo Code of PDML-MBPO}\label{app:pdml_mbpo}
In Algorithm \ref{Alg_PDML-MBPO}, we demonstrate the pseudo code of PDML-MBPO.
\begin{algorithm}[htbp]
\caption{PDML-MBPO}
\label{Alg_PDML-MBPO}
\begin{algorithmic}[1]
\REQUIRE current policy proportion hyperparameter $\alpha$, interaction epochs $I$, rollout horizon $h$ \
\STATE Initialize historical policy sequence $k\gets 0, \Pi^{k} \gets \emptyset$
\FOR{$I$ epochs}
    \STATE Interact with the environment using current policy $\pi_c$, add samples into real sample buffer $\mathbb{D}_e$ \ 
    \STATE Add current policy $\pi_c$ into historical policy sequence: $\pi_{k}\gets \pi_c$, $\Pi^{k} \gets \{\Pi^{k-1}, \pi_k\}$ \
    \STATE Adjust the historical policy mixture distribution $\bm{w}^k=[w_1^k,\ldots,w_k^k]$ via Equation~(\ref{policy_weight}) and (\ref{current_policy_weight}) \
    \STATE Normalize $\bm{w}_k \gets {\bm{w}_k}/{\lVert \bm{w}_k \rVert}$
    \STATE Sample a training data batch of $(s_n,a_n,r,s_{n+1})$ from $\mathbb{D}_e$ according to $\bm{w}^k$ \ 
    \STATE Train dynamics model $\hat{T}_\theta$ via Equation~(\ref{model_loss})
    \FOR{$M$ model rollouts}
        \STATE Sample initial rollout states from real sample buffer $\mathbb{D}_e$ according to $\bm{w}^k$ \
        \STATE Use current policy $\pi_c$ to perform $h$-step model rollouts, add model-generated samples into model sample buffer $\mathbb{D}_m$ \
    \ENDFOR
    \FOR{$G$ gradient updates}
        \STATE Update current policy $\pi_c$ using model-generated samples from model sample buffer $\mathbb{D}_m$ \
    \ENDFOR
    \STATE $k\gets k+1$
\ENDFOR
\end{algorithmic}
\end{algorithm}
\section{Additional Related Work}\label{app:additional-relatedworks}

\textbf{Model-based reinforcement learning.}
Model-based RL is proposed as a solution to reduce the sample complexity of model-free RL by learning a dynamics model.
Current model-based RL mainly focuses on better model learning and better model usage.
To learn a model with more accuracy, many model architectures have been proposed, such as linear models \cite{{parr2008analysis},{sutton2008dyna},{kumar2016optimal}} and nonparametric Gaussian processes \cite{{rasmussen2004gaussian},{deisenroth2011pilco}}.
With the rapid development of deep learning, neural networks have become a popular choice of model architecture in recent years \cite{{kurutach2018model},{chua2018deep}}.
Moreover, to reduce the model error, a multi-step model \cite{asadi2019combating} was designed to directly predict the transition of an action sequence input, 
and \citet{shen2020model} used unsupervised model adaptation to reduce the potential data distribution mismatch.
For better model usage, \citet{janner2019trust} proved that short model rollouts could avoid the model error and improve the quality of model samples. 
Based on this, \citet{29} proposed a bidirectional model rollout scheme to avoid the model error further. 
Furthermore, model disagreement was used to decide when to trust the model \cite{{pan2020trust}} and regularize the model samples \cite{{yu2020mopo}}.
Besides, 
\citet{luo2018algorithmic} provided a theoretical guarantee of monotone expected reward improvement of model-based RL.
\citet{rajeswaran2020game} cast model-based RL as a game-theoretic framework by formulating the optimization of model and policy as a two-player game.
To save time tuning hyperparameters, \citet{lai2021effective} designed an automatic scheduling framework.
\citet{abbas2020selective} systematically studied how the model capacity affects the model-based methods.
\citet{sun2022transfer} investigated how to use dynamics models to improve the sample efficiency of policy learning when observation space changes.

\textbf{Value-equivalence dynamics model.}
Value-equivalence dynamics model has been noted by several authors in recent years.
Since learning an accurate dynamics model of the world remains challenging and often requires computationally costly and data-hungry models \citep{lovatto2020decision}, 
 \citet{farahmand2017value} proposed value-aware model learning which aims to learn a value-equivalence model that induces the same Bellman operator as the real environment, rather than accurately predicting transitions.
However, they replaced the value function with the supremum over a function space, and it is difficult to find a supremum for a function space parameterized by complex function approximators like neural networks.
Based on this work, \citet{farahmand2018iterative} proposed Iterative Value-Aware Model Learning (IterVAML) which replaced the supremum over a value function space with the value function at current iteration.
Besides, \citet{grimm2020value} introduced value equivalence principle and analysed how the space of possible solutions on model learning is impacted by the choice of policies and functions, 
and \citet{zheng2023model} provides a deep insight on why model ensemble performs well based on value equivalence principle.
However, despite very detailed theoretical guarantees, there is still a performance gap between the value-equivalence dynamics model in the practical implementation and the model trained by the maximum likelihood estimate \citep{lovatto2020decision}.
\citet{eysenbach2021mismatched} introduced a novel objective to jointly train the model and the policy.
\citet{voelcker2022value} proposed Value-Gradient weighted Model loss (VaGraM) which approximated the value-aware model loss function with a Taylor expansion of value function and achieved SOTA performance across all value-aware model learning methods.
Like our method, VaGraM also tries to learn a locally accurate dynamics model. 
The difference is that our method aims to learn the samples that the current policy may encounter as accurately as possible,
while VaGraM is to learn the dimensions in the state that can bring the greatest improvement to policy learning. 
Experimental results demonstrate that our method outperforms VaGraM in practice.

\section{Useful lemma}

\begin{lemma}\label{lemma_1}\cite{shen2020model}
Assume the initial state distributions of the real dynamics $T$ and the learned dynamics model $\hat{T}$ are the same.
For any state $s^{\prime}$, assume $\mathcal{F}_{s^{\prime}}$ is a class of real-valued bounded measurable functions on state-action space,
such that $\hat{T}(s^{\prime}| \cdot, \cdot): \mathcal{S} \times \mathcal{A} \rightarrow \mathbb{R} $ is in $\mathcal{F}_{s^{\prime}}$.
Then the gap between two different state visitation distributions $v^{\pi_1}_T(s^\prime)$ and $v^{\pi_2}_{\hat{T}}(s^\prime)$ can be bounded as follows:

\begin{equation}
|v^{\pi_1}_T(s^\prime)- v^{\pi_2}_{\hat{T}}(s^\prime)| \leq \gamma \mathbb{E}_{(s,a) \sim \rho^{\pi_1}_{T}}{ |T(s^\prime|s,a) - \hat{T}(s^\prime|s,a)|} + \gamma d_{\mathcal{F}_{s^{\prime}}}(\rho^{\pi_1}_{T}, \rho^{\pi_2}_{\hat{T}})
\end{equation}
\end{lemma}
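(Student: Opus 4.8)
The plan is to exploit the recursive (Bellman flow) characterization of the discounted state occupancy measure and then split the resulting difference into a model-error piece and an occupancy-mismatch piece. First I would write, for the common initial distribution $d_0$, the flow identities $v^{\pi_1}_T(s^\prime) = (1-\gamma)d_0(s^\prime) + \gamma \int T(s^\prime|s,a)\,\rho^{\pi_1}_T(s,a)\,ds\,da$ and $v^{\pi_2}_{\hat{T}}(s^\prime) = (1-\gamma)d_0(s^\prime) + \gamma \int \hat{T}(s^\prime|s,a)\,\rho^{\pi_2}_{\hat{T}}(s,a)\,ds\,da$, where $\rho^\pi(s,a) = v^\pi(s)\pi(a|s)$ is the induced state-action occupancy. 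Because the initial state distributions agree by assumption, subtracting the two identities cancels the $(1-\gamma)d_0(s^\prime)$ term and leaves only the $\gamma$-scaled transition integrals.

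The key algebraic step is to insert a cross term. I would add and subtract $\gamma \int \hat{T}(s^\prime|s,a)\,\rho^{\pi_1}_T(s,a)\,ds\,da$, which pairs the \emph{learned} transition kernel with the \emph{real} occupancy. Grouping terms and applying the triangle inequality then yields two contributions: $\gamma \int |T(s^\prime|s,a) - \hat{T}(s^\prime|s,a)|\,\rho^{\pi_1}_T(s,a)\,ds\,da$, which is exactly $\gamma \mathbb{E}_{(s,a)\sim\rho^{\pi_1}_T}|T(s^\prime|s,a)-\hat{T}(s^\prime|s,a)|$, and the residual $\gamma \,\bigl|\int \hat{T}(s^\prime|s,a)\,(\rho^{\pi_1}_T(s,a)-\rho^{\pi_2}_{\hat{T}}(s,a))\,ds\,da\bigr|$.

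It remains to control the second contribution by the integral probability metric, and here the role of the cross-term choice becomes clear: since the hypothesis assumes $\hat{T}(s^\prime|\cdot,\cdot)\in\mathcal{F}_{s^\prime}$, the function being integrated against the signed measure $\rho^{\pi_1}_T-\rho^{\pi_2}_{\hat{T}}$ is an admissible test function, so the integral is bounded by $\sup_{f\in\mathcal{F}_{s^\prime}}\bigl|\int f\,(\rho^{\pi_1}_T-\rho^{\pi_2}_{\hat{T}})\bigr| = d_{\mathcal{F}_{s^\prime}}(\rho^{\pi_1}_T,\rho^{\pi_2}_{\hat{T}})$. Combining the two bounds gives the claim. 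I expect the main subtlety to be bookkeeping rather than depth: one must fix a consistent normalization for $v^\pi$ and $\rho^\pi$ so that the single factor of $\gamma$ appears in front of both terms, and one must take the cross term as $\hat{T}$-with-$\rho^{\pi_1}_T$ (not $T$-with-$\rho^{\pi_2}_{\hat{T}}$), because only $\hat{T}(s^\prime|\cdot,\cdot)$ is guaranteed to lie in $\mathcal{F}_{s^\prime}$; the reverse pairing would require $T(s^\prime|\cdot,\cdot)\in\mathcal{F}_{s^\prime}$, which is not assumed.
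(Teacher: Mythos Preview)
Your proposal is correct and follows essentially the same approach as the paper: write the Bellman-flow identity for $v^\pi$, cancel the shared initial term, insert the cross term $\gamma\int \hat{T}(s'|s,a)\rho^{\pi_1}_T(s,a)\,ds\,da$, apply the triangle inequality, and bound the second piece by the IPM using $\hat{T}(s'|\cdot,\cdot)\in\mathcal{F}_{s'}$. Your remark on why the cross term must pair $\hat{T}$ with $\rho^{\pi_1}_T$ (rather than $T$ with $\rho^{\pi_2}_{\hat{T}}$) is a helpful clarification that the paper leaves implicit.
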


\begin{proof}
For any state visitation distribution $v^{\pi}_T$, we have:

\begin{equation}
v^{\pi}_T(s^\prime) = (1-\gamma)v_0(s^\prime) + \gamma \int_{(s,a)} \rho^{\pi}_T(s,a) T(s^\prime|s,a) {\rm d}s{\rm d}a,
\end{equation}

where $v_0$ is the probability of the initial state being the state $s^\prime$. Then the gap between two different state visitation distributions is:

\begin{equation}
\begin{aligned}
& |v^{\pi_1}_T(s^\prime)- v^{\pi_2}_{\hat{T}}(s^\prime)| \\
= &  \gamma \left|\int_{(s,a)} \rho^{\pi_1}_T(s,a) T(s^\prime|s,a) - \rho^{\pi_2}_{\hat{T}(s,a)} \hat{T}(s^\prime|s,a) {\rm d}s{\rm d}a \right| \\
= &  \gamma \left|\mathbb{E}_{(s,a) \sim \rho^{\pi_1}_{T}}[T(s^\prime|s,a)] - \mathbb{E}_{(s,a) \sim \rho^{\pi_2}_{\hat{T}}}[\hat{T}(s^\prime|s,a)] \right| \\
\leq &  \gamma \left|\mathbb{E}_{(s,a) \sim \rho^{\pi_1}_{T}}[T(s^\prime|s,a) - \hat{T}(s^\prime|s,a)] \right| + \gamma \left| \mathbb{E}_{(s,a) \sim \rho^{\pi_1}_{T}}[\hat{T}(s^\prime|s,a)] - \mathbb{E}_{(s,a) \sim \rho^{\pi_2}_{\hat{T}}}[\hat{T}(s^\prime|s,a)] \right| \\
\leq &  \gamma \mathbb{E}_{(s,a) \sim \rho^{\pi_1}_{T}} {|T(s^\prime|s,a) - \hat{T}(s^\prime|s,a)|} + \gamma d_{\mathcal{F}_{s^{\prime}}}(\rho^{\pi_1}_{T}, \rho^{\pi_2}_{\hat{T}})
\end{aligned}
\end{equation}

\end{proof}

\section{Proof of main theorem}
\label{proof_main}

\begin{theorem}
Given the historical policy mixture $\pi_{\text{mix}, k} = (\Pi^k, \bm{w}^k)$ at iteration step $k$, we denote $\xi_{\rho_i} = D_{TV}(\rho^{\pi}_{T}(s,a) || \rho^{\pi_i}_{T}(s,a))$ and $\xi_{\pi_i} = \mathbb{E}_{s \sim v^{\pi_{\text{mix}}}_{\hat{T}}} \left[ D_{TV}(\pi(a|s)|| \pi_i(a|s)) \right]$ as the state-action visitation distribution shift and the policy distribution shift between the historical policy $\pi_i$ and current policy $\pi$ respectively, where $v^{\pi_{\text{mix}}}_{\hat{T}}$ is the state visitation distribution of policy mixture under the learned dynamics model.
$r_{\text{max}}$ is the maximum reward the policy can get from the real environment, $\gamma$ is the discount factor, and ${\rm Vol}(\mathcal{S})$ is the volume of state space. 
Then the performance gap between the real environment rollout $J(\pi, T)$ and the model rollout $J(\pi, \hat{T})$ can be bounded as follows:

\begin{equation}\label{main_theory_appendix}
\begin{aligned}
 J(\pi, T)  - J(\pi, \hat{T}) \leq \ & 2 \gamma r_{\text{max}} \mathbb{E}_{(s,a) \sim \rho^{\pi}_{T} }[D_{TV} (T(s^\prime|s,a) || \hat{T}(s^\prime|s,a))] \\
&  + r_{\text{max}} \sum^{k}_{i=0} w^k_i (\gamma {\rm Vol}(\mathcal{S}) \xi_{\rho_i} + 2 \xi_{\pi_i}) \\
 &  + 2 r_{\text{max}} D_{TV}(\rho^{\pi_{\text{mix}}}_{\hat{T}}(s,a) || \rho^{\pi}_{\hat{T}}(s,a))\\
\end{aligned}
\end{equation}

\end{theorem}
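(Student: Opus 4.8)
The plan is to express the performance gap entirely in terms of (discounted) state visitation distributions and then bound it by $r_{\text{max}}$ times an $L^1$ distance between visitation measures. Writing $J(\pi,T)=\frac{1}{1-\gamma}\mathbb{E}_{s\sim v^{\pi}_{T}}\mathbb{E}_{a\sim\pi(\cdot|s)}[r(s,a)]$ and likewise for $\hat{T}$, and noting that the policy $\pi$ is common to both terms, the integrand $\mathbb{E}_{a\sim\pi}[r(s,a)]$ is bounded by $r_{\text{max}}$, so $J(\pi,T)-J(\pi,\hat{T})\le r_{\text{max}}\int |v^{\pi}_{T}(s')-v^{\pi}_{\hat{T}}(s')|\,ds'$ (the normalization constant being absorbed into the reward scale). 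Everything then reduces to controlling the $L^1$ gap between the current policy's state visitation under the true dynamics and under the learned model.

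First I would insert the policy-mixture visitation under the model as an intermediate via the triangle inequality: $\int|v^{\pi}_{T}-v^{\pi}_{\hat{T}}|\le \int|v^{\pi}_{T}-v^{\pi_{\text{mix}}}_{\hat{T}}|+\int|v^{\pi_{\text{mix}}}_{\hat{T}}-v^{\pi}_{\hat{T}}|$. The second piece compares the mixture and the current policy under the \emph{same} (learned) dynamics; since state visitation is a marginal of the state-action visitation, the data-processing inequality for total variation gives $\int|v^{\pi_{\text{mix}}}_{\hat{T}}-v^{\pi}_{\hat{T}}|=2D_{TV}(v^{\pi_{\text{mix}}}_{\hat{T}}\|v^{\pi}_{\hat{T}})\le 2D_{TV}(\rho^{\pi_{\text{mix}}}_{\hat{T}}\|\rho^{\pi}_{\hat{T}})$, which is exactly the third term. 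For the first piece I would apply Lemma~\ref{lemma_1} with $\pi_1=\pi$ and $\pi_2=\pi_{\text{mix}}$ and integrate the pointwise bound over $s'$; the transition-error part integrates to $2\gamma\,\mathbb{E}_{(s,a)\sim\rho^{\pi}_{T}}[D_{TV}(T\|\hat{T})]$, giving the first term, and there remains the integrated IPM term $\gamma\int_{s'} d_{\mathcal{F}_{s'}}(\rho^{\pi}_{T},\rho^{\pi_{\text{mix}}}_{\hat{T}})\,ds'$.

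It remains to turn this IPM term into the per-policy shifts. Bounding each $d_{\mathcal{F}_{s'}}$ by $2D_{TV}(\rho^{\pi}_{T},\rho^{\pi_{\text{mix}}}_{\hat{T}})$ (using that the functions in $\mathcal{F}_{s'}$, including $\hat{T}(s'|\cdot,\cdot)$, are uniformly bounded) and integrating the resulting $s'$-independent bound over the state space produces the factor $\gamma\,{\rm Vol}(\mathcal{S})$. I would then peel apart $D_{TV}(\rho^{\pi}_{T},\rho^{\pi_{\text{mix}}}_{\hat{T}})$: writing $\rho^{\pi}_{T}=\sum_i w^k_i\rho^{\pi}_{T}$ and $\rho^{\pi_{\text{mix}}}_{\hat{T}}=\sum_i w^k_i\rho^{\pi_i}_{\hat{T}}$, convexity of $|\cdot|$ extracts the weights $w^k_i$ and reduces matters to $\sum_i w^k_i D_{TV}(\rho^{\pi}_{T},\rho^{\pi_i}_{\hat{T}})$; a further split into a real-dynamics visitation shift $\xi_{\rho_i}=D_{TV}(\rho^{\pi}_{T}\|\rho^{\pi_i}_{T})$ and a policy shift obtained by factoring $\rho=v\cdot\pi$ and integrating out the action marginal against $v^{\pi_{\text{mix}}}_{\hat{T}}$ yields the $\xi_{\pi_i}$ contribution. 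Collecting these produces the $\sum_i w^k_i(\gamma{\rm Vol}(\mathcal{S})\xi_{\rho_i}+2\xi_{\pi_i})$ term.

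\textbf{Main obstacle.} The delicate step is the conversion of the integral probability metric $d_{\mathcal{F}_{s'}}$ into a total variation distance together with the bookkeeping that produces the $\gamma\,{\rm Vol}(\mathcal{S})$ factor: this requires a uniform bound on the function class $\mathcal{F}_{s'}$ across all $s'$, and ${\rm Vol}(\mathcal{S})$ arises only because the $s'$-integral of an $s'$-independent total variation bound is the volume of the state space. Equally delicate is routing $\xi_{\rho_i}$ and $\xi_{\pi_i}$ through two different parts of the decomposition so that the former inherits the $\gamma{\rm Vol}(\mathcal{S})$ factor while the latter carries only the factor $2$, and so that the mixture weights $w^k_i$ and the mixture state visitation $v^{\pi_{\text{mix}}}_{\hat{T}}$ appearing in the definition of $\xi_{\pi_i}$ emerge with exactly the stated constants; some care (or mild slack) is needed here, since the naive action-peeling step more directly produces the per-policy visitations $v^{\pi_i}_{\hat{T}}$ rather than the mixture visitation.
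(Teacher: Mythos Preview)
Your high-level structure and the use of Lemma~\ref{lemma_1} match the paper, but the place where you try to manufacture the $\xi_{\pi_i}$ contribution is exactly where your route diverges from the paper's and where your stated obstacle bites. The paper does \emph{not} extract $\xi_{\pi_i}$ from the IPM piece. Instead, it inserts the mixture policy one step earlier, at the level of expected returns: it adds and subtracts $J(\pi_{\text{mix}},\hat{T})$, so that the first term is $\big|\int(\rho^{\pi}_{T}-\rho^{\pi_{\text{mix}}}_{\hat{T}})\,r\,ds\,da\big|$ with \emph{different policies} on the two state--action occupancies. Writing $\rho=v\cdot\pi$ and adding/subtracting $v^{\pi_{\text{mix}}}_{\hat{T}}(s)\,\pi(a|s)$ then cleanly splits this into a state-visitation part $r_{\text{max}}\int_s|v^{\pi}_{T}-v^{\pi_{\text{mix}}}_{\hat{T}}|$ (to which Lemma~\ref{lemma_1} is applied, producing the model-error term and the $\gamma\,{\rm Vol}(\mathcal{S})\xi_{\rho_i}$ contribution) and a pure action part $2r_{\text{max}}\,\mathbb{E}_{s\sim v^{\pi_{\text{mix}}}_{\hat{T}}}[D_{TV}(\pi\|\pi_{\text{mix}})]\le 2r_{\text{max}}\sum_i w^k_i\xi_{\pi_i}$ by convexity. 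This action part is precisely why the mixture state visitation $v^{\pi_{\text{mix}}}_{\hat{T}}$ and the bare factor $2$ appear in $\xi_{\pi_i}$; no $\gamma\,{\rm Vol}(\mathcal{S})$ ever touches it. In short, the $\rho=v\cdot\pi$ factoring must happen \emph{before} Lemma~\ref{lemma_1}, not after, and that dissolves your obstacle.

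Two further remarks. First, you are missing an assumption the paper makes explicitly: after bounding $d_{\mathcal{F}_{s'}}$ by $D_{TV}(\rho^{\pi}_{T},\rho^{\pi_{\text{mix}}}_{\hat{T}})$ one still has $\hat{T}$ on the second argument, whereas $\xi_{\rho_i}$ is defined with the true dynamics $T$ on both sides; the paper closes this gap by assuming the model is well-fit on the mixture data, i.e.\ $\rho^{\pi_{\text{mix}}}_{\hat{T}}\approx\rho^{\pi_{\text{mix}}}_{T}$, and any route needs the same device. Second, since you began from $J(\pi,T)-J(\pi,\hat{T})$ with the \emph{same} policy on both sides and reduced immediately to state visitations, your decomposition (carried through correctly with the assumption above) in fact yields the stated bound \emph{without} the $2\sum_i w^k_i\xi_{\pi_i}$ summand, i.e.\ something strictly tighter; the theorem then follows by adding a nonnegative term. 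So your attempt to recover $\xi_{\pi_i}$ from inside the IPM bound was not only unworkable with the stated constants, it was unnecessary.
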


\begin{proof}
\begin{equation}
\begin{aligned}
& \left| J(\pi, T) - J(\pi, \hat{T}) \right| \\
= &  \left| J(\pi, T) - J(\pi_{\text{mix}}, \hat{T}) + J(\pi_{\text{mix}}, \hat{T}) - J(\pi, \hat{T}) \right| \\
\leq &  \underbrace{\left| \int_{(s,a)} (\rho^{\pi}_{T}(s,a) - \rho^{\pi_{\text{mix}}}_{\hat{T}}(s,a)) r(s,a) {\rm d}s{\rm d}a \right|}_{term 1} + \underbrace{\left| \int_{(s,a)} (\rho^{\pi_{\text{mix}}}_{\hat{T}}(s,a) - \rho^{\pi}_{\hat{T}}(s,a)) r(s,a) {\rm d}s{\rm d}a \right|}_{term 2} \\
\end{aligned}
\end{equation}

For term 1:
\begin{equation}\label{proof_term1}
\begin{aligned}
& \left| \int_{(s,a)} (\rho^{\pi}_{T}(s,a) - \rho^{\pi_{\text{mix}}}_{\hat{T}}(s,a)) r(s,a) {\rm d}s{\rm d}a \right| \\
= & \left| \int_{(s,a)} (v^{\pi}_{T}(s) \pi(a|s) - v^{\pi_{\text{mix}}}_{\hat{T}}(s) \pi_{\text{mix}}(a|s)) r(s,a) {\rm d}s{\rm d}a \right| \\
= & \left| \int_{(s,a)} (v^{\pi}_{T}(s) \pi(a|s) - v^{\pi_{\text{mix}}}_{\hat{T}}(s) \pi(a|s) + v^{\pi_{\text{mix}}}_{\hat{T}}(s) \pi(a|s) - v^{\pi_{\text{mix}}}_{\hat{T}}(s) \pi_{\text{mix}}(a|s)) r(s,a) {\rm d}s{\rm d}a \right| \\
\leq & \left| \int_{(s,a)} (v^{\pi}_{T}(s) - v^{\pi_{\text{mix}}}_{\hat{T}}(s)) \pi(a|s) r(s,a) {\rm d}s{\rm d}a \right| + \left| \int_{(s,a)} (v^{\pi_{\text{mix}}}_{\hat{T}}(s) (\pi(a|s) - \pi_{\text{mix}}(a|s)) r(s,a) {\rm d}s{\rm d}a \right| \\
\leq & r_{\text{max}} \int_s \left| v^{\pi}_{T}(s) - v^{\pi_{\text{mix}}}_{\hat{T}}(s) \right| {\rm d}s + 2 r_{\text{max}} \mathbb{E}_{s \sim v^{\pi_{\text{mix}}}_{\hat{T}}} \left[ D_{TV}(\pi(a|s)||\pi_{\text{mix}}(a|s)) \right] \\
\end{aligned}
\end{equation}

For the first term of last inequality in Eq. \ref{proof_term1}, according to Lemma. \ref{lemma_1} we have:
\begin{equation}\label{eq_mid1}
\begin{aligned}
& r_{\text{max}} \int_s \left| v^{\pi}_{T}(s) - v^{\pi_{\text{mix}}}_{\hat{T}}(s) \right| {\rm d}s \\
\leq & r_{\text{max}} \gamma \mathbb{E}_{(s,a) \sim \rho^{\pi}_{T} } \int_{s^\prime} \left| T(s^\prime|s,a) - \hat{T}(s^\prime|s,a) \right| {\rm d}{s^\prime} + r_{\text{max}} \gamma \int_{s^\prime} d_{\mathcal{F}_{s^{\prime}}}(\rho^{\pi}_{T}, \rho^{\pi^*}_{\hat{T}}) {\rm d}{s^\prime} \\
\end{aligned}
\end{equation}

We use total variance distance as the $\mathcal{F}_{s^{\prime}}$ to measure the distance between $\rho^{\pi}_{T}$ and  $\rho^{\pi_{\text{mix}}}_{\hat{T}}$. 
Suppose we can learn a dynamics model that can perfectly adapt the state-action visitation distribution of $\pi_{\text{mix}}$, which means the difference between the model prediction and the environment next state $s^\prime$ is very small, and the state-action visitation density induced by the learned dynamics model $\rho^{\pi_{\text{mix}}}_{\hat{T}}$ is approximately equal to $\rho^{\pi_{\text{mix}}}_{{T}}$.
This assumption is required by many model-based RL methods \cite{voelcker2022value}.
Then Eq. \ref{eq_mid1} can be expressed as:

\begin{equation}\label{eq_mid2}
\begin{aligned}
& r_{\text{max}} \int_s \left| v^{\pi}_{T}(s) - v^{\pi_{\text{mix}}}_{\hat{T}}(s) \right| {\rm d}s \\
\leq & r_{\text{max}} \gamma \mathbb{E}_{(s,a) \sim \rho^{\pi}_{T} } \int_{s^\prime} \left| T(s^\prime|s,a) - \hat{T}(s^\prime|s,a) \right| {\rm d}{s^\prime} + r_{\text{max}} \gamma \int_{s^\prime} D_{TV}(\rho^{\pi}_{T} || \rho^{\pi_{\text{mix}}}_{T}) {\rm d}{s^\prime} \\
\leq & 2 \gamma r_{\text{max}} \mathbb{E}_{(s,a) \sim \rho^{\pi}_{T} }[D_{TV} (T(s^\prime|s,a) || \hat{T}(s^\prime|s,a))] + \gamma {\rm Vol}(\mathcal{S}) r_{\text{max}}  D_{TV}(\rho^{\pi}_{T} || \rho^{\pi_{\text{mix}}}_{T})
\end{aligned}
\end{equation}

Combined Eq. \ref{proof_term1} with Eq. \ref{eq_mid2}, we can get:

\begin{equation}\label{eq_mid3}
\begin{aligned}
& \left| \int_{(s,a)} (\rho^{\pi}_{T}(s,a) - \rho^{\pi_{\text{mix}}}_{\hat{T}}(s,a)) r(s,a) {\rm d}s{\rm d}a \right| \\
\leq & 2 \gamma r_{\text{max}} \mathbb{E}_{(s,a) \sim \rho^{\pi}_{T} }[D_{TV} (T(s^\prime|s,a) || \hat{T}(s^\prime|s,a))] + \gamma {\rm Vol}(\mathcal{S}) r_{\text{max}}  D_{TV}(\rho^{\pi}_{T}(s,a) || \rho^{\pi_{\text{mix}}}_{T}(s,a)) \\
\quad & + 2 r_{\text{max}} \mathbb{E}_{s \sim v^{\pi_{\text{mix}}}_{\hat{T}}} \left[ D_{TV}(\pi(a|s)||\pi_{\text{mix}}(a|s)) \right] \\
= & 2 \gamma r_{\text{max}} \mathbb{E}_{(s,a) \sim \rho^{\pi}_{T} }[D_{TV} (T(s^\prime|s,a) || \hat{T}(s^\prime|s,a))] + \gamma {\rm Vol}(\mathcal{S}) r_{\text{max}} D_{TV}(\rho^{\pi}_{T}(s,a) || \sum^{k}_{i=0} w_i \rho^{\pi_i}_{T}(s,a)) \\
\quad & + 2 r_{\text{max}} \mathbb{E}_{s \sim v^{\pi_{\text{mix}}}_{\hat{T}}} \left[ D_{TV}(\pi(a|s)||\sum^{k}_{i=0} w_i \pi_i(a|s)) \right] \\
= &  2 \gamma r_{\text{max}} \mathbb{E}_{(s,a) \sim \rho^{\pi}_{T} }[D_{TV} (T(s^\prime|s,a) || \hat{T}(s^\prime|s,a))] + \gamma {\rm Vol}(\mathcal{S}) r_{\text{max}} \sum^{k}_{i=0} w_i D_{TV}(\rho^{\pi}_{T}(s,a) || \rho^{\pi_i}_{T}(s,a)) \\
\quad & + 2 r_{\text{max}} \sum^{k}_{i=0} w_i  \mathbb{E}_{s \sim v^{\pi_{\text{mix}}}_{\hat{T}}} \left[ D_{TV}(\pi(a|s)|| \pi_i(a|s)) \right] \\
\end{aligned}
\end{equation}

Finally, based on Eq. \ref{eq_mid3}, we get:

\begin{equation}\label{final_proof}
\begin{aligned}
& \left| J(\pi, T) - J(\pi, \hat{T}) \right| \\
\leq & 2 \gamma r_{\text{max}} \mathbb{E}_{(s,a) \sim \rho^{\pi}_{T} }[D_{TV} (T(s^\prime|s,a) || \hat{T}(s^\prime|s,a))] + \gamma {\rm Vol}(\mathcal{S}) r_{\text{max}} \sum^{k}_{i=0} w^k_i D_{TV}(\rho^{\pi}_{T}(s,a) || \rho^{\pi_i}_{T}(s,a)) \\
\quad & + 2 r_{\text{max}} \sum^{k}_{i=0} w^k_i  \mathbb{E}_{s \sim v^{\pi_{\text{mix}}}_{\hat{T}}} \left[ D_{TV}(\pi(a|s)|| \pi_i(a|s)) \right] + 2 r_{\text{max}} D_{TV}(\rho^{\pi_{\text{mix}}}_{\hat{T}}(s,a) || \rho^{\pi}_{\hat{T}}(s,a))\\
\leq & 2 \gamma r_{\text{max}} \mathbb{E}_{(s,a) \sim \rho^{\pi}_{T} }[D_{TV} (T(s^\prime|s,a) || \hat{T}(s^\prime|s,a))] + 
r_{\text{max}} \sum^{k}_{i=0} w^k_i (\gamma {\rm Vol}(\mathcal{S}) \xi_{\rho_i} + 2 \xi_{\pi_i}) \\
\quad & + 2 r_{\text{max}} D_{TV}(\rho^{\pi_{\text{mix}}}_{\hat{T}}(s,a) || \rho^{\pi}_{\hat{T}}(s,a)),\\
\end{aligned}
\end{equation}

and the proof is completed.
\end{proof}

\section{Proof of Proposition~\ref{prop_3}}\label{connections}


\begin{proposition} \label{theorem2}
The performance gap can be reduced if the weight $w_i^k$ of each policy $\pi_i$ in the historical policy sequence $\Pi^k$ is negatively related to state action visitation distribution shift $\xi_{\rho_i}$ and the policy distribution shift $\xi_{\pi_i}$ between the historical policy $\pi_i$ and current policy $\pi$ instead of an average weight $w_i^k = \frac{1}{k}$:

\begin{equation}
\begin{aligned}
\sum^{k}_{i=1} w^k_i (\gamma {\rm Vol}(\mathcal{S}) \xi_{\rho_i} + 2 \xi_{\pi_i}) \leq \sum^{k}_{i=1} \frac{1}{k} (\gamma {\rm Vol}(\mathcal{S}) \xi_{\rho_i} + 2 \xi_{\pi_i})
\end{aligned}
\end{equation}

\end{proposition}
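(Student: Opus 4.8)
The plan is to strip the proposition down to an elementary inequality between a weighted average and a uniform average of a fixed nonnegative sequence. First I would set $c_i := \gamma\,{\rm Vol}(\mathcal{S})\,\xi_{\rho_i} + 2\,\xi_{\pi_i} \ge 0$ for each $i \in \{1,\dots,k\}$, so that the claim becomes exactly $\sum_{i=1}^k w_i^k c_i \le \frac{1}{k}\sum_{i=1}^k c_i$, where $(w_1^k,\dots,w_k^k)$ is a probability vector (it sums to $1$ after the normalization in Algorithm~\ref{Alg_PDML}). The hypothesis that each $w_i^k$ is \emph{negatively related} to $\xi_{\rho_i}$ and $\xi_{\pi_i}$ --- hence to $c_i$ --- I would formalize as the two sequences $(c_i)$ and $(w_i^k)$ being oppositely ordered: after sorting the indices so that $c_1 \le c_2 \le \cdots \le c_k$, the weights satisfy $w_1^k \ge w_2^k \ge \cdots \ge w_k^k$.

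The key step is then a direct application of Chebyshev's sum inequality to the oppositely-ordered pair $(c_i)$ and $(w_i^k)$, which gives $k \sum_{i=1}^k c_i w_i^k \le \big(\sum_{i=1}^k c_i\big)\big(\sum_{i=1}^k w_i^k\big)$. Since $\sum_i w_i^k = 1$, dividing by $k$ yields the desired $\sum_i w_i^k c_i \le \frac{1}{k}\sum_i c_i$, which is precisely the inequality of Proposition~\ref{prop_3}. An equivalent and perhaps more transparent route is to write $w_i^k = \frac{1}{k} + \delta_i$ with $\sum_i \delta_i = 0$; then the gap between the two sides equals $\sum_i \delta_i c_i = \sum_i \delta_i (c_i - \bar c)$, using $\sum_i \delta_i = 0$ with $\bar c = \frac{1}{k}\sum_j c_j$, a quantity that is nonpositive exactly when the deviations $\delta_i$ from the uniform weight are negatively correlated with $c_i$ --- i.e., when a larger shift $c_i$ receives below-average weight. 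Either formulation makes the monotonicity/anti-correlation hypothesis do all the work.

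Finally I would connect this back to Theorem~\ref{main_theory}: the first term of the bound in Equation~(\ref{main_theory_eq}) depends only on $\rho^{\pi}_{T}$ and the model prediction error and is independent of $\bm{w}^k$, while the third term is argued in the remarks to belong to the policy-learning stage and is treated separately. Consequently the only $\bm{w}^k$-dependent piece of the upper bound is the second term $\sum_i w_i^k(\gamma\,{\rm Vol}(\mathcal{S})\xi_{\rho_i} + 2\xi_{\pi_i})$, so lowering it below its uniform-weight value tightens the overall guarantee and justifies the claim that the performance gap \emph{can be} reduced.

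The main obstacle is one of formalization rather than computation: the phrase \emph{negatively related} is informal, and Chebyshev's inequality requires a genuine opposite-monotone (or, in the covariance view, nonpositive-correlation) relationship between $(w_i^k)$ and $(c_i)$. I would therefore state this ordering condition explicitly as the hypothesis, and note as a caveat that the conclusion concerns the \emph{upper bound} on $J(\pi,T)-J(\pi,\hat{T})$, not the true gap; reducing a bound does not by itself guarantee a reduction of the quantity it bounds, though it is the natural surrogate to minimize and is what motivates the weight design in Equation~(\ref{policy_weight}).
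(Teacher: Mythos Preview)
Your proposal is correct and rests on the same mathematical content as the paper's proof: both formalize ``negatively related'' as the sequences $(c_i)$ and $(w_i^k)$ being oppositely ordered after sorting, and both conclude $\sum_i w_i^k c_i \le \tfrac{1}{k}\sum_i c_i$. The difference is packaging. You invoke Chebyshev's sum inequality (equivalently, the covariance identity $\sum_i \delta_i c_i = \sum_i \delta_i (c_i - \bar c) \le 0$) by name and are done in one line. The paper instead reproves this inequality from scratch: it sorts the shift terms into a decreasing sequence, observes that the oppositely ordered weights must cross the level $1/k$ at some threshold index $k_0$, splits $\sum_i (w_i^k - 1/k)c_i$ at $k_0$, and bounds each piece by replacing $c_i$ with a single value $c_{k_1}$ or $c_{k_2}$ on the appropriate side. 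Your route is shorter and more transparent; the paper's threshold-splitting buys nothing extra mathematically but makes the argument self-contained for readers who may not recognize Chebyshev's inequality. Your closing caveats (that the hypothesis must be stated as an explicit ordering condition, and that the conclusion is about the \emph{upper bound} rather than the gap itself) are apt and in fact sharper than what the paper makes explicit.
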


\begin{proof}
Each policy $\pi_i$ in the historical policy sequence $\Pi^k$ corresponds to a distribution shift pair $(\xi_{\rho_i}, \xi_{\pi_i})$, and these pairs form a distribution shift sequence $\{(\xi_{\rho_1}, \xi_{\pi_1}),(\xi_{\rho_2}, \xi_{\pi_2}), ......, (\xi_{\rho_k}, \xi_{\pi_k})\}$, assuming that this sequence decreases as $i$ increases (this is a reasonable assumption, because we can always arrange the historical policy sequence into a distribution shift decreasing sequence according to the magnitude of the shift).
As the weight of each policy is negatively related to state action visitation distribution shift $\xi_{\rho_i}$ and the policy distribution shift $\xi_{\pi_i}$, $w_i^k$ increases with $k$.

Since $\sum\limits_{i=1}^k w_i^k=\sum\limits_{i=1}^k \frac{1}{k}=1$, there exists a $k_0$ that for all  $i>k_0, w_i^k> \frac{1}{k}$.

 Then we have:
   \begin{equation}
    \begin{split}
      0\leqslant & \sum\limits_{i=k_0}^{k}(w_i^k-\frac{1}{k})(\gamma {\rm Vol}(\mathcal{S}) \xi_{\rho_i} + 2 \xi_{\pi_i})\\
      \leqslant &\sum\limits_{i=k_0}^{k}(w_i^k-\frac{1}{k})(\gamma {\rm Vol}(\mathcal{S}) \xi_{\rho_{k_1}} + 2 \xi_{\pi_{k_1}}),
    \end{split}
  \end{equation}
  
  where $k_1 \in [k_0, k]$
  
   \begin{equation}
    \begin{split}
      0\geqslant & \sum\limits_{i=1}^{k_0 - 1}(w_i^k-\frac{1}{k})(\gamma {\rm Vol}(\mathcal{S}) \xi_{\rho_{k_2}} + 2 \xi_{\pi_{k_2}})\\
      \geqslant &\sum\limits_{i=1}^{k_0 - 1}(w_i^k-\frac{1}{k})(\gamma {\rm Vol}(\mathcal{S}) \xi_{\rho_i} + 2 \xi_{\pi_i}),
    \end{split}
  \end{equation}
 
  where $k_2 \in [0, k_0)$
 
 Based on these two equations:
 
  \begin{equation}
    \begin{split}
      & \sum\limits_{i=1}^{k}(w_i^k-\frac{1}{k})(\gamma {\rm Vol}(\mathcal{S}) \xi_{\rho_i} + 2 \xi_{\pi_i})\\
      = & \sum\limits_{i=1}^{k_0 - 1}(w_i^k-\frac{1}{k})(\gamma {\rm Vol}(\mathcal{S}) \xi_{\rho_i} + 2 \xi_{\pi_i})+ \sum\limits_{i=k_0}^{k}(w_i^k-\frac{1}{k})(\gamma {\rm Vol}(\mathcal{S}) \xi_{\rho_i} + 2 \xi_{\pi_i})\\
      \leqslant & \sum\limits_{i=1}^{k_0 - 1}(w_i^k-\frac{1}{k})(\gamma {\rm Vol}(\mathcal{S}) \xi_{\rho_{k_2}} + 2 \xi_{\pi_{k_2}})+ \sum\limits_{i=k_0}^{k}(w_i^k-\frac{1}{k})(\gamma {\rm Vol}(\mathcal{S}) \xi_{\rho_{k_1}} + 2 \xi_{\pi_{k_1}})\\
      = & \sum\limits_{i=1}^{k_0 - 1}(w_i^k-\frac{1}{k})(\gamma {\rm Vol}(\mathcal{S}) \xi_{\rho_{k_2}} + 2 \xi_{\pi_{k_2}}) - \sum\limits_{i=1}^{k_0 - 1}(w_i^k-\frac{1}{k})(\gamma {\rm Vol}(\mathcal{S}) \xi_{\rho_{k_1}} + 2 \xi_{\pi_{k_1}})\\
      \quad &+ \sum\limits_{i=1}^{k_0 - 1}(w_i^k-\frac{1}{k})(\gamma {\rm Vol}(\mathcal{S}) \xi_{\rho_{k_1}} + 2 \xi_{\pi_{k_1}}) + \sum\limits_{i=k_0}^{k}(w_i^k-\frac{1}{k})(\gamma {\rm Vol}(\mathcal{S}) \xi_{\rho_{k_1}} + 2 \xi_{\pi_{k_1}})\\ 
      = & \sum\limits_{i=1}^{k_0 - 1}(w_i^k-\frac{1}{k})[(\gamma {\rm Vol}(\mathcal{S}) \xi_{\rho_{k_2}} + 2 \xi_{\pi_{k_2}}) - (\gamma {\rm Vol}(\mathcal{S}) \xi_{\rho_{k_1}} + 2 \xi_{\pi_{k_1}})] + \sum\limits_{i=1}^{k}(w_i^k-\frac{1}{k})(\gamma {\rm Vol}(\mathcal{S}) \xi_{\rho_{k_1}} + 2 \xi_{\pi_{k_1}})
    \end{split}
  \end{equation}
 
 Since distribution shift sequence $\{(\xi_{\rho_1}, \xi_{\pi_1}),(\xi_{\rho_2}, \xi_{\pi_2}), ......, (\xi_{\rho_k}, \xi_{\pi_k})\}$ decreases as $i$ increases, and $k_2 < k_1$, the first term will be less than 0. 
 Meanwhile, the second term will be equal to 0 because $\sum\limits_{i=1}^k w_i^k=\sum\limits_{i=1}^k \frac{1}{k}=1$.
 Therefore, we can get:
 
   \begin{equation}
    \begin{split}
      & \sum\limits_{i=1}^{k}(w_i^k-\frac{1}{k})(\gamma {\rm Vol}(\mathcal{S}) \xi_{\rho_i} + 2 \xi_{\pi_i})\\
      &\leqslant  \sum\limits_{i=1}^{k_0 - 1}(w_i^k-\frac{1}{k})[(\gamma {\rm Vol}(\mathcal{S}) \xi_{\rho_{k_2}} + 2 \xi_{\pi_{k_2}}) - (\gamma {\rm Vol}(\mathcal{S}) \xi_{\rho_{k_1}} + 2 \xi_{\pi_{k_1}})] + \sum\limits_{i=1}^{k}(w_i^k-\frac{1}{k})(\gamma {\rm Vol}(\mathcal{S}) \xi_{\rho_{k_1}} + 2 \xi_{\pi_{k_1}}) \\
     &\leqslant 0
    \end{split}
  \end{equation}
 
 The proof is finished.
\end{proof}

Proposition~\ref{prop_3} illustrate that after adjusting the policy mixture distribution according to the distribution shifts, the performance bound will be tighter than learning a global dynamics model ($w_i^k = \frac{1}{k}$).
This provides a guidance for our proposed method, that the weight $w_i^k$ of each policy $\pi_i$ in the historical policy sequence $\Pi^k$ should be negatively related to its state action visitation distribution shift $\xi_{\rho_i}$ and the policy distribution shift $\xi_{\pi_i}$.
\section{More experiments}\label{app:exp}

\subsection{More Error Curves for Dynamics Model Learned by MBPO}\label{app: more_local}

In this section, we provide the local error curves for global dynamics model in four MoJoCo environments: Hopper, HalfCheetah, Walker2d, and Humanoid.
The curves are shown in Figure~\ref{fig: more_local}.

\begin{figure}[!htbp]
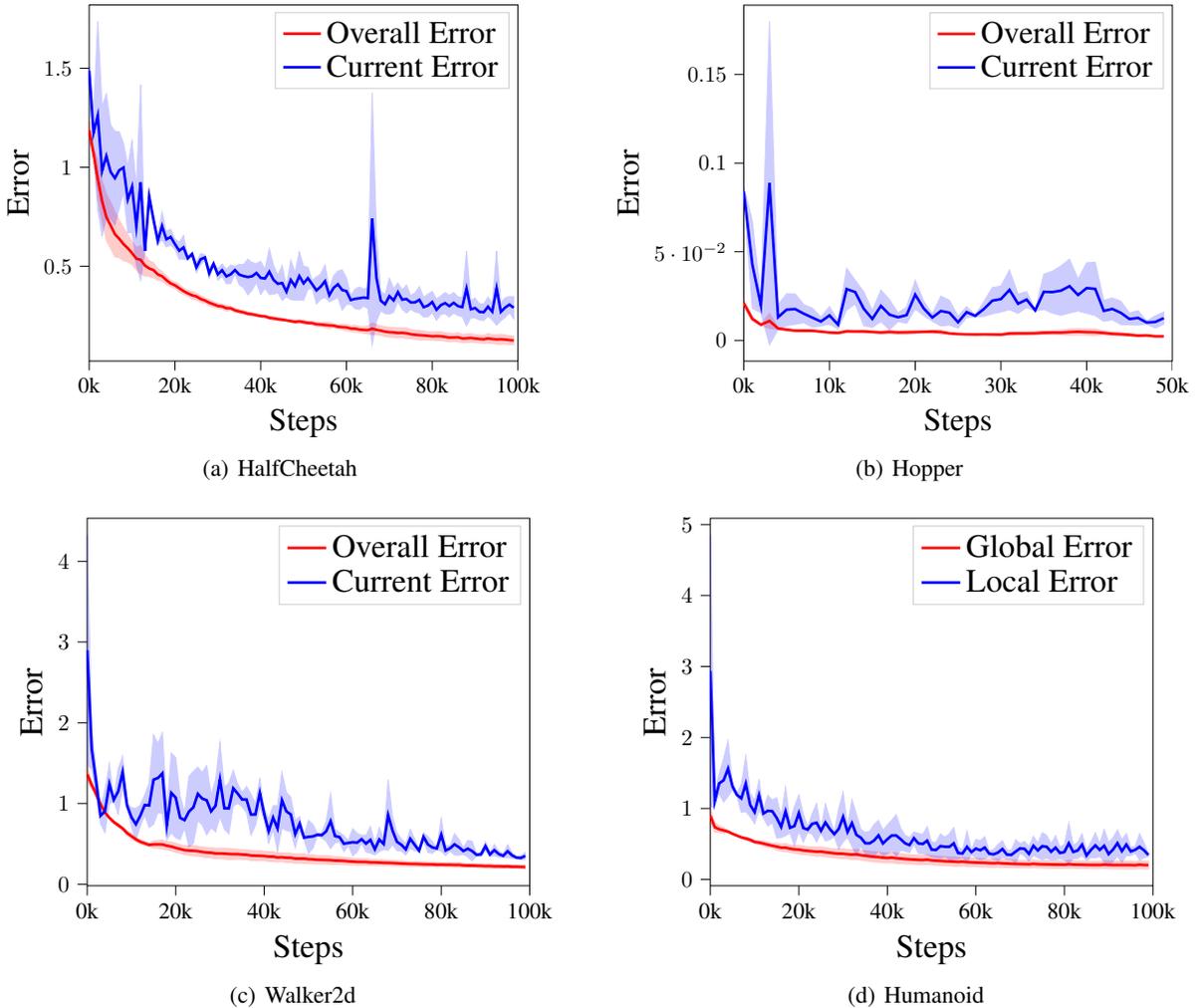

\centering
\subfigure[HalfCheetah]{
\resizebox{0.43\textwidth}{!}{\input{exp_results/PMAC_half_local_error}}
\label{local_a}
}
\hfil
\subfigure[Hopper]{
\resizebox{0.46\textwidth}{!}{\input{exp_results/PMAC_hopper_local_error}}
\label{local_b}
}
\hfil
\subfigure[Walker2d]{
\resizebox{0.43\textwidth}{!}{\input{exp_results/PMAC_walker_local_error}}
\label{local_c}
}
\hfil
\subfigure[Humanoid]{
\resizebox{0.43\textwidth}{!}{\input{exp_results/PMAC_human_local_error}}
\label{local_e}
}
\caption{The global error curve and the local error curve of MBPO in four MuJoCo environments.}
\label{fig: more_local}
\end{figure}

\subsection{Experiment about the Convergence of the Dynamics Model on Recent Data}\label{app: converge_model}
To prove that the error gap in Figure~\ref{Fig_delta_distribution} is not caused by the dynamics model not having converged on the recent data,
we checkpoint the real sample buffer and dynamics model at multiple points during training, then train the dynamics model for a long time until convergence at these checkpointed locations.
We conduct the experiment on Walker2d-v2 and HalfCheetah-v2, and checkpoint the data and model at environment step 20k, 40k, 60k, 80k, and 100k.
The one-step model prediction error on newly generated samples during rollout of this converged model, MBPO and our method (PDML) are shown in Table~\ref{tab: error walker} and Table~\ref{tab: error half}. 
Each method runs 8 random seeds.

\begin{table}[!htb]
\setlength{\abovecaptionskip}{0.2cm}
\centering
\caption{Prediction error of the converged model, MBPO and PDML in Walker2d.}
\begin{tabular}{c|ccccc}
\toprule 
 &Env step 20k&Env step 40k&Env step 60k&Env step 80k&Env step 100k \\
\midrule
Converged Model & 0.946 $\pm $0.201 &	0.578  $\pm$  0.029  & 0.328  $\pm$ 0.037 & 0.317  $\pm$ 0.013 & 0.272  $\pm$ 0.024\\
MBPO & 1.022  $\pm$ 0.496 & 0.819  $\pm$ 0.076 & 0.471  $\pm$ 0.167 & 0.563  $\pm$ 0.204 & 0.302  $\pm$ 0.043\\
PDML & \textbf{0.673 $\pm$ 0.117} & \textbf{0.422 $\pm$ 0.169} & \textbf{0.207 $\pm$  0.072} & \textbf{0.128 $\pm$ 0.019} & \textbf{0.134 $\pm$ 0.034}  \\

\bottomrule
\end{tabular}
\label{tab: error walker}
\end{table}

\begin{table}[!htb]
\setlength{\abovecaptionskip}{0.2cm}
\centering
\caption{Prediction error of the converged model, MBPO and PDML in HalfCheetah.}
\begin{tabular}{c|ccccc}
\toprule 
 &Env step 20k&Env step 40k&Env step 60k&Env step 80k&Env step 100k \\
\midrule
Converged Model & 0.478 $\pm$ 0.027 & 0.345 $\pm$ 0.088 & 0.268 $\pm$ 0.053 & 0.245 $\pm$  0.051 & 0.219 $\pm$ 0.087  \\
MBPO & 0.561 $\pm$ 0.026 &0.391 $\pm$ 0.083 &0.324 $\pm$ 0.066 & 0.269 $\pm$ 0.053 & 0.241 $\pm$  0.055\\
PDML & \textbf{0.363 $\pm$ 0.066} & \textbf{0.232 $\pm$ 0.053} & \textbf{0.206 $\pm$ 0.163} & \textbf{0.149 $\pm$ 0.038} & \textbf{0.127 $\pm$ 0.014}\\

\bottomrule
\end{tabular}
\label{tab: error half}
\end{table}

From these results, we can find that training a dynamics model to convergence on the current real samples can actually reduce the model prediction error during rollouts, but the effect is not obvious. 
Especially when the number of samples in the real sample buffer becomes very large (Env step 60k ~ 100k), the model prediction error obtained by training a converged model is almost the same as MBPO. 
This experimental result further proves our claim that the main reason for the model prediction error during model rollouts is not that the model does not converge on new samples. 
The main reason is the mismatch of model learning and model rollouts.

\subsection{Visualization of State-Action Visitation Distribution of Different Historical Policies}\label{app: more distribution shift}

Due to the limited space of main paper, we provide detailed visualization of the state-action visitation distribution of policies under different environment steps in this section.
We conduct the experiment on HalfCheetah and Hopper, the results are shown in Figure~\ref{halfcheetah distribution shift all} and Figure~\ref{hopper distribution shift all}.
(a) in each figure is the comparison of different policies in the same figure, from (b) to (f) are the figures presenting the state-action visitation distribution of each policy individually.
We can see that the state-action visitation distribution of policies under different environment steps is very different.

\begin{figure}[!htbp]
\centering
\subfigure[]{
\includegraphics[width=0.3\textwidth]{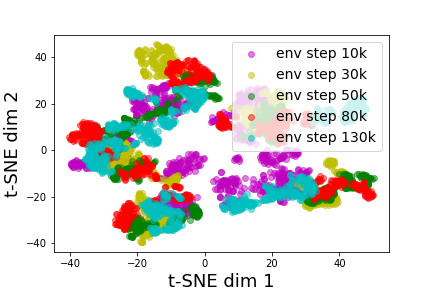}
\label{ablation:half distribition shift all}
}
\hfil
\subfigure[Environment step 10k]{
\includegraphics[width=0.3\textwidth]{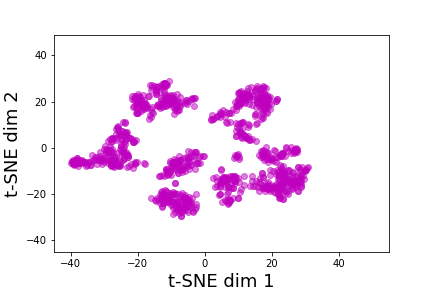}
\label{ablation:half distribition shift 10k}
}
\hfil
\subfigure[Environment step 30k]{
\includegraphics[width=0.3\textwidth]{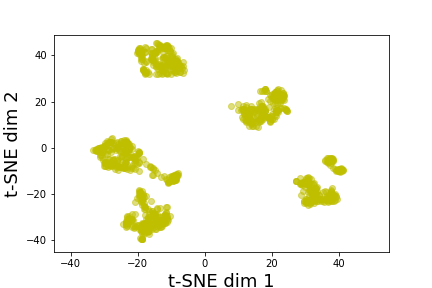}
\label{ablation:half distribition shift 30k}
}
\hfil
\subfigure[Environment step 50k]{
\includegraphics[width=0.3\textwidth]{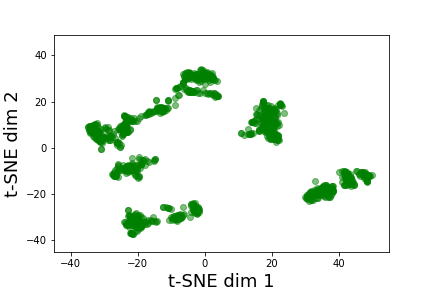}
\label{ablation:half distribition shift 50k}
}
\hfil
\subfigure[Environment step 80k]{
\includegraphics[width=0.3\textwidth]{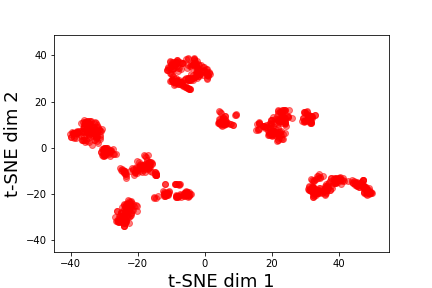}
\label{ablation:half distribition shift 80k}
}
\hfil
\subfigure[Environment step 130k]{
\includegraphics[width=0.3\textwidth]{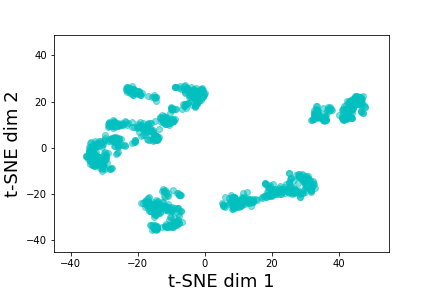}
\label{ablation:half distribition shift 130k}
}
\caption{Visualization of state-action visitation distribution of policies at different environment steps in HalfCheetah.}
\label{halfcheetah distribution shift all}
\end{figure}

\begin{figure}[!htbp]
\centering
\subfigure[]{
\includegraphics[width=0.3\textwidth]{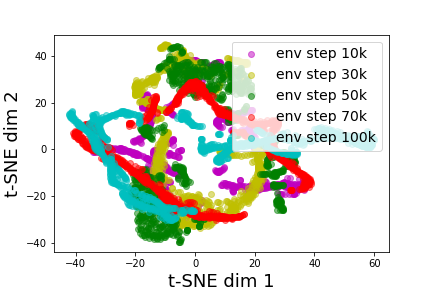}
\label{ablation:hopper distribition shift all}
}
\hfil
\subfigure[Environment step 10k]{
\includegraphics[width=0.3\textwidth]{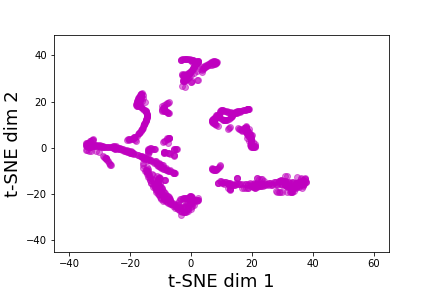}
\label{ablation:hopper distribition shift 10k}
}
\hfil
\subfigure[Environment step 30k]{
\includegraphics[width=0.3\textwidth]{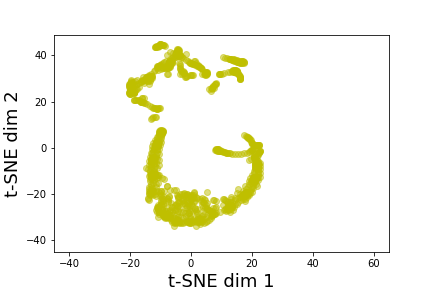}
\label{ablation:hopper distribition shift 30k}
}
\hfil
\subfigure[Environment step 50k]{
\includegraphics[width=0.3\textwidth]{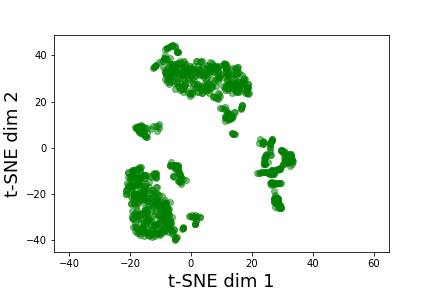}
\label{ablation:hopper distribition shift 50k}
}
\hfil
\subfigure[Environment step 70k]{
\includegraphics[width=0.3\textwidth]{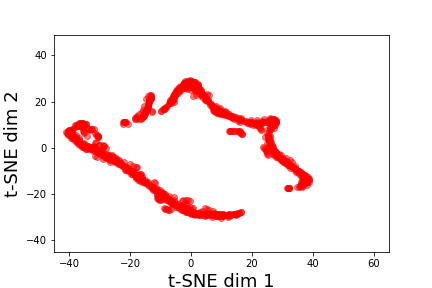}
\label{ablation:hopper distribition shift 70k}
}
\hfil
\subfigure[Environment step 100k]{
\includegraphics[width=0.3\textwidth]{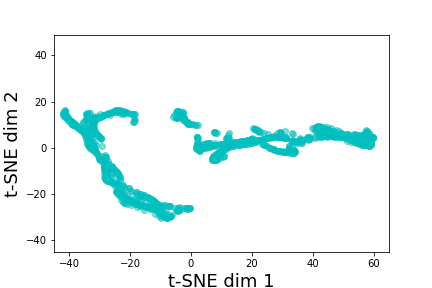}
\label{ablation:hopper distribition shift 100k}
}
\caption{Visualization of state-action visitation distribution of policies at different environment steps in Hopper.}
\label{hopper distribution shift all}
\end{figure}

\subsection{Visualization of Adjusted Policy Mixture Distribution}
\label{app: distribution visualization}

To provide a further understanding of out method, we visualize the adjusted policy mixture distribution at different training steps on Humanoid in Figure~\ref{policy mixture distribution}.
We take Figure~\ref{ablation:distribition 50k} as an example to explain the origin and meaning of the policy ID on the horizontal axis. Each of policies interacts with the environment for 250 steps, so a 50k environment step has 200 historical policies. 
The policy ID of 0 indicates the oldest policy. The larger the ID, the newer the policy.
We can see that the policy mixture distribution is totally different at different training steps.
The weight of policy is not a simple exponentially decay or linearly decay, which indicates our proposed method is non-trivial.

\begin{figure}[!htbp]
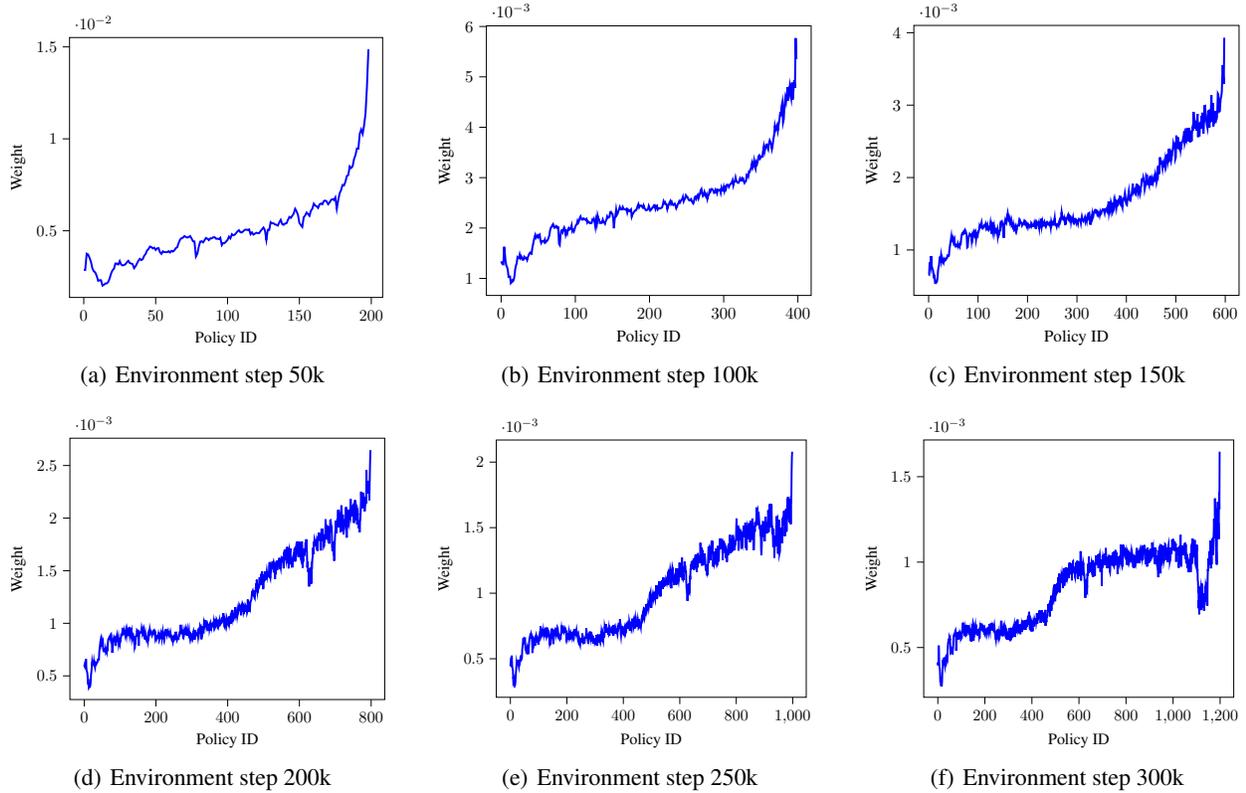

\centering
\subfigure[Environment step 50k]{
\resizebox{0.3\textwidth}{!}{
\begin{tikzpicture}

\definecolor{darkgray176}{RGB}{176,176,176}

\begin{axis}[
tick align=outside,
tick pos=left,
x grid style={darkgray176},
xlabel={Policy ID},
xmin=-9.9, xmax=207.9,
xtick style={color=black},
y grid style={darkgray176},
ylabel={Weight},
ymin=0.00137467992572508, ymax=0.0155090291610368,
ytick style={color=black}
]
\addplot [very thick, blue]
table {%
0 0.00288043873263332
1 0.00286892191530345
2 0.0037534862235529
3 0.00371092513588269
4 0.00352981375416081
5 0.00330859964102451
6 0.00294970979410906
7 0.00279872958457458
8 0.00273111772024408
9 0.00254578545386172
10 0.00227753501456324
11 0.0023119774538465
12 0.00230951980587299
13 0.00201715034551198
14 0.0020644455365379
15 0.00213227890734196
16 0.00213768915482909
17 0.00221733516361959
18 0.00241186161110301
19 0.00267811307393475
20 0.00285278835292129
21 0.00293336056283703
22 0.00322397835245979
23 0.00321340250086056
24 0.00316653595112971
25 0.00335139996632513
26 0.00316117602763492
27 0.00311477136996799
28 0.00313635503439869
29 0.00318081488891345
30 0.00329111858679749
31 0.00336938505200087
32 0.00324363932887885
33 0.00321920244689555
34 0.00318308225712549
35 0.00296653771285727
36 0.00306559161035743
37 0.00322088199810704
38 0.00337386766884719
39 0.00347495963337015
40 0.00340546840144097
41 0.00345408217117457
42 0.00363824297609021
43 0.00373586781630654
44 0.00392266486857843
45 0.00400432554646908
46 0.00414417640307775
47 0.00411584533929056
48 0.00404187064565292
49 0.00408236921708411
50 0.00395694033043281
51 0.00404274377911302
52 0.00405707467479944
53 0.00383322751718794
54 0.00374375031024588
55 0.00388407283037457
56 0.00384407961327306
57 0.00383581793919518
58 0.00383997652989177
59 0.00389167211420433
60 0.0038193360458437
61 0.00390895694208449
62 0.00386662514306351
63 0.00393247935322093
64 0.00405602573711647
65 0.00410027966472034
66 0.00433920621008032
67 0.00451835451282406
68 0.0046000284387862
69 0.00462220750422763
70 0.00471417609401019
71 0.00466305851189713
72 0.00463013885185657
73 0.00466177417072111
74 0.0047144892041873
75 0.00457715049105667
76 0.00443033816763881
77 0.00441649731113314
78 0.00360057171537218
79 0.00375149208636855
80 0.00413869444056379
81 0.00440986954222828
82 0.0044320486517645
83 0.00436798336581049
84 0.00448776592692768
85 0.00460883754665531
86 0.00456604482615173
87 0.00448898366561522
88 0.00458637818073277
89 0.00458656809621952
90 0.00464319600942846
91 0.0046538901372109
92 0.00456601330921084
93 0.00456878715174809
94 0.00454580375977697
95 0.0046193644043518
96 0.00421818595644537
97 0.00433414405002026
98 0.00433285631049581
99 0.00441788565877716
100 0.00449902138966311
101 0.00468104257173724
102 0.00458630663525603
103 0.00481571668235359
104 0.00475138733697962
105 0.00478560041369072
106 0.00487792752740459
107 0.00497754562930684
108 0.00503019030812365
109 0.0049127452127447
110 0.00491142298488472
111 0.00498156945318745
112 0.0050481083260727
113 0.00506471066822325
114 0.00504037112420086
115 0.00507621586101652
116 0.00481244049196175
117 0.00491397790659162
118 0.00479105028402552
119 0.00484088831581658
120 0.00489303148713263
121 0.00487499226486778
122 0.00487139279668284
123 0.00498881401838833
124 0.00507972247678665
125 0.00517665539638577
126 0.00502977755684197
127 0.0045000522066578
128 0.00504894688325087
129 0.00504770011858526
130 0.00546795512441429
131 0.00540389777528709
132 0.0054088373230476
133 0.0053400710875056
134 0.00532732893783669
135 0.00528168509718166
136 0.00545585890817361
137 0.00559545218071824
138 0.00536504888389889
139 0.00532081865569958
140 0.00539221792018929
141 0.00540542979377748
142 0.0055851715564749
143 0.00575438394710547
144 0.00569981079205298
145 0.00581243536871262
146 0.00593014529578028
147 0.00619276553631691
148 0.00605700956192264
149 0.0058634125550774
150 0.00541953032397448
151 0.00531427089950141
152 0.00521174682775996
153 0.00573562423953183
154 0.00583689102757661
155 0.00599338629411064
156 0.00590685089111225
157 0.0058092950733633
158 0.00609160459307418
159 0.00620068068598947
160 0.00647592873233897
161 0.00637501191968818
162 0.0063313591155775
163 0.00624196410535112
164 0.00643528448402204
165 0.006481675246013
166 0.00644242534444648
167 0.00661938714772661
168 0.00664294952730729
169 0.00659218024977921
170 0.00644215434080537
171 0.00653684357460868
172 0.00667543819700545
173 0.00672387359325433
174 0.00667054481612241
175 0.00681242341351232
176 0.00619396068297019
177 0.00667079753666911
178 0.00703887110767572
179 0.00721505646405499
180 0.00748965236217973
181 0.00746437544270482
182 0.00766503802043608
183 0.00797858741746287
184 0.00803311873621059
185 0.00850431940641471
186 0.00841327846290366
187 0.00850927410331877
188 0.00889834560484283
189 0.00911554690135104
190 0.00947688744638328
191 0.00946041745106854
192 0.0102827567434676
193 0.010506743649602
194 0.010274717746438
195 0.0107371151193516
196 0.0113973226514144
197 0.0128698334128608
198 0.0148665587412499
};
\end{axis}

\end{tikzpicture}}
\label{ablation:distribition 50k}
}
\hfil
\subfigure[Environment step 100k]{
\resizebox{0.3\textwidth}{!}{
\begin{tikzpicture}

\definecolor{darkgray176}{RGB}{176,176,176}

\begin{axis}[
tick align=outside,
tick pos=left,
x grid style={darkgray176},
xlabel={Policy ID},
xmin=-19.9, xmax=417.9,
xtick style={color=black},
y grid style={darkgray176},
ylabel={Weight},
ymin=0.00066230758779945, ymax=0.00601280130193871,
ytick style={color=black}
]
\addplot [very thick, blue]
table {%
0 0.00133876140693082
1 0.00130477718260944
2 0.00128061252650863
3 0.00127884319742088
4 0.00162791261913056
5 0.00149612310672128
6 0.00133110211891538
7 0.0012509878380661
8 0.00122533815900211
9 0.00116368821652896
10 0.00102978988149059
11 0.00103151135380272
12 0.00103620121777447
13 0.000905511847533053
14 0.000921771734452139
15 0.000961314673442469
16 0.000947185458468048
17 0.000974158252814018
18 0.00105029556217558
19 0.00118937225824349
20 0.00125868894401634
21 0.00130176463654121
22 0.0014190550749426
23 0.00140252769316126
24 0.00140311281961199
25 0.001472421020916
26 0.00140678317906784
27 0.00143770821287318
28 0.0013976380091419
29 0.00136921945257214
30 0.00137136649862904
31 0.0014247832216341
32 0.00140118631391709
33 0.00141315656771486
34 0.00140780662970353
35 0.00136730078208397
36 0.00142167157020635
37 0.00145620875528342
38 0.00149257867012911
39 0.00151140843852225
40 0.00149384285765838
41 0.00148755259035773
42 0.00160388961570663
43 0.00167373761671098
44 0.00175168728208954
45 0.00181066637672466
46 0.00186199526366382
47 0.00180616951109287
48 0.00182869920583667
49 0.00185402984951332
50 0.00181877134191061
51 0.00185410050642148
52 0.00186936846242694
53 0.00175376324962948
54 0.00170519518101133
55 0.00173927007429881
56 0.00174572985519721
57 0.00174035532189683
58 0.00171936469759908
59 0.00176861830061266
60 0.00171759024762141
61 0.00173737981027459
62 0.00174646140239048
63 0.00177127876924329
64 0.00180949938123825
65 0.00185756898059327
66 0.00195657946252102
67 0.00200667850156085
68 0.00202407427667787
69 0.00205309293071051
70 0.00207660171697313
71 0.00202889659927893
72 0.00200347948395811
73 0.00201156838412616
74 0.00204285555374426
75 0.00203532625569769
76 0.00199863180047812
77 0.00200832140243395
78 0.00168767866417555
79 0.00166338825712669
80 0.00184361412630552
81 0.00195030063362003
82 0.00195394558171176
83 0.00191739533246685
84 0.00197704900898872
85 0.00201126232940549
86 0.00196735800833624
87 0.00186688974778564
88 0.00192344933707368
89 0.0018877377630606
90 0.00197034660757898
91 0.00199886014627132
92 0.00197040017181587
93 0.00198788019420279
94 0.00198815654299856
95 0.00204926080695193
96 0.00193487728285997
97 0.00195845464076223
98 0.00193168419579836
99 0.00197948242010144
100 0.00202664687226202
101 0.00206134994652405
102 0.00207263383860535
103 0.00214120880640292
104 0.00210695262592878
105 0.0021412970230029
106 0.0021964151335233
107 0.00222914394809678
108 0.00218673202278497
109 0.00216548301313838
110 0.00216729032838093
111 0.00219197148141403
112 0.00218329827823712
113 0.00216560669363103
114 0.00216353504001293
115 0.00218021854286319
116 0.0021094135160382
117 0.00213361295713848
118 0.00211042527415509
119 0.00211245903434809
120 0.00209449641061348
121 0.00209672978578126
122 0.00210053790915128
123 0.00216811150940412
124 0.002157856737169
125 0.00218566373068592
126 0.00211749802490396
127 0.00202514437090011
128 0.00216048229313117
129 0.00208373055937004
130 0.00224868613027176
131 0.00218804217766586
132 0.00217191171417043
133 0.00210185686670815
134 0.00210669100301684
135 0.00209880192313366
136 0.00210622374289085
137 0.00216199651165091
138 0.00211465787125988
139 0.00210705283471399
140 0.00208421989606338
141 0.00214183164196624
142 0.00216519477358807
143 0.00223138644173297
144 0.00220315185164686
145 0.00221174873100487
146 0.00221945529458041
147 0.00231409607030164
148 0.00232090770819266
149 0.00229186726873116
150 0.00228345136978695
151 0.00219457860095896
152 0.00199461542499869
153 0.00217281106190554
154 0.00224301701113571
155 0.00236279223432862
156 0.00225975605766026
157 0.00228504810108532
158 0.00233534358971279
159 0.00237207015406034
160 0.00241722324412908
161 0.00237775977958981
162 0.00236869012487206
163 0.0023688818210917
164 0.00241722543771389
165 0.00238606928778517
166 0.00237008853402354
167 0.00236913570834206
168 0.00236333755743028
169 0.00237872752219812
170 0.00230817403951139
171 0.00236699517496662
172 0.00235340829697981
173 0.00233542242079194
174 0.00234831960520287
175 0.00236759846945994
176 0.00222563579653373
177 0.00226875899507666
178 0.00228772550401519
179 0.00232413693585127
180 0.00233186701383599
181 0.0023536136447868
182 0.00235478637319841
183 0.00239021609928403
184 0.00243775146326873
185 0.00247342742221331
186 0.00241210168323247
187 0.00244329888798685
188 0.00244617151207186
189 0.00244586799988241
190 0.00242436823592462
191 0.0024053607207577
192 0.00245007023037144
193 0.00240284615333104
194 0.00238046352095842
195 0.0023543542341832
196 0.00235370905273135
197 0.00236135086954912
198 0.00237421062184699
199 0.00243864391056853
200 0.00239734637771801
201 0.00234659733800078
202 0.00236030178318059
203 0.00238186814932273
204 0.00240863772784338
205 0.002415539745833
206 0.00245248283307359
207 0.00245508375074435
208 0.00238324938087648
209 0.00243271953502846
210 0.0024444763465775
211 0.00236644474642795
212 0.00238521623433534
213 0.00239584666714632
214 0.00238682429739192
215 0.00239432811639972
216 0.00239894005583592
217 0.00241158711799157
218 0.00246448093640561
219 0.00253028085165742
220 0.00249329291093878
221 0.00251196040755889
222 0.002469848201298
223 0.00246638522701596
224 0.00237642306417212
225 0.00242893836632618
226 0.00246621224756701
227 0.00250486670045068
228 0.00253468554896555
229 0.00248083973640777
230 0.00251544187781707
231 0.00247347134885594
232 0.00245502152432358
233 0.00246818639608398
234 0.00248369772644476
235 0.00244753248669716
236 0.0023919656915012
237 0.00242504749806246
238 0.00246889444235711
239 0.00249292509703412
240 0.00253162907479339
241 0.00250489084492763
242 0.00253282970560933
243 0.00256107775972393
244 0.00257061369782368
245 0.00261755845229137
246 0.0025296707907689
247 0.0025627884521125
248 0.002589912957862
249 0.00264869831725671
250 0.00262402091055341
251 0.00261926738095188
252 0.00258487582143738
253 0.00255519038984149
254 0.00254660923769377
255 0.00255807725362658
256 0.00257406712717974
257 0.00253682826783572
258 0.00250185582668091
259 0.00248631889328757
260 0.00253805892675594
261 0.00261103016525982
262 0.00262571984882458
263 0.00258167121931427
264 0.00262656196857965
265 0.00265495471153624
266 0.00258973197462291
267 0.00264070893776011
268 0.00268485044533155
269 0.0027474973018914
270 0.00277344721150448
271 0.00267941573827534
272 0.00263122441797283
273 0.00264835382595536
274 0.0027167779791544
275 0.00267555714196347
276 0.00263075632403239
277 0.00266570688837065
278 0.00257704902836194
279 0.00263509035877576
280 0.00266969095093349
281 0.00274377055825804
282 0.00272310681902578
283 0.00277440338789373
284 0.00277688916946313
285 0.00276269284274742
286 0.00273722463642321
287 0.00269670106964015
288 0.00270147806078084
289 0.00276702207666943
290 0.0027518619829966
291 0.00269567492475991
292 0.00276273188441549
293 0.00275544470943276
294 0.00274713424762254
295 0.00275784111732261
296 0.00269935037729284
297 0.00279640362834738
298 0.00274459077958346
299 0.00274094612294967
300 0.00281285588094749
301 0.0027353385371141
302 0.00280154734579878
303 0.00277122163746618
304 0.00285449724353181
305 0.00286084422372998
306 0.00285534828212841
307 0.00281694053939068
308 0.00285258931997417
309 0.00287244373414597
310 0.002881782635834
311 0.00291252558519278
312 0.00295076485894855
313 0.00284923801138709
314 0.00291159000046927
315 0.00292207648033812
316 0.00300361895206966
317 0.00294037853641426
318 0.00291338489462354
319 0.00289759649129881
320 0.0028967234330906
321 0.00293204681322825
322 0.00298014570650669
323 0.00294134428779937
324 0.00294116484831079
325 0.00299746098357434
326 0.0029888414617882
327 0.00292809587036424
328 0.00289414278820033
329 0.00292367452319001
330 0.00292965398432213
331 0.00305827307407301
332 0.00304758015261267
333 0.00301944577483115
334 0.00311942951450607
335 0.00311323099620527
336 0.00316234259270714
337 0.00319387510508419
338 0.00327979411337887
339 0.00319698395496586
340 0.00324433619008121
341 0.00333585333102788
342 0.00322240542442184
343 0.00317387879589195
344 0.00330830460448062
345 0.0032749591613959
346 0.00333384064679208
347 0.00342604108515578
348 0.00344318419046729
349 0.00337660566429795
350 0.00339625765686533
351 0.00342206762225445
352 0.00341710579044714
353 0.0035196398131045
354 0.00361457583025153
355 0.00350465311924814
356 0.00357002129746882
357 0.00365036221695969
358 0.0036572709862823
359 0.00359873935764227
360 0.00355995685209156
361 0.00367036990780969
362 0.00373722029443255
363 0.00370128445716025
364 0.0036599340132235
365 0.00356901498276475
366 0.00364360845863127
367 0.00376319809791491
368 0.00385253871645578
369 0.00402695982789144
370 0.00394991295893766
371 0.00399369690083824
372 0.0040379282475793
373 0.0039355280885405
374 0.00404521951550711
375 0.00414306221301459
376 0.00426537796272698
377 0.00425020752066398
378 0.00441125212353756
379 0.00449139942550803
380 0.00409539528490473
381 0.0041626795320731
382 0.00445884901785292
383 0.00458173829858511
384 0.00449921394740807
385 0.00464458351310284
386 0.00479342816357644
387 0.0045243310735875
388 0.00468418011213367
389 0.00480713362868255
390 0.00465103901983663
391 0.00456991439632394
392 0.00488537944668707
393 0.00454864831824048
394 0.00471819748954136
395 0.00492877205412884
396 0.00478238748319366
397 0.00576959704220511
398 0.00535131954984926
};
\end{axis}

\end{tikzpicture}}
\label{ablation:distribition 100k}
}
\hfil
\subfigure[Environment step 150k]{
\resizebox{0.3\textwidth}{!}{\input{exp_results/PDML_weight_150k_humanoid.tex}}
\label{ablation:distribition 150k}
}
\hfil
\subfigure[Environment step 200k]{
\resizebox{0.3\textwidth}{!}{\input{exp_results/PDML_weight_200k_humanoid.tex}}
\label{ablation:distribition 200k}
}
\hfil
\subfigure[Environment step 250k]{
\resizebox{0.3\textwidth}{!}{\input{exp_results/PDML_weight_250k_humanoid.tex}}
\label{ablation:distribition 250k}
}
\hfil
\subfigure[Environment step 300k]{
\resizebox{0.3\textwidth}{!}{\input{exp_results/PDML_weight_300k_humanoid.tex}}
\label{ablation:distribition 300k}
}
\caption{Visualization of adjusted policy mixture distribution at different training steps on Humanoid.}
\label{policy mixture distribution}
\end{figure}

\subsection{Ablation Study of PDML}

As we described in Section~\ref{sec4}, we use the adjusted policy mixture distribution for both model learning and sampling initial states for model rollouts.
In this section, we provide the ablation study to show the impact of the adjusted policy mixture distribution in these two parts respectively.
We conducted our experiments in Hopper and Walker2d, and the performance curves are shown in Figures \ref{ablation:learning and rollout hopper} and \ref{ablation:learning and rollout walker}.

\begin{figure}[!htbp]
\centering
\subfigure[Hopper]{
\resizebox{0.38\textwidth}{!}{
\begin{tikzpicture}

\definecolor{darkgray176}{RGB}{176,176,176}
\definecolor{green}{RGB}{0,128,0}
\definecolor{lightgray204}{RGB}{204,204,204}
\definecolor{purple}{RGB}{128,0,128}

\begin{axis}[
legend cell align={left},
legend style={
  fill opacity=0.8,
  draw opacity=1,
  text opacity=1,
  at={(0.97,0.03)},
  anchor=south east,
  draw=lightgray204
},
tick align=outside,
tick pos=left,
x grid style={darkgray176},
xlabel={Steps},
xmajorgrids,
xmin=-0.55, xmax=11.55,
xtick style={color=black},
xtick={0,2.75,5.5,8.25,11},
xticklabels={0k,30k,60k,90k,120k},
y grid style={darkgray176},
ylabel={Returns},
ymajorgrids,
ymin=126.525490255497, ymax=3841.1904784217,
ytick style={color=black}
]
\path [draw=red, fill=red, opacity=0.2]
(axis cs:0,521.018486614765)
--(axis cs:0,421.691404266603)
--(axis cs:1,1361.44645814367)
--(axis cs:2,2636.06766687043)
--(axis cs:3,2997.33788857376)
--(axis cs:4,3007.56904162707)
--(axis cs:5,3169.32658525878)
--(axis cs:6,3274.3902374211)
--(axis cs:7,3458.50841599406)
--(axis cs:8,3433.84049735409)
--(axis cs:9,3457.93389782643)
--(axis cs:10,3588.50200234617)
--(axis cs:11,3610.85435072886)
--(axis cs:11,3672.34206986869)
--(axis cs:11,3672.34206986869)
--(axis cs:10,3660.12388483112)
--(axis cs:9,3519.22263339582)
--(axis cs:8,3487.16421335081)
--(axis cs:7,3487.71578460847)
--(axis cs:6,3377.13543831177)
--(axis cs:5,3329.82133616751)
--(axis cs:4,3116.81768337101)
--(axis cs:3,3126.58260533071)
--(axis cs:2,2900.43036324941)
--(axis cs:1,1675.09967919274)
--(axis cs:0,521.018486614765)
--cycle;

\path [draw=purple, fill=purple, opacity=0.2]
(axis cs:0,338.064918576741)
--(axis cs:0,297.077994653278)
--(axis cs:1,402.911192591146)
--(axis cs:2,585.842374867145)
--(axis cs:3,917.801952392439)
--(axis cs:4,1468.16772297057)
--(axis cs:5,2166.71352628612)
--(axis cs:6,2722.36201512216)
--(axis cs:7,2834.35094042148)
--(axis cs:8,2911.62421455698)
--(axis cs:9,3354.03303179512)
--(axis cs:10,2840.07455772443)
--(axis cs:11,2982.25689670709)
--(axis cs:11,3268.94815483551)
--(axis cs:11,3268.94815483551)
--(axis cs:10,3215.23288254169)
--(axis cs:9,3437.27830785428)
--(axis cs:8,3308.06571918307)
--(axis cs:7,3051.0199247979)
--(axis cs:6,2899.04419938988)
--(axis cs:5,2538.42842380477)
--(axis cs:4,1757.01283590296)
--(axis cs:3,1637.79417735214)
--(axis cs:2,930.797934762353)
--(axis cs:1,460.981547393268)
--(axis cs:0,338.064918576741)
--cycle;

\path [draw=blue, fill=blue, opacity=0.2]
(axis cs:0,368.950594247811)
--(axis cs:0,318.848890905284)
--(axis cs:1,583.301345164708)
--(axis cs:2,1678.07373475525)
--(axis cs:3,2430.03197604138)
--(axis cs:4,2808.76459995931)
--(axis cs:5,3185.0460519981)
--(axis cs:6,3234.68329644659)
--(axis cs:7,3320.24403972489)
--(axis cs:8,3148.3182012721)
--(axis cs:9,3407.88335845776)
--(axis cs:10,3419.35452427472)
--(axis cs:11,3450.85724288969)
--(axis cs:11,3479.92883192555)
--(axis cs:11,3479.92883192555)
--(axis cs:10,3465.22612050728)
--(axis cs:9,3456.59613776927)
--(axis cs:8,3366.74313105187)
--(axis cs:7,3408.20223988503)
--(axis cs:6,3358.73150091917)
--(axis cs:5,3352.53664887245)
--(axis cs:4,2977.06899602295)
--(axis cs:3,2752.14953107079)
--(axis cs:2,2160.97440955589)
--(axis cs:1,774.101536696708)
--(axis cs:0,368.950594247811)
--cycle;

\path [draw=green, fill=green, opacity=0.2]
(axis cs:0,311.798346874858)
--(axis cs:0,295.373898808506)
--(axis cs:1,616.843996785332)
--(axis cs:2,1205.94988503364)
--(axis cs:3,2017.44668766832)
--(axis cs:4,2325.91469741269)
--(axis cs:5,2887.02611934338)
--(axis cs:6,3007.59081670307)
--(axis cs:7,2764.05934975739)
--(axis cs:8,3314.95377481645)
--(axis cs:9,3395.59242144556)
--(axis cs:10,3217.7923000614)
--(axis cs:11,3284.26510146315)
--(axis cs:11,3474.78432522464)
--(axis cs:11,3474.78432522464)
--(axis cs:10,3449.62592500413)
--(axis cs:9,3529.38968630605)
--(axis cs:8,3508.43849110435)
--(axis cs:7,3020.26162219881)
--(axis cs:6,3190.91964184028)
--(axis cs:5,2990.60597523099)
--(axis cs:4,2692.33628158984)
--(axis cs:3,2367.36867323046)
--(axis cs:2,1299.45685738669)
--(axis cs:1,726.89975821062)
--(axis cs:0,311.798346874858)
--cycle;

\addplot [very thick, red]
table {%
0 472.976519654385
1 1533.51307834557
2 2759.1852450854
3 3066.58287143861
4 3063.01969900472
5 3253.4610920782
6 3325.89292161874
7 3472.72841045454
8 3462.30206133435
9 3491.34622144468
10 3622.10235058772
11 3640.77412058743
};
\addlegendentry{PMDL-MBPO}
\addplot [very thick, purple]
table {%
0 317.611799328093
1 429.310981045254
2 739.226557165846
3 1246.68163108696
4 1605.78219580592
5 2350.14659028516
6 2805.59108130551
7 2945.26037064353
8 3128.08950336613
9 3396.90277269079
10 3024.52630072487
11 3124.87190288444
};
\addlegendentry{MBPO}
\addplot [very thick, blue]
table {%
0 341.846746032218
1 677.918209409273
2 1940.52758086628
3 2595.85490603975
4 2890.96969186837
5 3267.24288270005
6 3297.44675335104
7 3364.91307649144
8 3261.60509402062
9 3433.15629941712
10 3442.06276928919
11 3465.96043926154
};
\addlegendentry{Model Learning Only}
\addplot [very thick, green]
table {%
0 303.781469203601
1 671.577784334347
2 1251.44381659134
3 2188.01203726222
4 2513.5478169527
5 2939.4070555617
6 3101.34640617605
7 2891.17588679202
8 3415.57340466607
9 3469.21508140392
10 3330.28904396541
11 3378.83718046318
};
\addlegendentry{Model Rollouts Only}
\end{axis}

\end{tikzpicture}}
\label{ablation:learning and rollout hopper}
}
\hfil
\subfigure[Walker2d]{
\resizebox{0.38\textwidth}{!}{
\begin{tikzpicture}

\definecolor{darkgray176}{RGB}{176,176,176}
\definecolor{green}{RGB}{0,128,0}
\definecolor{lightgray204}{RGB}{204,204,204}
\definecolor{purple}{RGB}{128,0,128}

\begin{axis}[
legend cell align={left},
legend style={
  fill opacity=0.8,
  draw opacity=1,
  text opacity=1,
  at={(0.97,0.03)},
  anchor=south east,
  draw=lightgray204
},
tick align=outside,
tick pos=left,
x grid style={darkgray176},
xlabel={Steps},
xmajorgrids,
xmin=-0.95, xmax=19.95,
xtick style={color=black},
xtick={0,4.75,9.5,14.25,19},
xticklabels={0k,50k,100k,150k,200k},
y grid style={darkgray176},
ylabel={Returns},
ymajorgrids,
ymin=-44.872426589316, ymax=5790.39043262598,
ytick style={color=black}
]
\path [draw=red, fill=red, opacity=0.2]
(axis cs:0,310.869080430046)
--(axis cs:0,274.26783970509)
--(axis cs:1,374.439700657643)
--(axis cs:2,691.226589379165)
--(axis cs:3,1403.95426029567)
--(axis cs:4,1611.41996049657)
--(axis cs:5,2587.44518646806)
--(axis cs:6,3254.79856989235)
--(axis cs:7,3486.48263272935)
--(axis cs:8,3589.93343868015)
--(axis cs:9,3710.48140946969)
--(axis cs:10,3980.74670286248)
--(axis cs:11,3858.27722473211)
--(axis cs:12,4545.07957256603)
--(axis cs:13,4254.61062037242)
--(axis cs:14,4545.71354263577)
--(axis cs:15,4329.24431128295)
--(axis cs:16,5005.97693195506)
--(axis cs:17,4672.89823627628)
--(axis cs:18,5144.24731049092)
--(axis cs:19,5068.15770528662)
--(axis cs:19,5525.15121175255)
--(axis cs:19,5525.15121175255)
--(axis cs:18,5470.05309106649)
--(axis cs:17,5249.62477962359)
--(axis cs:16,5278.95831932772)
--(axis cs:15,4787.04863075718)
--(axis cs:14,4792.51993744146)
--(axis cs:13,4684.89691586051)
--(axis cs:12,4817.17484260129)
--(axis cs:11,4451.01315021625)
--(axis cs:10,4286.57893036521)
--(axis cs:9,3964.61781864422)
--(axis cs:8,4100.30285449661)
--(axis cs:7,3772.6035207106)
--(axis cs:6,3718.6857786969)
--(axis cs:5,2844.12704798606)
--(axis cs:4,2089.96698932163)
--(axis cs:3,1721.42116904253)
--(axis cs:2,952.656430253483)
--(axis cs:1,397.438752012594)
--(axis cs:0,310.869080430046)
--cycle;

\path [draw=purple, fill=purple, opacity=0.2]
(axis cs:0,254.82271141427)
--(axis cs:0,220.366794284106)
--(axis cs:1,427.032892349599)
--(axis cs:2,484.613387256256)
--(axis cs:3,574.907004616918)
--(axis cs:4,846.124717270676)
--(axis cs:5,1330.64082020444)
--(axis cs:6,1881.72192525012)
--(axis cs:7,2691.14593214432)
--(axis cs:8,2910.29607998693)
--(axis cs:9,2789.79284663236)
--(axis cs:10,3540.89053101747)
--(axis cs:11,3372.91223858445)
--(axis cs:12,3702.886584907)
--(axis cs:13,3732.51658957207)
--(axis cs:14,3902.89590052245)
--(axis cs:15,3767.79375786435)
--(axis cs:16,3591.49278882001)
--(axis cs:17,3643.05099190985)
--(axis cs:18,3562.58961897768)
--(axis cs:19,4216.83888831274)
--(axis cs:19,4502.87205006605)
--(axis cs:19,4502.87205006605)
--(axis cs:18,4145.77354140132)
--(axis cs:17,3769.82200675689)
--(axis cs:16,3889.38035810719)
--(axis cs:15,4114.82169190061)
--(axis cs:14,4109.40375560722)
--(axis cs:13,4090.54290751113)
--(axis cs:12,3993.41608580747)
--(axis cs:11,3953.67051646616)
--(axis cs:10,3824.43328094269)
--(axis cs:9,3297.24897249677)
--(axis cs:8,3032.93659993295)
--(axis cs:7,3008.61655263654)
--(axis cs:6,2411.25785070395)
--(axis cs:5,1814.61422775855)
--(axis cs:4,954.263531774537)
--(axis cs:3,682.161856768515)
--(axis cs:2,584.6086031204)
--(axis cs:1,497.614693815498)
--(axis cs:0,254.82271141427)
--cycle;

\path [draw=blue, fill=blue, opacity=0.2]
(axis cs:0,331.172573017621)
--(axis cs:0,291.242711835463)
--(axis cs:1,392.90807689135)
--(axis cs:2,526.912730016574)
--(axis cs:3,791.804726836289)
--(axis cs:4,1431.83242276152)
--(axis cs:5,1842.42952599738)
--(axis cs:6,2183.71662288518)
--(axis cs:7,2942.13226493787)
--(axis cs:8,3030.28123167865)
--(axis cs:9,3068.05323784002)
--(axis cs:10,3445.40129974877)
--(axis cs:11,3683.99020964184)
--(axis cs:12,3772.34104642845)
--(axis cs:13,4061.82422924225)
--(axis cs:14,4071.34953769551)
--(axis cs:15,4297.79642460245)
--(axis cs:16,4181.51204450798)
--(axis cs:17,4387.10917599612)
--(axis cs:18,4687.97647262004)
--(axis cs:19,4450.37011275131)
--(axis cs:19,4972.61257084581)
--(axis cs:19,4972.61257084581)
--(axis cs:18,5108.57050872684)
--(axis cs:17,5096.3731634519)
--(axis cs:16,4751.38525149273)
--(axis cs:15,4649.06038593777)
--(axis cs:14,4640.92877861778)
--(axis cs:13,4739.64445294636)
--(axis cs:12,4520.80687985868)
--(axis cs:11,4202.01223448307)
--(axis cs:10,4044.35505032601)
--(axis cs:9,4038.8718259073)
--(axis cs:8,3988.16718262547)
--(axis cs:7,3751.30567010609)
--(axis cs:6,3177.1128210018)
--(axis cs:5,2344.51375174342)
--(axis cs:4,1529.77330786202)
--(axis cs:3,1000.19176085357)
--(axis cs:2,736.23889372502)
--(axis cs:1,417.835440850095)
--(axis cs:0,331.172573017621)
--cycle;

\path [draw=green, fill=green, opacity=0.2]
(axis cs:0,342.871576551904)
--(axis cs:0,283.16959569536)
--(axis cs:1,464.485016894777)
--(axis cs:2,516.705896590553)
--(axis cs:3,706.641500999438)
--(axis cs:4,1241.87798685192)
--(axis cs:5,1777.98577823834)
--(axis cs:6,1981.13729494057)
--(axis cs:7,2528.81958570036)
--(axis cs:8,3032.06737282946)
--(axis cs:9,2880.27823388116)
--(axis cs:10,2937.09991276886)
--(axis cs:11,3530.50373672499)
--(axis cs:12,3308.49848092175)
--(axis cs:13,3650.16303848067)
--(axis cs:14,3885.32316223042)
--(axis cs:15,4251.1589104812)
--(axis cs:16,4122.07112576746)
--(axis cs:17,4284.11273952989)
--(axis cs:18,3885.71414304067)
--(axis cs:19,4173.34490780867)
--(axis cs:19,4704.07867185562)
--(axis cs:19,4704.07867185562)
--(axis cs:18,4579.1457105358)
--(axis cs:17,4434.36092453478)
--(axis cs:16,4257.88256459369)
--(axis cs:15,4480.30475207679)
--(axis cs:14,4424.92239071875)
--(axis cs:13,4134.47981281874)
--(axis cs:12,4140.4294606592)
--(axis cs:11,4246.35436959057)
--(axis cs:10,3798.20609851281)
--(axis cs:9,3828.4226644539)
--(axis cs:8,3728.43748367667)
--(axis cs:7,3242.75925594867)
--(axis cs:6,2777.74101472918)
--(axis cs:5,2339.47266459337)
--(axis cs:4,1628.51797064464)
--(axis cs:3,1029.23521024211)
--(axis cs:2,559.079245714948)
--(axis cs:1,504.670436224474)
--(axis cs:0,342.871576551904)
--cycle;

\addplot [very thick, red]
table {%
0 292.903994564138
1 385.528246400112
2 819.495060554688
3 1569.81599103574
4 1861.84048362766
5 2722.05254653316
6 3486.05824883771
7 3641.62612467158
8 3840.89503483001
9 3842.78204579975
10 4147.19113382598
11 4155.81283303558
12 4691.34130015397
13 4475.3042125548
14 4666.08919632027
15 4564.55569538872
16 5148.89392215021
17 4971.14442976961
18 5305.2327448193
19 5303.89545994349
};
\addlegendentry{PMDL-MBPO}
\addplot [very thick, purple]
table {%
0 237.916438540833
1 460.842878989599
2 530.009961600383
3 623.771807628081
4 903.754364812231
5 1567.35766554305
6 2132.96822837009
7 2842.49772045823
8 2969.72452103888
9 3038.65952564612
10 3677.97660156502
11 3684.59657145359
12 3851.11553378733
13 3927.04314270838
14 4004.24143412764
15 3947.54920524616
16 3742.3185405308
17 3706.18911006491
18 3856.27554856244
19 4364.64897888617
};
\addlegendentry{MBPO}
\addplot [very thick, blue]
table {%
0 312.422754044766
1 405.136507413595
2 617.753810480895
3 893.794047159743
4 1479.43566195095
5 2090.10564601079
6 2665.18049713203
7 3350.40858553358
8 3509.38559566757
9 3523.37583841559
10 3739.16582621701
11 3922.33783756146
12 4138.7837745142
13 4400.97661295461
14 4358.58278729687
15 4471.10473615433
16 4470.84500708523
17 4747.69522396346
18 4899.61658507946
19 4697.57497930253
};
\addlegendentry{Model Learning Only}
\addplot [very thick, green]
table {%
0 313.384381629578
1 484.432487324031
2 539.196094162528
3 857.519555869709
4 1422.04950504526
5 2065.71616026072
6 2353.09332624717
7 2860.98843145748
8 3396.92489513459
9 3405.4858619843
10 3361.23331889625
11 3922.39154154857
12 3750.32110089175
13 3896.75692077359
14 4161.07869437638
15 4367.64942865645
16 4183.19744871102
17 4362.0604083282
18 4224.65991784835
19 4460.54427821301
};
\addlegendentry{Model Rollouts Only}
\end{axis}

\end{tikzpicture}}
\label{ablation:learning and rollout walker}
}
\hfil
\subfigure[Hopper]{
\resizebox{0.38\textwidth}{!}{
\begin{tikzpicture}

\definecolor{darkgray176}{RGB}{176,176,176}
\definecolor{darkorange}{RGB}{255,140,0}
\definecolor{green}{RGB}{0,128,0}
\definecolor{lightgray204}{RGB}{204,204,204}
\definecolor{purple}{RGB}{128,0,128}

\begin{axis}[
legend cell align={left},
legend style={
  fill opacity=0.8,
  draw opacity=1,
  text opacity=1,
  at={(0.97,0.03)},
  anchor=south east,
  draw=lightgray204
},
tick align=outside,
tick pos=left,
x grid style={darkgray176},
xlabel={Steps},
xmajorgrids,
xmin=-0.55, xmax=11.55,
xtick style={color=black},
xtick={0,2.75,5.5,8.25,11},
xticklabels={0k,30k,60k,90k,120k},
y grid style={darkgray176},
ylabel={Returns},
ymajorgrids,
ymin=118.4119943202, ymax=3884.14343778805,
ytick style={color=black}
]
\path [draw=red, fill=red, opacity=0.2]
(axis cs:0,521.544028497425)
--(axis cs:0,415.793455964697)
--(axis cs:1,1381.49672129023)
--(axis cs:2,2638.68888512989)
--(axis cs:3,2997.69862739266)
--(axis cs:4,3010.52490039728)
--(axis cs:5,3162.47122451836)
--(axis cs:6,3274.27685443674)
--(axis cs:7,3460.28248554781)
--(axis cs:8,3434.22887136231)
--(axis cs:9,3460.74028393233)
--(axis cs:10,3586.39546479916)
--(axis cs:11,3606.97912923734)
--(axis cs:11,3670.4827662473)
--(axis cs:11,3670.4827662473)
--(axis cs:10,3657.24304801792)
--(axis cs:9,3520.06039553331)
--(axis cs:8,3487.55593780662)
--(axis cs:7,3486.11628355875)
--(axis cs:6,3377.21882228184)
--(axis cs:5,3336.03206231296)
--(axis cs:4,3120.84787876797)
--(axis cs:3,3128.8140737327)
--(axis cs:2,2894.98542820982)
--(axis cs:1,1685.55657448333)
--(axis cs:0,521.544028497425)
--cycle;

\path [draw=blue, fill=blue, opacity=0.2]
(axis cs:0,382.465539195593)
--(axis cs:0,338.287904797133)
--(axis cs:1,938.83182440536)
--(axis cs:2,2519.08640133425)
--(axis cs:3,2862.55846478993)
--(axis cs:4,2911.40948946963)
--(axis cs:5,2892.76385572879)
--(axis cs:6,3063.7956334496)
--(axis cs:7,3275.04324873293)
--(axis cs:8,3255.3913319044)
--(axis cs:9,3300.08244236984)
--(axis cs:10,3504.98024185566)
--(axis cs:11,3555.59611461029)
--(axis cs:11,3712.97382672132)
--(axis cs:11,3712.97382672132)
--(axis cs:10,3610.23782328735)
--(axis cs:9,3514.53655596141)
--(axis cs:8,3476.03916296573)
--(axis cs:7,3441.11532088992)
--(axis cs:6,3478.52383056901)
--(axis cs:5,3163.34513893133)
--(axis cs:4,3045.18249959508)
--(axis cs:3,2989.3357100179)
--(axis cs:2,2754.98141876094)
--(axis cs:1,1309.38639523687)
--(axis cs:0,382.465539195593)
--cycle;

\path [draw=purple, fill=purple, opacity=0.2]
(axis cs:0,351.482414284978)
--(axis cs:0,323.573739753112)
--(axis cs:1,851.894572089973)
--(axis cs:2,1673.82658942546)
--(axis cs:3,2237.13646967555)
--(axis cs:4,2458.81458164664)
--(axis cs:5,2351.68387899034)
--(axis cs:6,2376.35143131338)
--(axis cs:7,2478.68660985713)
--(axis cs:8,2303.89033248336)
--(axis cs:9,2614.33379175823)
--(axis cs:10,2988.52464721779)
--(axis cs:11,3108.25114169325)
--(axis cs:11,3372.71492380159)
--(axis cs:11,3372.71492380159)
--(axis cs:10,3236.37841937531)
--(axis cs:9,3102.13977907657)
--(axis cs:8,2944.54165910939)
--(axis cs:7,2792.76044671798)
--(axis cs:6,2680.94240529891)
--(axis cs:5,2738.2013168403)
--(axis cs:4,2643.79696678082)
--(axis cs:3,2668.01359941407)
--(axis cs:2,1900.0781543639)
--(axis cs:1,1139.00306068484)
--(axis cs:0,351.482414284978)
--cycle;

\path [draw=green, fill=green, opacity=0.2]
(axis cs:0,307.583211512793)
--(axis cs:0,289.58160538692)
--(axis cs:1,974.971909823793)
--(axis cs:2,1574.65484449469)
--(axis cs:3,2107.19847328914)
--(axis cs:4,2106.63480633636)
--(axis cs:5,2252.44510093119)
--(axis cs:6,2320.51522270452)
--(axis cs:7,2113.97949392871)
--(axis cs:8,2659.97226863466)
--(axis cs:9,2719.86101526518)
--(axis cs:10,2702.25251821974)
--(axis cs:11,2515.13445081507)
--(axis cs:11,3095.70541781028)
--(axis cs:11,3095.70541781028)
--(axis cs:10,2989.41417789902)
--(axis cs:9,2923.98133332112)
--(axis cs:8,2705.80660648342)
--(axis cs:7,2532.37998719288)
--(axis cs:6,2770.89659679854)
--(axis cs:5,2553.11178038202)
--(axis cs:4,2400.91331403565)
--(axis cs:3,2388.19924361549)
--(axis cs:2,2073.52744274569)
--(axis cs:1,1195.46032147117)
--(axis cs:0,307.583211512793)
--cycle;

\path [draw=darkorange, fill=darkorange, opacity=0.2]
(axis cs:0,371.972622068612)
--(axis cs:0,307.782289152145)
--(axis cs:1,572.433045240202)
--(axis cs:2,1007.10292325416)
--(axis cs:3,1582.42326266462)
--(axis cs:4,1894.790073539)
--(axis cs:5,1842.52193752654)
--(axis cs:6,1779.78257960308)
--(axis cs:7,1582.74662169948)
--(axis cs:8,1830.32664796369)
--(axis cs:9,1613.72861999931)
--(axis cs:10,1899.8116293251)
--(axis cs:11,2027.16539903316)
--(axis cs:11,3277.71092071071)
--(axis cs:11,3277.71092071071)
--(axis cs:10,2942.34890818112)
--(axis cs:9,2646.37168492688)
--(axis cs:8,2926.09435456071)
--(axis cs:7,2569.19605123419)
--(axis cs:6,2095.43294137784)
--(axis cs:5,2368.40175724488)
--(axis cs:4,2191.6466153536)
--(axis cs:3,1757.23407068877)
--(axis cs:2,1349.7251832139)
--(axis cs:1,813.500246905887)
--(axis cs:0,371.972622068612)
--cycle;

\addplot [very thick, red]
table {%
0 472.795918513284
1 1532.76886841273
2 2760.53860482674
3 3066.85442348338
4 3062.08467228887
5 3257.88476557084
6 3325.08628815651
7 3472.77187749469
8 3462.49976707919
9 3492.24666302517
10 3622.88706403089
11 3640.77287939842
};
\addlegendentry{PMDL-MBPO (rate = 0.02)}
\addplot [very thick, blue]
table {%
0 360.256319190529
1 1116.56842867588
2 2643.21146818282
3 2923.61572004125
4 2981.27687144963
5 3031.96809330851
6 3277.46500553233
7 3360.00267086016
8 3373.89845608834
9 3417.98939177545
10 3558.69226221491
11 3631.935386132
};
\addlegendentry{rate = 0.1}
\addplot [very thick, purple]
table {%
0 337.117501155928
1 985.102317837034
2 1791.9471185318
3 2444.72783877875
4 2555.4302884194
5 2546.27730248893
6 2525.53489247772
7 2638.94267016007
8 2646.59202939922
9 2864.8805438536
10 3118.2257604335
11 3250.29097161096
};
\addlegendentry{rate = 0.3}
\addplot [very thick, green]
table {%
0 298.52849885021
1 1081.31807066517
2 1838.23084454075
3 2248.5280387946
4 2265.41357916215
5 2408.01723528751
6 2533.37655110686
7 2321.08121028329
8 2683.24180732098
9 2830.70294583019
10 2855.54298138365
11 2831.54246125811
};
\addlegendentry{rate = 0.5}
\addplot [very thick, darkorange]
table {%
0 337.314447456326
1 692.472962673412
2 1166.97199357868
3 1666.14513649715
4 2036.4600012737
5 2086.84910643989
6 1945.00234218886
7 2119.33271900705
8 2365.59598385101
9 2185.83904864251
10 2400.47175316989
11 2640.51197642269
};
\addlegendentry{rate = 0.7}
\end{axis}

\end{tikzpicture}}
\label{ablation: alpha}
}
\caption{(a) and (b): Ablation study of adjusted policy mixture distribution on model learning and sampling initial states for model rollouts.
(c): Ablation study of current policy proportion rate.}
\end{figure}
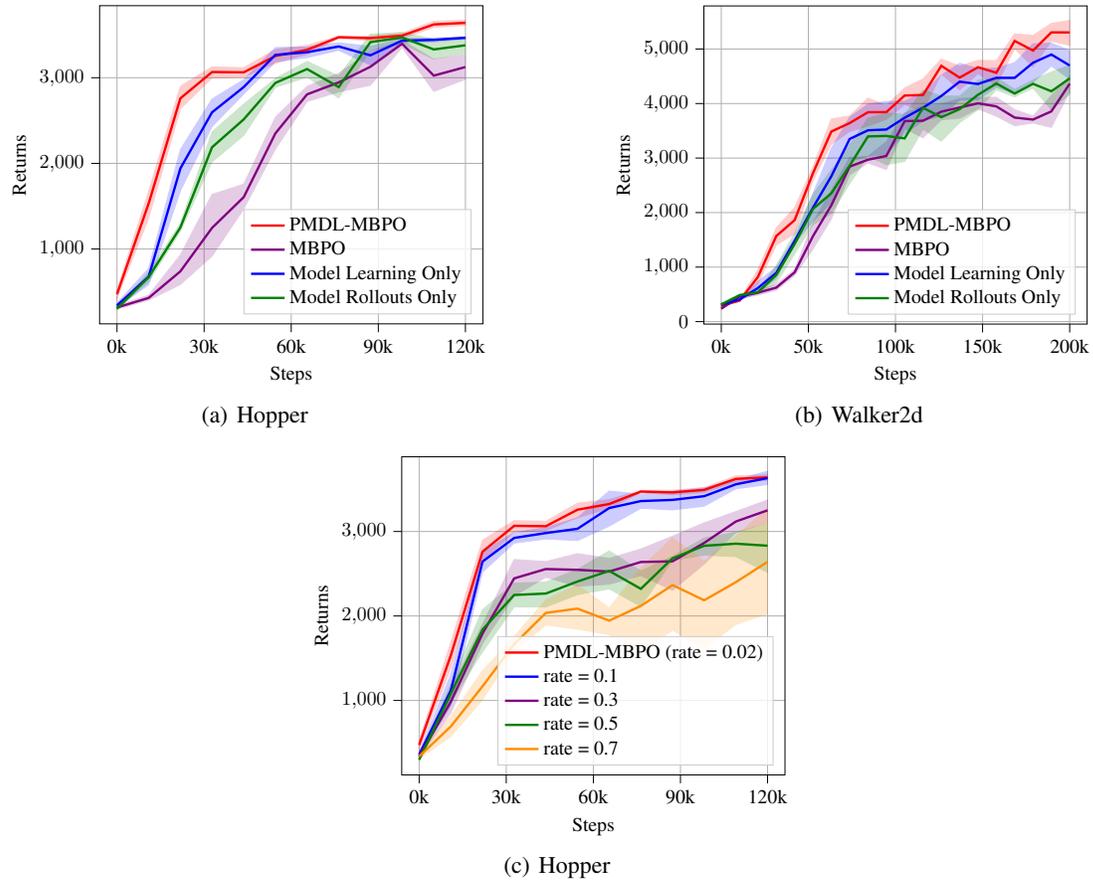

We find that using the adjusted policy mixture distribution only for model learning or model rollouts initial states sampling both improves the performance in Hopper and Walker2d compared to MBPO.
However, the improvement of only using the adjusted policy mixture distribution for model rollouts initial states sampling in Walker2d is not very significant.
Besides, the improvement of using the adjusted policy mixture distribution for model learning is better than using that for model rollouts initial states sampling, but both of them are worse than PDML.
This indicates two things.
First, model learning is more important than model rollouts initial states sampling, because even the initial state distribution obeys the state-action visitation distribution of current policy, the model-generated samples will still be inaccurate if the learned dynamics model is not accurate enough for the current policy.
Second, to achieve the best performance, sample distribution for model learning and sample distribution for model rollouts initial states should be synergistic; that is, the training data for training the dynamics model and the initial states of model rollouts should obey the same distribution, so that the model prediction error can be minimized.

\subsection{Ablation Study of Current Policy Proportion Rate}

We conducted experiments to explore the impact of current policy proportion rate on the performance of our method.
The $\alpha$ in Eq.~\ref{current_policy_weight} equals to current policy proportion rate divided by 1 minus current policy proportion rate.
As shown in Figure~\ref{ablation: alpha}, when the current policy proportion rate is small (0.02 and 0.1), the policy mixture distribution will not be too inclined to the current policy, so the model can learn a good transition dynamics. 
When the current policy proportion rate is too large (0.3, 0.5, and 0.7), the learned dynamics capture information about the underlying transition too locally, resulting in performance decrease. 
Therefore, we recommend that the selection of the current policy proportion rate should not be greater than 0.1.

\subsection{One-step Error in Four Environments}

As an extension of Figure~\ref{model error fig}, we provide the one-step model prediction error curve in this section.
The results are shown in Figure~\ref{app: model error fig}.

\begin{figure}[!htbp]
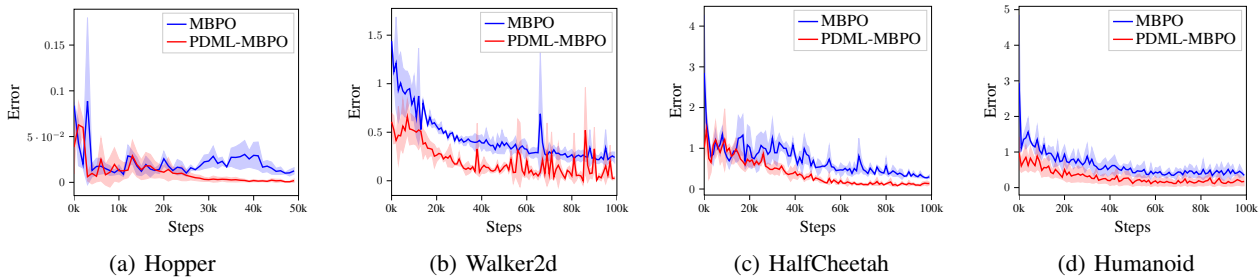

\centering
\subfigure[Hopper]{
\resizebox{0.24\textwidth}{!}{\input{exp_results/PDML_hopper_one_step}}
\label{app:hopper_onestep_error}
}
\hfil
\subfigure[Walker2d]{
\resizebox{0.225\textwidth}{!}{\input{exp_results/PDML_half_one_step}}
\label{app:half_onestep_error}
}
\hfil
\subfigure[HalfCheetah]{
\resizebox{0.22\textwidth}{!}{\input{exp_results/PDML_walker_one_step}}
\label{app:walker_onestep_error}
}
\hfil
\subfigure[Humanoid]{
\resizebox{0.22\textwidth}{!}{\input{exp_results/PDML_human_one_step.tex}}
\label{app:human_onestep_error}
}
\caption{One-step error curves in Hopper, Walker2d, HalfCheetah, and Humanoid.
}
\label{app: model error fig}
\end{figure}
\section{Implementation}

\subsection{Implementation Details}\label{app:subsec:imp-details}

We implement \ouralgo-MBPO based on the PyTorch-version MBPO \cite{liu2020re}.
We also set the ensemble size of \ouralgo-MBPO to be the same as MBPO, which is 7. 
The warm-up samples are collected through interaction with the real environment for 5000 steps using a randomly chosen policy.
After the warm-up, we train the dynamics model and update the lifetime weight every 250 interaction steps.
We set the current policy proportion to be 0.02 and $\alpha$ equals $0.02/0.98$.
One thing that needs to be noticed is the rollout horizon setting.
As introduced in MBPO \cite{janner2019trust}, the rollout horizon should start at a short horizon and increase linearly with the interaction epoch.
$[a, b, x, y]$ denotes a thresholded linear function, $i.e$. at epoch $e$, rollout horizon is $h = \min(\max(x+ \frac{e-a}{b-a}(y-x),x),y)$.
We set the rollout horizon to be the same as used in the MBPO paper, as shown in Table \ref{tab_rollout}.
Other hyper-parameter settings are shown in Table \ref{tab2}.
For MBPO\footnote{\href{https://github.com/Xingyu-Lin/mbpo_pytorch}{https://github.com/Xingyu-Lin/mbpo\_pytorch}}, AMPO\footnote{\href{https://github.com/RockySJ/ampo}{ https://github.com/RockySJ/ampo}}, 
VaGraM\footnote{\href{https://github.com/pairlab/vagram}{https://github.com/pairlab/vagram}}
SAC\footnote{\href{https://github.com/pranz24/pytorch-soft-actor-critic}{https://github.com/pranz24/pytorch-soft-actor-critic}}, 
and REDQ\footnote{\href{https://github.com/watchernyu/REDQ}{https://github.com/watchernyu/REDQ}},  we use their open source implementations.
We evaluate \ouralgo-MBPO and other baselines on four MuJoCo-v2 continuous control environments \cite{todorov2012mujoco} with a maximum horizon of 1000, including HalfCheetah, Hopper, Walker2d, and Humanoid.
For Humanoid, we use the modified version introduced by MBPO \cite{janner2019trust}.
All experiments are conducted using a single NVIDIA TITAN X Pascal GPU.

\begin{table*}[!htbp]
\setlength{\abovecaptionskip}{0.2cm}
\centering
\caption{Rollout horizon settings for \ouralgo}
\begin{tabular}{c|c|c|c|c|c|c|c}
\toprule 
Walker2d & Hopper & Humanoid & HalfCheetah & Pusher & Ant\\
\midrule
1 & [1, 15, 20, 100] & [1, 25, 20, 300] & 1 & 1 & [1, 25, 20, 100] \\
\bottomrule
\end{tabular}
\label{tab_rollout}
\end{table*}

\begin{table*}[!htbp]
\setlength{\abovecaptionskip}{0.2cm}
\centering
\caption{Hyper-parameter settings for \ouralgo}
\begin{tabular}{c|c}
\toprule 
Parameter&Value \\
\midrule
Dynamics model ensemble size & 7 \\
Dynamics model layers & 4 \\
Actor and critic layers  & 3 \\
Dynamics model hidden units & 200 \\
Actor and critic hidden units & 256 \\
Learning rate & $3 \cdot 10^{-4}$ \\
Batch size & 256 \\
Optimizer & Adam \\
Activation function & ReLU \\
Real sample buffer size & $10^6$ \\
Model sample buffer size & $10^6$ \\
Real sample ratio & 0.05 \\
Policy updates per environment step & 20 \\
Environment steps between model training & 250 \\
\bottomrule
\end{tabular}
\label{tab2}
\end{table*}

For the experiment of MaPER in Sec \ref{exp5.3}, we use their open-source code in the supplementary material on openreview \footnote{\href{https://openreview.net/forum?id=WuEiafqdy9H}{ https://openreview.net/forum?id=WuEiafqdy9H}}.


\end{document}